\title{Should Under-parameterized Student Networks \\ Copy or Average Teacher Weights?}
\author{%
  Berfin \c{S}im\c{s}ek \\
  NYU\thanks{Previous address: EPFL.} \\
  \texttt{bs3736@nyu.edu} \\
  \And
  Amire Bendjeddou \\
  EPFL \\
  \texttt{amire.bendjeddou@epfl.ch} 
  \AND
  Wulfram Gerstner \\
  EPFL \\
  \texttt{wulfram.gerstner@epfl.ch} \\
  \And
  Johanni Brea \\
  EPFL \\
  \texttt{johanni.brea@epfl.ch}
}
\newtheorem{theorem}{Theorem}[section]
\newtheorem{lemma}[theorem]{Lemma}
\newtheorem{proposition}[theorem]{Proposition}
\newtheorem{corollary}[theorem]{Corollary}
\newtheorem{assumption}[theorem]{Assumption}
\newtheorem{remark}[theorem]{Remark}
\def\E{\mathbb E}
\def\R{\mathbb R}
\def\S{\mathbb S}
\def\D{\mathcal D}
\def\N{\mathcal N}
\def\Lnk{L^{n,k}}
\def\Lprojnk{L_{\text{proj}}^{n,k}}
\def\Lproj{L_{\text{proj}}}
\def\Lerf{L_{\text{erf}}^*}
\def\Lerfsoft{L_{\text{erf; soft}}^*}
\DeclareMathOperator{\interior}{int}
\begin{document}

\maketitle

\begin{abstract}
    Any continuous function $f^*$ can be approximated arbitrarily well by a neural network with sufficiently many neurons $k$. 
    We consider the case when $f^*$ itself is a neural network with one hidden layer and $k$ neurons.  
    Approximating $f^*$ with a neural network with $n< k$ neurons can thus be seen as fitting an under-parameterized ``student''
    network with $n$ neurons to a ``teacher'' network with $k$ neurons. 
    As the student has fewer neurons than the teacher, it is unclear, whether each of the $n$ student neurons should 
    copy one of the teacher neurons or rather average a group of teacher neurons. 
    For shallow neural networks with erf activation function and for the standard Gaussian input distribution, we prove that ``copy-average'' configurations are critical points if the teacher's incoming vectors are orthonormal and its outgoing weights are unitary. 
    Moreover, the optimum among such configurations is reached when $n-1$ student neurons each copy one teacher neuron and the $n$-th student neuron averages the remaining $k-n+1$ teacher neurons. 
    For the student network with $n=1$ neuron, we provide additionally a closed-form solution of the non-trivial critical point(s) for commonly used activation functions through solving an equivalent constrained optimization problem. 
    Empirically, we find for the erf activation function that gradient flow converges either to the optimal copy-average critical point or to another point where each student neuron approximately copies a different teacher neuron. 
    Finally, we find similar results for the ReLU activation function, suggesting that the optimal solution of underparameterized networks has a universal structure. 
\end{abstract}

\section{Introduction}

A shallow neural network with a single hidden layer of a large number $k$ of neurons can approximate any continuous function $f^*$ arbitrarily well on a compact subset of the input space \cite{funahashi1989approximate}. 
We consider a related problem, where the function $f^*$ itself is a neural network with a large number $k$ of neurons, and its approximation is a smaller network with $n < k$ neurons. 
In other words, we fit an under-parameterized ``student'' network with $n$ neurons to a ``teacher'' network with $k$ neurons.
As the student has fewer neurons than the teacher, it cannot perfectly match the teacher.
In the configuration with the lowest loss, where the approximation error is smallest, one may expect that the incoming and outgoing weights of a student neuron are either identical to those of a teacher neuron or that they are aligned with the weights of a group of teacher neurons, but it is unclear what the optimal configuration is.

To answer the question of whether student neurons should ``copy'' or ``average'' teacher neurons, and more generally to shed light on the loss landscape of under-parameterized neural networks, we study the theoretically tractable setup with standard Gaussian input data and teacher networks with orthogonal incoming vectors.
First, we re-parameterize the loss in terms of interactions between pairs of neurons, similar to \cite{saad1995line,goldt2019dynamics}, and we re-formulate the original optimization problem as a constrained optimization problem.
The interactions between neurons can be written as a function expressed in terms of the standard deviation and correlation of two Gaussian random variables, with explicit formulas for the erf and ReLU activation functions \citep{saad1995line, cho2009kernel, goldt2019dynamics}.
Next, we prove several properties of the most extremely under-parameterized student network with a single neuron $n=1$, extending thus the important work of \cite{tian2017analytical, mei2018landscape, yehudai2020learning}.
For many commonly used activation functions, we prove for the network with a single hidden neuron that the optimal solution is the only non-trivial critical point of the loss function up to symmetries and is achieved when the incoming vector of the one-neuron student reaches a configuration that can be interpreted as a damped average of all incoming teacher weights.
%I think it would be nice to explain the averaging mechanism we found here. This may sound like teacher weights are directly averaged! When we say `neuron averaging' for non-ReLU activation functions what actually happens is that the incoming vector implements a damped average of the teacher's incoming vectors and the outgoing weight balances with a value that is larger than the sum of outgoing weights. For the ReLU activation function, because of the positive-scaling symmetry, we cannot differentiate the norm of the incoming vector and the outgoing weight; what we have is that their multiplication is $\alpha_k k$ with a damping factor $\alpha_k <1$.

The proof relies on identifying the critical points of the constrained optimization problem and showing that the common activation functions satisfy the assumptions.
We rely in particular on the derivative rule of the interaction function which comes as a pleasant consequence of Stein's Lemma \cite{stein1981estimation} instead of the Hermite basis expansion which is a commonly used technique \cite{dudeja2018learning, arous2021online, bietti2022learning, berthier2023learning, dandi2023learning, damian2023smoothing}.
%and FKG inequality.
For the erf and ReLU activation functions we derive additionally a closed-form solution of the optimization problem for $n=1$.
Next, we investigate ``copy-average'' configurations of students with $n>1$ neurons, where some student neurons copy teacher neurons and other student neurons average sub-groups of teacher neurons, in the sense that they are at the optimal one-neuron solution for the given sub-group of teacher neurons.
Our particular contributions are: 
\begin{itemize}
    \item We propose a constrained optimization formulation of the standard minimization problem in the weight-space in terms of the \textit{interaction function} (Section~\ref{sec:foundations}). 
    The interaction function is a natural generalization of the dual activation \cite{daniely2016toward}. 
    \item Applying the constrained optimization formulation for $n=1$, we prove that the incoming vector of the student lies in the span of the incoming vectors of an orthogonal teacher network (Proposition~\ref{prop:in-span}). 
    For a broad class of activation functions, we prove that the incoming vector aligns with the average of the teacher's incoming vectors for the "unit-orthonormal" teacher network (Theorem~\ref{thm:uniqueness}). 
    Using the derivative rule of the interaction function (Lemma~\ref{lem:der-correlation}), we show that common activation functions such as erf, softplus, tanh, and ReLU satisfy this property (Lemma~\ref{lem:activ-functs} and Corollary~\ref{corr:relu}).
    \item Assuming a unit-orthonormal teacher network and erf activation function, we prove that the concatenation of critical points of single neurons (of the student network) each approximating a teacher subnetwork is a \textit{copy-average} critical point (Theorem~\ref{thm:ca-critical}).
    \item Assuming a unit-orthonormal teacher network and erf activation function, we prove that the optimal copy-average (CA) configuration is such that $n-1$ student neurons each copy a teacher neuron and the $n$-th student neuron approximates optimally the sum of the remaining teacher neurons (Theorem~\ref{thm:optimal-CA}; see also Fig.~\ref{fig:main-fig}, top row).
    Empirically, we find that the gradient flow converges to an optimal-CA point for all seeds when $n < \gamma_1 k$ with a fixed $\gamma_1$ near $0.46$ (Figure~\ref{fig:erf-up-nets}).
    \item Surprisingly, we find empirically three regimes of training via gradient flow (GF)\protect\footnotemark for under-parameterized networks (Figure~\ref{fig:erf-up-nets}): (i) for $n < \gamma_1 k$, GF converges to an optimal-CA point for all seeds, (ii) for $n > \gamma_2 k$ with a fixed $\gamma_2$ near $0.6$, GF converges to a point that we call \textit{perturbed-$n$-copy} for all seeds, (iii) for $\gamma_1 k < n < \gamma_2 k$, GF converges to either an optimal-CA point or a perturbed-$n$-copy point. 
    Therefore, as the under-parameterized network grows larger, the solution found with gradient flow where the weights are initialized randomly with a fixed standard deviation changes qualitatively. 
    The code to reproduce these findings is available on \href{https://github.com/berfinsimsek/neural-net-regression}{GitHub}, and we refer to Appendix~\ref{app-sec:experiments} for details. 
\end{itemize}

\footnotetext{
    We use a numerical ODE solver for multi-layer networks \cite{brea2023mlpgradientflow} to simulate the gradient flow in this paper. 
    All "solutions", which are the points at which gradient flow converges, have a gradient norm of at most $5 \cdot 10^{-8}$.
}

\begin{figure}[t!]
    \centering
    \includegraphics[width=0.99\textwidth]{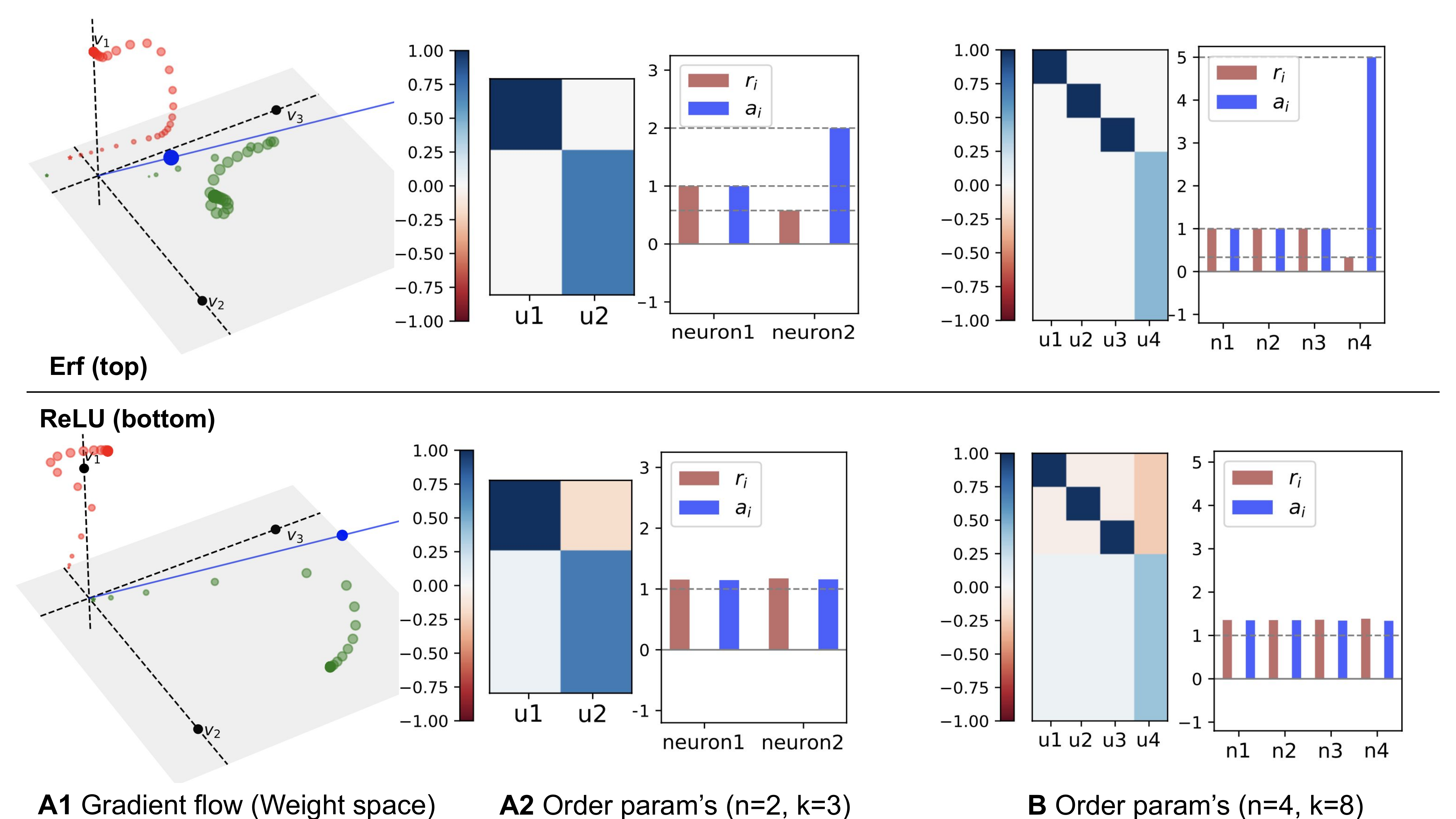} \\
    \vspace{0.05mm}
    \caption{\textit{The gradient flow converges to the copy-average optimum point for erf activation (top), or nearby for ReLU activation (bottom): the first $n\!-\!1$ neurons \textit{copy} one teacher neuron each; the $n$-th neuron takes an \textit{average} of the remaining teacher neurons.} The teacher network is unit-orthonormal, i.e. $f^*(x) \!=\! \sum_{j=1}^k \! \sigma(v_j \! \cdot \! x)$ where $v_j \! \in \! \mathbb{R}^d$'s are orthonormal, and $d\!=\!k\!+\!1$. 
    \textbf{A1} The gradient flow trajectory is shown in the weight space for $n\!=\!2, k\!=\!3$: the positions of the circles (red and green) represent incoming vector $w_i$ projected down to the span of $v_1, v_2, v_3$ and the sizes of the circles represent outgoing weights $a_i$. 
    The blue circle represents the one-neuron solution (the position shows $w^*$, the size shows $a^*$). 
    \textbf{A2} Same setting, the weight-space parameters at convergence are mapped to the order-parameter space; $u_i=(u_{i1}, ..., u_{ik})$ where $u_{ij}$ represents the normalized dot product between $w_i$ and $v_j$ and $r_i=\|w_i \|$. 
    \textbf{B} Order parameters shown at convergence for $n\!=\!4, k\!=\!8$. 
    For erf (top) the point at convergence is exactly an $(n-1)$-copy-$1$-average point, whereas for ReLU, it is perturbed away from this configuration. 
    Neurons are reordered for clarity.
    \label{fig:main-fig}}
    \vspace{-1cm}
\end{figure}

\subsection{Related Work}
\label{sec:related-works}

The teacher-student setup has been extensively used in the literature to study the evolution of gradient flow trajectories and of the generalization error \cite{biehl1995learning, saad1995line, saad1995exact, riegler1995line, goldt2019dynamics, veiga2022phase}. 
This series of work gives insight into the solution found at convergence, however, they rely on numerically integrating the equations of dynamics. 
\citet{tian2017analytical} gives convergence guarantees for ReLU activation function, %for random initialization 
however, their method only works for \textit{one} student and \textit{one} teacher neuron. 
\citet{xu2023over} recently gave the convergence rates for \textit{multiple} student neurons for the case of \textit{one} teacher neuron as a prototypical setup for overparameterization. 
These convergence guarantees were extended to broad input distributions \cite{yehudai2020learning, vardi2021learning} and finite training data \cite{mei2018landscape, wu2022learning}. 
We give the analytical formula of the optimal solution and its generalization error for \textit{one} student neuron and unit-orthonormal teacher network with \textit{multiple} neurons for erf and ReLU activation functions and a partial characterization for a broader class of activation functions without relying on the analytic formula of the loss. 

The studies cited above showed positive results for a single-neuron teacher or a unit-orthonormal teacher. 
However, even for settings where the teacher has only a few neurons, hard teachers can be constructed in the sense that the student fails to find a zero-loss solution for a certain fraction of random initializations \cite{arous2022high, martinelli2023expand, bietti2023learning}.
Moreover, for medium-scale problems, gradient flow often converges to `non-zero loss' solutions \cite{safran2018spurious, soltanolkotabi2018theoretical, oymak2020toward}. 
\citet{arjevani2021analytic} characterized some families of local minima using symmetries, for the ReLU activation function and unit-orthonormal teacher network.
In this paper, we similarly characterize, for the case that the student has a smaller size than the teacher, a large family of ‘copy-average’ critical points, but for the erf activation function. 
Our approach focuses on the important regime of under-parameterized networks which is relevant for the superposition of features \cite{elhage2022toy} and for the distillation of large networks into smaller ones \cite{hinton2015distilling, gou2021knowledge}.
    
There is a large history of approximation theory of neural networks that give universal guarantees on the approximation error, e.g. \cite{cybenko1989approximation, funahashi1989approximate, hornik1989multilayer, barron1993universal}. 
However, these works focus on rates of convergence and provide neither a formula nor an approximation for the error.
In this paper, we make a conjecture for the \textit{exact formula} of the approximation error of under-parameterized student networks which we support both theoretically and numerically.

\section{Setup}
\label{sec:setup}

\textbf{Neural network:}
Consider a two-layer (student) network function $f: \R^{d} \to \R$ with $n$ neurons 
\begin{align}\label{eq:net-funct}
    f(x) = \sum_{i=1}^{n} a_i \sigma \left(w_i \cdot x \right)
\end{align}
where $w_i \in \R^d$ is the incoming vector, $a_i \in \R$ is the outgoing weight of neuron $i$, and the activation function $\sigma$ is twice differentiable unless it is specified to be ReLU, i.e. $\sigma_{\text{relu}}(x)=\max(0, x)$, and the dot marks the scalar product. 
$P \!=\! n(d+1)$ is the number of parameters.

\textbf{Parameter vector:} The parameter vector is represented as 
\begin{align}
    \theta = (w_1, a_1) \oplus ... \oplus (w_n, a_n) \in \mathbb{R}^P
\end{align}
where $\oplus$ denotes the concatenation of two vectors into one vector. We use the notation $\oplus$, since the network function can be seen as a sum of its hidden neurons. 
Sometimes $\theta$ is written explicitly in the network 
function $f(x | \theta) \!=\! f(x)$.  

\textbf{Loss function:} We assume that the input
distribution is a standard $d$-dimensional Gaussian 
$\D = \N (0, I_d)$. 
The target function is denoted by $f^*: \R^d \to \R$. 
Using the square cost, the loss function $L: \R^P \to \R$ (also known as the risk or the generalization error)
is defined as
\begin{align}\label{eq:general-loss}
    L(\theta) = \E_{x \sim \D} \left[ (f(x | \theta) - f^*(x))^2 \right ].
\end{align}

\textbf{Orthogonal teacher network:} We assume that the target function is a neural network (also known as the teacher network or a multi-index model)
\begin{align}\label{eq:orthogonal-teacher}
    f^*(x) = \sum_{j=1}^k b_j \sigma(v_j \cdot x)
\end{align}
where its outgoing weights are non-zero and its incoming vectors $v_1, \ldots, v_k \in \mathbb{R}^d$ are orthogonal to each other, that is, $v_i \cdot v_j =0$ for $i \neq j$. This implies that the input dimension satisfies $d \! \geq \! k$. 
Following \cite{safran2018spurious, safran2021effects, arjevani2021analytic}, we particularly focus on the \textit{unit-orthonormal} teacher network where the outgoing weights are all one, that is $b_j=1$, and the incoming vectors have unit norm, i.e. $v_i \cdot v_j =\delta_{ij}$. 

\textbf{Optimal loss:} We study the optimal solution(s) of the following non-convex optimization problem 
\begin{align}\label{eq:loss-function}
    \Lnk( \oplus_{i=1}^n (a_i, w_i)) = \E_{x \sim \D} \left[ \Bigl(\sum_{i=1}^n a_i \sigma(w_i \cdot x) - \sum_{j=1}^k b_j \sigma(v_j \cdot x)\Bigr)^2 \right].
\end{align}
for under-parameterized (student) networks, i.e. $n<k$, and orthogonal teachers.
For $n \geq k$ neurons, the network can copy all teacher neurons and set the outgoing weights of the remaining neurons to zero, therefore the optimal loss is trivially zero. 
If the teacher is unit-orthonormal, then all of its neurons contribute equally; 
hence the optimal loss is determined by $n$ and $k$ only and denoted by $L^*(n, k)$. 
If the student neural network has one neuron we use the notation $L^*(k) := L^*(1, k)$.

\section{Foundations \& Constrained Optimization Formulation}
\label{sec:foundations}

In this section, we introduce a constrained optimization problem that is a reformulation of the minimization problem in Eq.~\ref{eq:loss-function}. 
This formulation allows us to show that the incoming vector of any non-trivial critical point of the one-neuron network is in the span of the teacher's $k$ orthogonal (or potentially even non-orthogonal, see Appendix Remark~\ref{rem:non-orthogonal}) incoming vectors (see Proposition~\ref{prop:in-span}). 
We give the exact solution in the case of a unit-orthonormal teacher 
(see Corollary~\ref{corr:erf} and Corollary~\ref{corr:relu}).

Using the linearity of expectation, the loss function in Eq.~\ref{eq:loss-function} can be expanded as a weighted sum of the following Gaussian integral terms 
\begin{align}
    \E_{x \sim \D} [\sigma(V_1 \cdot x) \sigma (V_2 \cdot x)] \notag
\end{align}
where $V_1$ and $V_2$ represent two arbitrary vectors of student and teacher networks such as $(w_i, w_j)$ or $(w_i, v_j)$. 
As both $V_1 \cdot x$ and $V_2 \cdot x$ are centered Gaussian random variables, the above expectation can be expressed in terms of the covariance of the two-dimensional Gaussian 
\begin{align}
    \E_{x \sim \D}
    \begin{bmatrix}
        (V_1 \cdot x)^2 & (V_1 \cdot x) (V_2 \cdot x) \\
        (V_1 \cdot x) (V_2 \cdot x) & (V_2 \cdot x)^2
    \end{bmatrix} = 
    \begin{bmatrix}
        r_1^2 & r_1 r_2 u \\
        r_1 r_2 u & r_2^2 
    \end{bmatrix} \notag
\end{align} 
where $r_i \! := \! \| V_i \|$ for $i\!=\!1,2$ is the $\ell_2$-norm and, 
assuming $r_i\!>\!0$, $u \! := \! V_1 \cdot V_2 / (r_1 r_2)$ is the correlation. 
The covariance entries $Q_{ii} \!= \! r_i^2, Q_{12} \!= \! r_1 r_2 u$ have been used to study the gradient flow trajectories \citep{saad1995line, goldt2019dynamics, arous2022high}.  
We prefer the parametrization with $r_i$ and $u$ as it enables us to make the positive definiteness constraint explicit, i.e. 
\begin{align}\label{eq:corr-bound-one}
    |u|= \frac{| V_1 \cdot V_2 |}{r_1 r_2} \leq 1, 
\end{align}
due to the Cauchy-Schwarz inequality. 
We introduce the \textit{interaction function} $g_\sigma:\mathbb{R}_{\geq0}^2 \times [-1,1] \to \R$
\begin{align}\label{eq:interaction}
    g_\sigma(r_1, r_2, u) = \E_{(x_1, x_2) \sim \N(0, \Sigma)} [\sigma(r_1 x_1) \sigma (r_2 x_2)] \quad
    \text{with} \ \ \Sigma = \begin{bmatrix}
        1 & u \\
        u & 1
    \end{bmatrix}, \ \ r_1, r_2 > 0,
\end{align}
to express the Gaussian integral terms. 
Note that $u$ is not well-defined if one of the norms is zero. 
Extending the formula above, for the case w.l.o.g. $r_2=0$, we define 
\begin{align}
    g_\sigma(r_1, 0, u) := \E_{x \sim \N(0,1)}[\sigma(r_1 x)] \sigma(0) \notag
\end{align}
for all $u\in [-1, 1]$. 
In this paper, we consider the activation functions satisfying the following. 
\begin{assumption}\label{ass:interaction} 
For all $r_1, r_2> 0$ and $u \in (-1, 1)$, we assume that the interaction function $g_\sigma$ satisfies 
either the first or both of the following properties
\begin{align}
    \textrm{(i)} \ \frac{d}{du} g_\sigma(r_1, r_2, u) > 0, \quad \quad \textrm{(ii)} \ \frac{d^2}{du^2} g_\sigma(r_1, 1, u) u < \frac{d}{du} g_\sigma(r_1, 1, u).
\end{align}
\end{assumption}

To check whether a specific activation function satisfies the above properties, we mainly rely on Lemma~\ref{lem:der-correlation} which gives us the rule for the partial derivative of $g_\sigma$ with respect to the correlation
\begin{align}\label{eq:apply-stein-lemma}
     \frac{d}{du} g_\sigma(r_1, r_2, u) = r_1 r_2 \E [\sigma'(r_1 x)\sigma'(r_2 y)].
\end{align}
Hence, if $\sigma$ is monotonic (increasing or decreasing)\footnote{Increasing (or decreasing) mean strictly increasing (or decreasing) everywhere in this paper.}, the integrand on the right-hand side is positive; satisfying Assumption~\ref{ass:interaction} (i). 
The ReLU activation function $\sigma_{\text{relu}}(x) \!=\! \max(0,x)$ also satisfies it because of the known analytical expression of the interaction \citep{cho2009kernel, safran2018spurious}
\begin{align*}
    g_{\text{relu}}(r_1, r_2, u)=r_1 r_2 h(u) \quad \text{where} \ \ h(u) = \frac{1}{2 \pi} \left(\sqrt{1-u^2} + (\pi-\arccos(u))u \right).
\end{align*}
Checking Assumption~\ref{ass:interaction} (ii) for a given activation function is delicate. We rely on it in Section~\ref{sec:approx-error}. 

\begin{figure*}[t]
    \centering
    \includegraphics[width=0.99\textwidth]{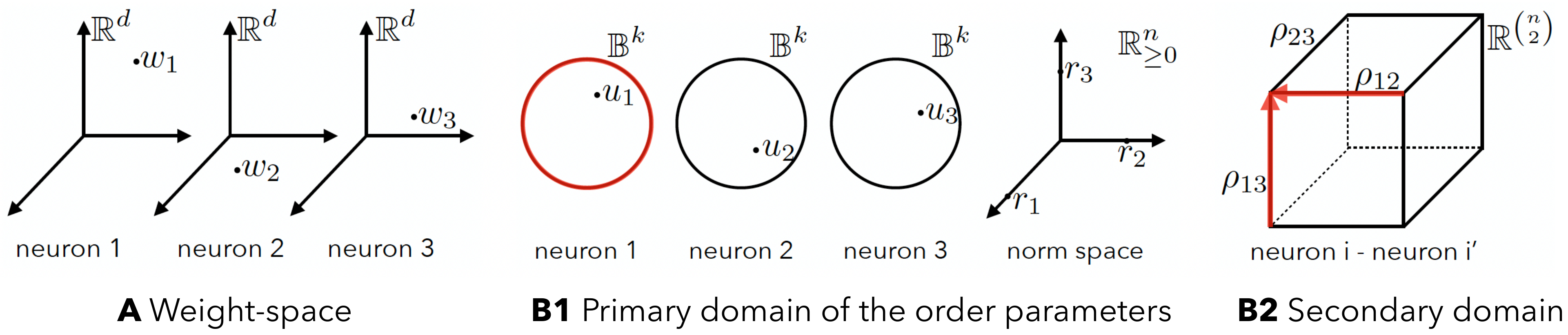}
    \vspace{-1.5mm}
    \caption{\textit{Cartoon representation of the mapping of a student with three neurons from the weight space \textbf{A} $\R^{nd}$ to order parameter space \textbf{B1-B2}.} 
    The mapping between the outgoing weights is an identity mapping hence not shown.
    \textbf{A} Each axis shows the direction of weights $v_i$ of one teacher neuron ($k \geq 3$).
    \textbf{B1} Each incoming vector $w_i \in \R^d$ is first transformed into $(r_i, w_i/r_i)$ and then $w_i/r_i$ is projected onto the span of the teacher's incoming vectors, yielding the student-teacher correlation vector $u_i=(u_{i1}, ..., u_{ik})$. 
    \textbf{B2} The student-student correlations $\rho_{i i'}$ are in general free parameters bounded in between $u_i \cdot u_{i'} \pm \sqrt{1-\| u_i \|^2} \sqrt{1-\| u_{i'} \|^2}$ hence the box constraint. An activated constraint, w.l.o.g. $ u_1 \in \S^{k-1}$, gives a vanishing $\pm$ term for the interval of correlation $\rho_{1 i}$ for all $i \neq 1$, hence they are no longer free (shown in red). 
    In the case $d=k$, all $u_i$ are on the hypersphere due to the problem geometry, 
    hence the correlations $\rho_{i i'}$ are fixed and not free (see Appendix~\ref{sec:equality-constraints}). 
    \label{fig:constraints}}
\end{figure*}

Using the interaction function, the loss function can be expressed in terms of the \textit{order parameters:}
\begin{itemize}
    \item norms of the incoming vectors of the student $r_i = \| w_i \|$,
    \item correlations between the incoming vectors of the student and teacher $u_{ij}=w_i \cdot v_j / (r_i \| v_j \|)$,
    \item correlations between the incoming vectors of the student $\rho_{i i'} = w_i \cdot w_{i'} / (r_i r_{i'})$;
\end{itemize}
where we assumed $r_i \!> \! 0$ for all $i \in [n]$. 
The constrained optimization formulation is possible for general non-orthogonal teacher networks (see Remark~\ref{rem:non-orthogonal} in the Appendix). 
For the sake of simplicity, we formulate here the constrained optimization problem for the case of orthogonal teachers and reformulate the objective in Eq.~\ref{eq:loss-function} as
\begin{align}\label{eq:optim-ineq-const}
    \text{minimize} \ \ & \sum_{i=1}^n a_i^2 g_\sigma(r_i, r_{i}, 1) + 
    2 \sum_{i \neq i'} a_i a_{i'} g_\sigma(r_i, r_{i'}, \rho_{i i'}) 
    - 2 \sum_{i=1}^n \sum_{j=1}^k a_i b_j g_\sigma(r_i, \| v_j \|, u_{ij}) + C \notag \\
    \text{subject to} \ \ & \| u_{i} \| \leq 1, \ \ r_i \geq 0, \quad \text{for all} \ \ i \in [n], \notag \\
    & \bigg|\rho_{i i'} - u_{i} \cdot u_{i'} \bigg| \leq \sqrt{1-\| u_i \|^2} \sqrt{1-\| u_{i'} \|^2}, \quad \text{for all} \ \ i \neq i' \in [n];
\end{align}
where $u_i=(u_{i1}, ..., u_{ik})$ and $C=\E_{x \sim \D} [f^*(x)^2]$. The constraints in Eq.~\ref{eq:optim-ineq-const} give tighter bounds than simply bounding correlations with the help of Eq.~\ref{eq:corr-bound-one}.
See Appendix~\ref{app-sec:constrained-optim} for the derivation of the constraints and Fig.~\ref{fig:constraints} for a schematic.

The objective above is \textit{exact} for $n=1,2$, in the sense that its optimal solution is equivalent to the optimal solution in the weight space, since the mapping from the weight-space to the order space is invertible. 
However, it is a \textit{relaxation} for $n\geq 3$, since there are order-parameter configurations in the domain (see Figure~\ref{fig:constraints}) that do not correspond to any weight-space configuration (see Appendix~\ref{app-sec:three-neuron-case} for a construction). 
It seems possible to overcome this gap by considering the geometry of the angles between $n \geq 3$ incoming vectors to tighten the constraints between student-student correlations. 

\section{Copy-Average Critical Points}
\label{sec:CA-critical}

In this section, we identify a new family of critical points by `combining' critical points of one-neuron networks for the unit-orthonormal teacher and the erf activation function. 
We first show that in a network with $n=1$ student neuron, for any ``non-trivial'' critical point, that is $w^* \neq 0$ and $a^* \neq 0$, the incoming vector $w^*$ is in the span of the teacher's incoming vectors 
(Proposition~\ref{prop:in-span}). 
Applying this proposition to the special case of the erf activation function, we show that the concatenation of such critical points 
is also a critical point for multi-neuron networks (Theorem~\ref{thm:ca-critical}). 

\begin{proposition}\label{prop:in-span} Assume that $f^*$ is an orthogonal teacher network (Eq.~\ref{eq:orthogonal-teacher}) of width $k$. 
If the activation function satisfies Assumption~\ref{ass:interaction} (i), 
any non-trivial critical point $\theta^* = (w^*, a^*)$, i.e. $\nabla L^{1, k}(\theta^*) = 0$, 
$\| w^* \| \neq 0$, $a^*\neq 0$, satisfies that $w^*$ is in the span of the teacher's incoming vectors. 
\end{proposition}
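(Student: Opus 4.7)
The plan is to work directly in the weight space with the first-order condition $\nabla_w L^{1,k}(w^*, a^*) = 0$, using the interaction-function representation. For $n=1$, the loss reads
\begin{align*}
    L^{1,k}(w,a) \;=\; a^2\, g_\sigma(r,r,1)\;-\;2a \sum_{j=1}^k b_j\, g_\sigma\!\left(r, \|v_j\|, u_j\right) \,+\, C,
\end{align*}
with $r := \|w\|$ and $u_j := (w \cdot v_j)/(r\,\|v_j\|)$. I would test the criticality condition only against tangent directions $\eta \in \mathbb{R}^d$ orthogonal to $w^*$. A short chain-rule computation gives $\eta \cdot \nabla_w r = 0$ and $\eta \cdot \nabla_w u_j = (\eta \cdot v_j)/(r^*\|v_j\|)$, so all $r$-derivative contributions drop out and
\begin{align*}
    \eta \cdot \nabla_w L^{1,k}(w^*,a^*) \;=\; -\frac{2a^*}{r^*}\, \eta \cdot Y, \qquad Y \;:=\; \sum_{j=1}^k \frac{b_j}{\|v_j\|} \,\frac{\partial g_\sigma}{\partial u}\!\left(r^*, \|v_j\|, u_j^*\right) v_j.
\end{align*}

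Since $a^* \neq 0$ and the above must vanish for every $\eta \perp w^*$, the vector $Y$, which already lies in $\mathrm{span}(v_1, \ldots, v_k)$, must be collinear with $w^*$, i.e.\ $Y = \lambda w^*$ for some $\lambda \in \mathbb{R}$. If $\lambda \neq 0$, then $w^* = \lambda^{-1} Y \in \mathrm{span}(v_1, \ldots, v_k)$ and the claim follows. If instead $\lambda = 0$, then $Y = 0$; the pairwise orthogonality of the $v_j$'s (hence linear independence) combined with $b_j \neq 0$ forces $(\partial g_\sigma/\partial u)(r^*, \|v_j\|, u_j^*) = 0$ for each $j$. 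Assumption~\ref{ass:interaction}(i) makes this derivative strictly positive for $u \in (-1,1)$ with positive norms, so each $|u_j^*| = 1$. The identity $\sum_j (u_j^*)^2 = \|P w^*\|^2/(r^*)^2 \leq 1$, where $P$ is orthogonal projection onto $\mathrm{span}(v_1, \ldots, v_k)$, then forces $k = 1$; in that sub-case $u_1^* = \pm 1$ gives $w^* \parallel v_1$, still in the span.

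The hard part will be the $\lambda = 0$ branch: Assumption~\ref{ass:interaction}(i) gives strict positivity only on the open interval, so the boundary values $|u_j^*|=1$ need to be excluded via the Cauchy--Schwarz bound, and this is precisely where orthogonality of the teacher weights (though not their unit-norm property) is invoked. The rest is a direct gradient computation, arranged to project out the $r$-direction and isolate the $u$-dependence, so the non-routine algebra is minimal.
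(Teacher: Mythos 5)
Your proof is correct, and it rests on the same key mechanism as the paper's --- Assumption~\ref{ass:interaction}~(i) forces the partial derivatives $\partial_u g_\sigma(r^*,\|v_j\|,u_j^*)$ to be nonzero, which is incompatible with first-order criticality unless $w^*$ lies in the teacher span --- but the execution is genuinely different. The paper argues by contradiction in the order-parameter space: if $w^*$ sticks out of the span, then $u^*$ lies in the interior of the unit ball, so $(r,u,a)$ is an unconstrained critical point of $L^{1,k}_{\text{proj}}$ and all its $u_j$-partials must vanish, contradicting Assumption~(i). That argument implicitly relies on the weight-to-order-parameter map being a submersion at interior points (the paper phrases this as ``the boundaries are not seen near the neighborhood of this point''), a step it treats informally. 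Your version stays entirely in weight space: you test $\nabla_w L=0$ against directions $\eta\perp w^*$, which kills the $r$-dependent terms and isolates the vector $Y\in\mathrm{span}(v_1,\dots,v_k)$ that must be collinear with $w^*$. This buys a self-contained, fully explicit computation at the cost of an extra case analysis ($\lambda\neq 0$ versus $\lambda=0$, and the degenerate $|u_j^*|=1$ endpoints). One presentational wrinkle to fix: in the $\lambda=0$ branch you conclude $|u_j^*|=1$, but the chain-rule computation that produced $Y$ already presupposed differentiability of $g_\sigma$ in $u$ at $u_j^*$, which Assumption~(i) only guarantees on the open interval. The clean repair is to dispose of the boundary case up front: $|u_j^*|=1$ holds iff $w^*\parallel v_j$ (Cauchy--Schwarz), in which case $w^*$ is trivially in the span; otherwise all $u_j^*\in(-1,1)$, the computation is valid, and the $\lambda=0$ branch is an outright contradiction with Assumption~(i) rather than a route to $k=1$. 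With that reordering your argument is complete and arguably tightens the step the paper leaves implicit.
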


The proof uses the constrained optimization formulation for $n=1$ (see Appendix~\ref{app-sec:general-one-neuron}). 
In short, a critical point mapped to the order parameter space satisfies either $\| u_1 \|=1$ or $\partial_{u}g_\sigma(r, \| v_j \|, u_{1j})=0$ for all $j \in [k]$. 
Under the Assumption~\ref{ass:interaction} (i) these partial derivatives are non-zero, hence $\|u_1 \|=1$.
In a recent work \cite{mousavi2022neural}, the incoming vectors of the student network are also shown to converge to the span of the vectors of the multi-index model using weight-decay. 

Finding the optimal solution for the multi-neuron network is challenging.
Natural candidates are concatenation of student neurons where each one of them is a critical point of the loss function $L^{1, \ell_i}$ where $\ell_i$ is the number of the subgroup of teacher neurons. 
More precisely, let us pick a partition $\ell_1 + ... +\ell_n \leq k$ such that $\ell_i \geq 1$, and define $s_m = \sum_{i=1}^m \ell_i $ for $m\leq n$ and $s_0=0$.
We denote a one-neuron critical point by $\theta_i^*=(w^*_i, a^*_i)$, when learning from a part of the teacher network
\begin{align}\label{eq:sub-teacher}
    f^{*}_{i}(x) = \sum_{j = s_{i-1}+1}^{s_{i}} \sigma (v_{j} \cdot x).
\end{align}
Since $f^*_i$ is a unit-orthonormal teacher, $w^*_i$ is in the span of $v_{s_{i-1}+1}, ..., v_{s_{i}}$ due to Proposition~\ref{prop:in-span}.

We use the term \textit{copy-average} (CA) point or configuration to refer to the concatenation of such one-neuron critical points in the student network with $n$ neurons: if $\ell_i=1$, the student neuron copies one of the teacher neurons $(v_j, 1)$; if $\ell_i>1$, it averages a group of teacher neurons in the sense of approximating their sum with one neuron. 
For odd activation functions, the one-neuron network problems decouple from each other, as the cross-terms $\E[\sigma(w_1 \cdot x) \sigma(w_2 \cdot x)]$ vanish for $w_1 \perp w_2$. 
For the specific case of erf, we prove that all the copy-average configurations are critical points.

\begin{theorem}\label{thm:ca-critical} Assume that $\sigma(x)=\sigma_{\text{erf}}(x)=\frac{2}{\sqrt{\pi}} \bigintssss_0^{\frac{x}{\sqrt{2}}} e^{-t^2} dt$. 
We pick a copy-average parameter 
\begin{align}
    \theta^* = (w^*_1, a^*_1) \oplus ... \oplus (w^*_n, a^*_n)
\end{align}
where $(w^*_i, a^*_i)$ is a non-trivial critical point when learning from a unit-orthonormal teacher $f^{*}_{i}$ with the incoming vectors $v_{s_{i-1}+1}, ..., v_{s_{i}}$ shown in Eq.~\ref{eq:sub-teacher}. 
Then $\theta^*$ is a critical point of the loss function $L^{n, k}$ where the target function is $f^*(x)=\sum_{j=1}^k \sigma(v_j \cdot x)$. 
\end{theorem}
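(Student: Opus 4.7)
I would directly check that $\nabla_\theta L^{n,k}(\theta^*)=0$, exploiting two properties of erf: its odd symmetry (whence $g_\mathrm{erf}(r_1,r_2,0)\equiv 0$ in $r_1,r_2$, so also $\partial_{r_1}g_\mathrm{erf}(r_1,r_2,0)=0$), and the specific values of the one-neuron critical point for erf. By Proposition~\ref{prop:in-span} applied to each sub-teacher, $w_i^*\in U_i:=\mathrm{span}\{v_j:j\in J_i\}$ with $J_i=\{s_{i-1}+1,\ldots,s_i\}$; since $\{v_j\}$ is orthonormal the subspaces $U_i$ are pairwise orthogonal. I take the partition to cover all $k$ teacher neurons (the statement ``$\sum\ell_i\le k$'' appears to implicitly require equality, since otherwise the unused teacher vectors would produce a non-vanishing $\partial_u g_\mathrm{erf}(r_i^*,1,0)\sum_{j>s_n}v_j$ contribution).

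For $\partial_{a_i}L^{n,k}(\theta^*)=2\,\mathbb{E}[(f-f^*)\sigma(w_i^*\cdot x)]$, the part involving the $i$-th sub-block is exactly $\partial_{a_i}L^{1,\ell_i}(\theta_i^*)=0$ by hypothesis; every other term has the shape $\mathbb{E}[\sigma(U_1\cdot x)\sigma(U_2\cdot x)]$ with $U_1\perp U_2$, hence equals $g_\mathrm{erf}(\cdot,\cdot,0)=0$.

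For $\nabla_{w_i}L^{n,k}(\theta^*)=2a_i^*\,\mathbb{E}[(f-f^*)\sigma'(w_i^*\cdot x)\,x]$, decompose $f-f^*=r_i(x)+\sum_{i'\ne i}r_{i'}(x)$ with $r_{i'}(x)=a_{i'}^*\sigma(w_{i'}^*\cdot x)-f^*_{i'}(x)$. The $r_i$-piece is the one-neuron $\nabla_w$ gradient, which vanishes. For the cross piece with $i'\ne i$, since $r_{i'}$ depends only on $P_{U_{i'}}x$ and $\sigma'(w_i^*\cdot x)$ only on $P_{U_i}x$, these are independent Gaussian blocks; splitting $x=P_{U_i}x+P_{U_{i'}}x+P_{(U_i\oplus U_{i'})^\perp}x$ and using $\mathbb{E}[r_{i'}]=0$ (erf odd) collapses the cross piece to $\mathbb{E}[\sigma'(w_i^*\cdot x)]\cdot\mathbb{E}[r_{i'}(x)\,P_{U_{i'}}x]$.

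It then remains to show $\mathbb{E}[r_{i'}(x)\,x]=0$. By Stein's lemma this becomes $a_{i'}^*\,w_{i'}^*\,\mathbb{E}[\sigma'(w_{i'}^*\cdot x)]=\sum_{j\in J_{i'}}v_j\,\mathbb{E}[\sigma'(v_j\cdot x)]$. Theorem~\ref{thm:uniqueness} gives $w_{i'}^*=\alpha_{i'}\sum_{j\in J_{i'}}v_j$, so both sides are along $\sum_{j\in J_{i'}}v_j$ and the statement collapses to a scalar equation $a_{i'}^*\alpha_{i'}\,\mathbb{E}[\sigma'(r_{i'}^*Z)]=\mathbb{E}[\sigma'(Z)]$ for $Z\sim\mathcal{N}(0,1)$. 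Using $\mathbb{E}[\sigma_\mathrm{erf}'(rZ)]=\sqrt{2/\pi}/\sqrt{1+r^2}$ together with the closed-form one-neuron values for erf ($r_{i'}^{*2}=1/(2\ell_{i'}-1)$, $a_{i'}^*=\ell_{i'}$, $\alpha_{i'}=1/\sqrt{\ell_{i'}(2\ell_{i'}-1)}$), both sides equal $1/\sqrt{\pi}$. This last scalar identity is the main obstacle: every earlier step uses only the oddness of erf, but here the specific erf critical values enter essentially. A conceptually cleaner derivation would extract it directly by manipulating the two one-neuron stationarity conditions $\partial_aL=0$ and $\partial_rL=0$ via the Stein form $\partial_u g(r_1,r_2,u)=r_1r_2\,\mathbb{E}[\sigma'(r_1X_1)\sigma'(r_2X_2)]$, thereby bypassing the closed-form.
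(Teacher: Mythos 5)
Your proposal is correct and follows essentially the same route as the paper's proof: reduce each partial derivative to the corresponding one-neuron stationarity condition, kill all cross-block terms via the orthogonality of the sub-spans, Gaussian independence, and the oddness of erf, and settle the only surviving term through Stein's lemma and the scalar identity $a^*\,r^*u^*\,\E[\sigma_{\text{erf}}'(r^*Z)]=\E[\sigma_{\text{erf}}'(Z)]$ verified with the closed-form erf critical values, exactly as in Appendix~\ref{app-sec:CA-points}. Your parenthetical observation that the partition must in fact satisfy $\sum_i \ell_i = k$ is also well taken: the paper's decomposition of $f-f^*$ into per-block residuals implicitly assumes the sub-teachers cover all $k$ neurons, and an uncovered teacher neuron $v_j$ would indeed contribute a nonzero gradient component $\propto \E[\sigma'(w_i^*\cdot x)]\,\E[\sigma_{\text{erf}}'(v_j\cdot x)]$ along $v_j$.
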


In particular, all neurons are equivalent to each other in a unit-orthonormal teacher network. Therefore, the copy-average configurations where $n-1$ student neurons each copy a distinct teacher neuron and the $n$-th student neuron takes an average are also equivalent and called $(n\!-\!1)$-copy-$1$-average, or $(n\!-\!1)$-C-$1$-A in short. Another interesting configuration is where $n$ student neurons each copy a distinct teacher neuron, which is called $n$-copy, or $n$-C in short. 

For general activation functions the copy-average parameter vectors are not critical points (see Eq.~\ref{eq:general-partial-der}). 
Nevertheless, we numerically find that the gradient flow converges to similar configurations for the ReLU activation function (see Figure~\ref{fig:main-fig}, see Appendix~\ref{app-sec:relu-exps} for more experiments). 

\section{Approximation Error of Underparameterized Networks}
\label{sec:approx-error}

The target function is assumed to be a unit-orthonormal teacher network in this section. 
In Subsection~\ref{sec:one-neuron}, we show for the one-neuron network that there is a unique non-trivial critical point up to symmetries, which is necessarily the global minimum (Theorem~\ref{thm:uniqueness}). 
Furthermore, we give the analytic expression of the optimal solution and its loss for erf (Corollary~\ref{corr:erf}) and ReLU (Corollary~\ref{corr:relu}) activation functions. 
In Subsection~\ref{sec:multi-neuron}, we provide for the under-parameterized student with $n>1$ neurons the exact loss of copy-average critical points for the erf activation function and show that the $(n \!-\! 1)$-copy-$1$-average configurations reach the lowest loss among CA-critical points (see also Appendix~\ref{sec:number-of-CA} for the combinatorial number of the equivalent copy-average configurations related to the landscape complexity calculations \cite{simsek2021geometry}).

\subsection{One-Neuron Network}
\label{sec:one-neuron}

Using the constrained optimization formulation in~\ref{eq:optim-ineq-const}, we first prove that at any non-trivial critical point of the one-neuron network, the incoming vector aligns equally with all teacher's incoming vectors for unit-orthonormal teachers for activation functions satisfying Assumption~\ref{ass:interaction} (see Theorem~\ref{thm:uniqueness}).
This is related to the symmetric solution visited during the learning plateaus
studied in \citet{saad1995line} for erf activation and in \citet{tian2017analytical} for ReLU activation (see Appendix~\ref{app-sec:further-lit} for a detailed comparison). 
Our proof works for a broad class of activation functions and does not use the analytic expression of the interaction function. 

\begin{theorem}\label{thm:uniqueness} Assume that the activation function satisfies Assumptions~\ref{ass:interaction} (i) and (ii).
At any non-trivial critical point $(w^*, a^*)$ of the loss $L^{1, k}$ for the unit-orthonormal teacher network, the incoming vector satisfies 
\begin{align}
    \frac{w^*}{\| w^* \|} = u^* \sum_{j=1}^k v_j
\end{align}
where $u^*$ is either $\frac{1}{\sqrt{k}}$ or $-\frac{1}{\sqrt{k}}$. 
\end{theorem}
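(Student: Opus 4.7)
My plan is to reduce the problem to the constrained optimization formulation of Eq.~\ref{eq:optim-ineq-const} specialized to $n=1$, and then to use Assumption~\ref{ass:interaction}(ii) to show that the stationarity conditions in the student-teacher correlations $u_j := (w^*\cdot v_j)/\|w^*\|$ force all $u_j$'s to coincide.

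First I would invoke Proposition~\ref{prop:in-span}: under Assumption~\ref{ass:interaction}(i), the incoming vector $w^*$ lies in the span of $v_1, \ldots, v_k$. Since the $v_j$'s are orthonormal, this is equivalent to the box constraint $\sum_j u_j^2 \le 1$ being active, i.e.\ $\sum_j u_j^2 = 1$, and moreover $w^*/\|w^*\| = \sum_{j} u_j v_j$. So it suffices to prove $u_1 = \cdots = u_k$; the constraint then yields $u_j = \pm 1/\sqrt{k}$ and the theorem follows by reading the formula back in the weight space.

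Writing $r = \|w^*\|$, $a = a^*$, and dropping the $\theta$-independent constant, the reduced objective is
\begin{align*}
    L(a, r, u_1, \ldots, u_k) = a^2\, g_\sigma(r,r,1) - 2a \sum_{j=1}^k g_\sigma(r, 1, u_j).
\end{align*}
Introducing a Lagrange multiplier $\lambda$ for the active constraint $\sum_j u_j^2 = 1$ and abbreviating $h(u) := \partial_u g_\sigma(r, 1, u)$, stationarity in $u_j$ gives
\begin{align*}
    \lambda\, u_j \;=\; a\, h(u_j), \qquad j = 1, \ldots, k.
\end{align*}
By Assumption~\ref{ass:interaction}(i), $h > 0$ on $(-1, 1)$, and the non-triviality $a \neq 0$ rules out $\lambda = 0$: otherwise each $u_j$ would have to be a root of $h$, forcing $|u_j| = 1$, which contradicts $\sum_j u_j^2 = 1$ as soon as $k \geq 2$ (the case $k=1$ is immediate). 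Hence $\lambda \neq 0$ and every $u_j$ solves $\varphi(u_j) = a/\lambda$, where $\varphi(u) := u/h(u)$.

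The key step is to upgrade Assumption~\ref{ass:interaction}(ii) to strict monotonicity of $\varphi$ on $(-1,1)$. The quotient rule gives
\begin{align*}
    \varphi'(u) \;=\; \frac{h(u) - u\, h'(u)}{h(u)^2},
\end{align*}
and the numerator is positive exactly by (ii). Thus each $u_j$ is the unique root of $\varphi(u) = a/\lambda$, so $u_1 = \cdots = u_k =: u^*$; the constraint then pins $u^* = \pm 1/\sqrt{k}$, and translating back to the weight space yields $w^*/\|w^*\| = u^* \sum_{j=1}^k v_j$. The main obstacle, both conceptually and technically, is this monotonicity step: Assumption~\ref{ass:interaction}(ii) does not look at first glance like a statement about uniqueness of fixed points, and one needs to recognise it as precisely the positivity of $h - u h'$. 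A small technical check is to verify a priori that all $u_j$'s lie in the open interval $(-1, 1)$ (so that (ii) applies), which is exactly what the $\lambda \neq 0$ argument above gives.
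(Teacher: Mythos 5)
Your proposal is correct and follows essentially the same route as the paper: Proposition~\ref{prop:in-span} forces $\sum_j u_j^2=1$, the Lagrangian condition (which the paper justifies carefully in Lemma~\ref{lem:lagrangian-cond}, including the exclusion of $u_j=0$ and $|u_j|=1$ via $\partial_u g_\sigma(r,1,0)>0$) gives $\lambda u_j = a\,\partial_u g_\sigma(r,1,u_j)$, and Assumption~\ref{ass:interaction}(ii) is recognized as exactly the strict monotonicity of the ratio, forcing all correlations to coincide. The only cosmetic difference is that you work with $u/\partial_u g_\sigma(r,1,u)$ being increasing, while the paper shows $\partial_u g_\sigma(r,1,u)/u$ is one-to-one on $(-1,1)\setminus\{0\}$; these are equivalent, and your variant avoids the singularity at $u=0$.
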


\textit{Proof Sketch.} There is no student-student interaction term since we have a single neuron; therefore we write $u_{j}$ instead of $u_{1j}$ and the constrained optimization problem in~\ref{eq:optim-ineq-const} simplifies to
\begin{align}\label{eq:one-neuron-loss}
    \text{minimize} \ a^2 g_\sigma(r, r, 1) - 2 a \sum_{j=1}^k g_\sigma(r, 1, u_j) + \text{const}, \quad
    \text{subject to} \ \| u \| \leq 1, r \geq 0. 
\end{align}
From Proposition~\ref{prop:in-span}, we have that $ \| u \|=1$ for any non-trivial critical point. 
Therefore, the constraint of~\ref{eq:one-neuron-loss} on the correlations $u=(u_1, ..., u_k)$ is satisfied. 
The mapping of any non-trivial critical point to the order-parameter space is a critical point of the Lagrangian loss (see Appendix Lemma~\ref{lem:lagrangian-cond}). 
Hence every $u_j$ satisfies
\begin{align}
    - 2 a \frac{d}{d u_j}  g_\sigma(r, 1, u_j) + 2 \lambda u_j = 0
\end{align}
for fixed $(r, a)$. Assumption~\ref{ass:interaction}-(ii) implies that $\frac{1}{u} \partial_u g_\sigma(r, 1, u)$ is one-to-one hence all $u_j$ are equal. \textit{End of Proof Sketch.} 

\begin{figure*}[t]
    \centering
    \subfloat[erf]{
        \includegraphics[width=0.34\textwidth]{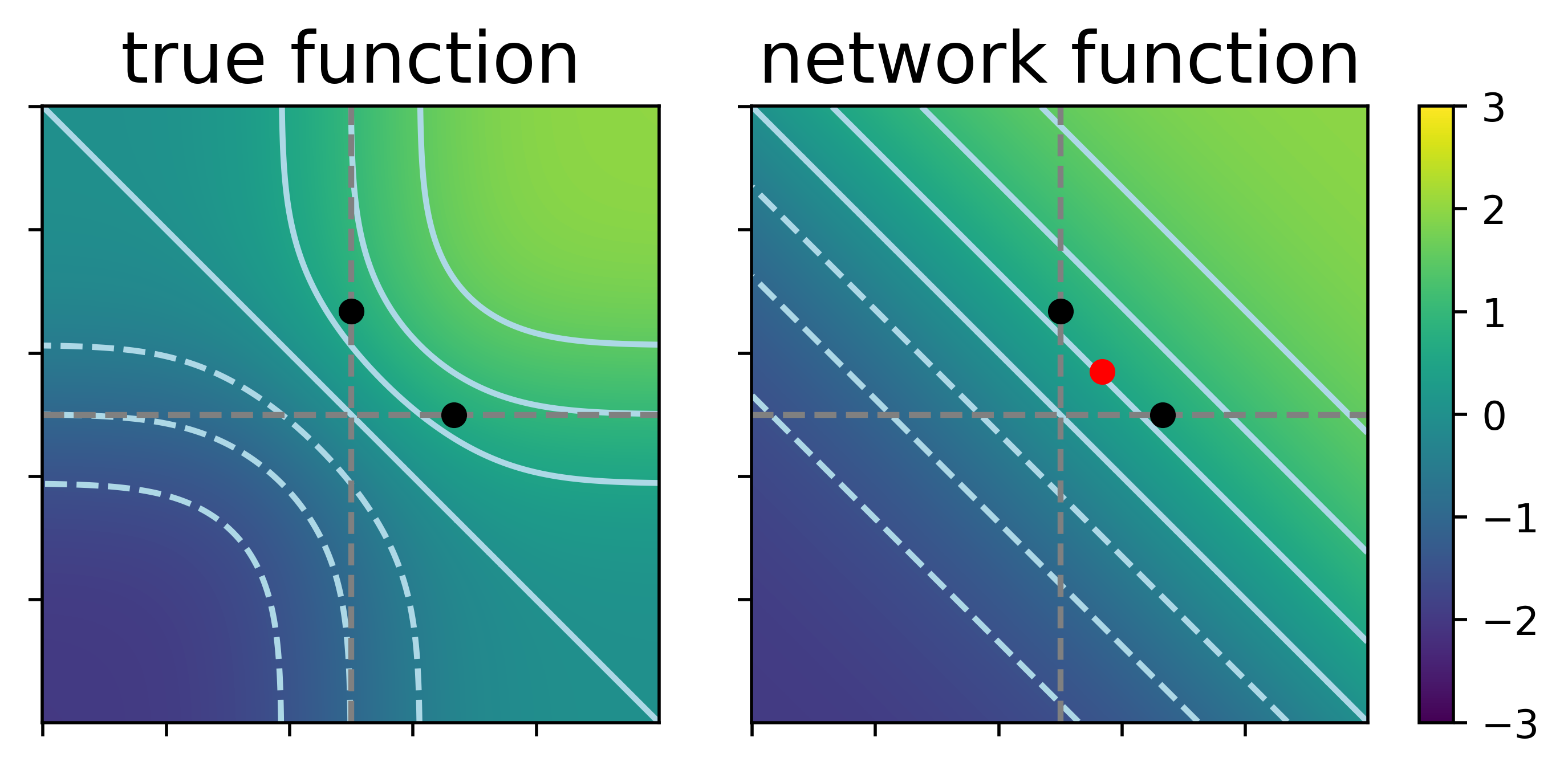}
        \label{fig:functs-erf}
        } 
    \subfloat[softplus]{
        \includegraphics[width=0.34\textwidth]{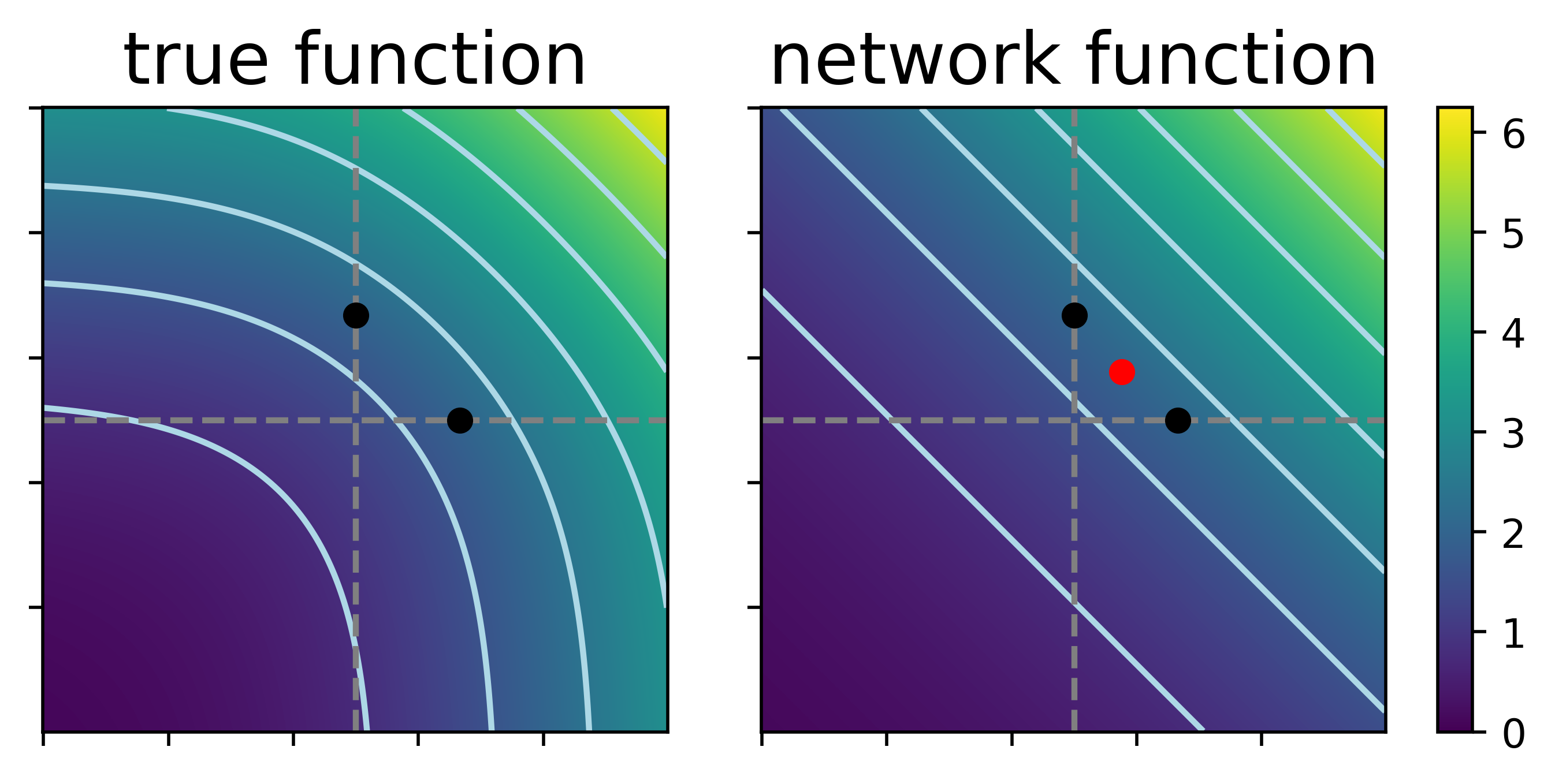}
        \label{fig:functs-soft}
        } 
    \subfloat[one-neuron approx. error]{
        \includegraphics[width=0.28\textwidth]{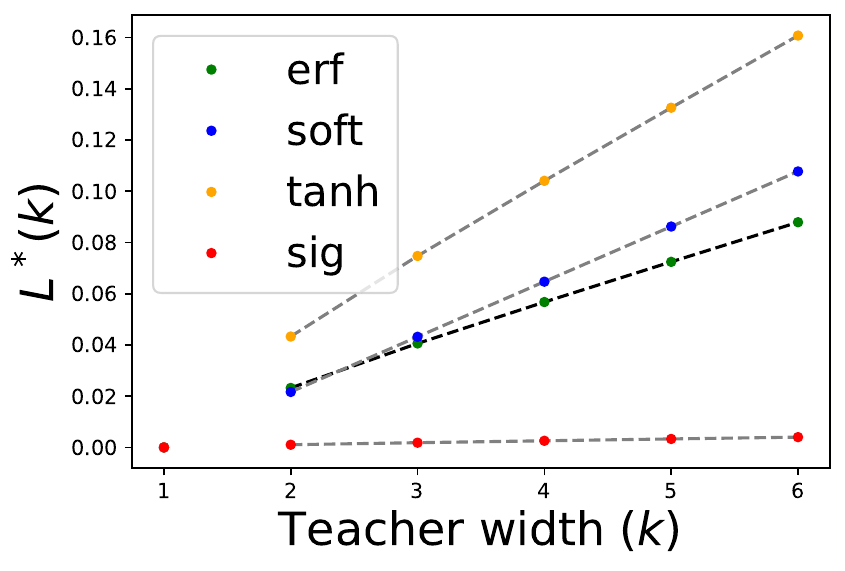}
        \label{fig:approx-err-one-neuron}
        } 
    \caption{\textit{One-neuron network solutions.} 
    \textbf{A} Network output (color coded) as a function of input in $d=2$ for (left) a 
    unit-orthonormal network with $k=2$ neurons (incoming vectors $v_1$ and $v_2$ are shown as black dots) and (right) the student network function generated by the optimal solution (incoming vector shown in red) for the erf activation function. 
    \textbf{B} Same for the softplus activation function.  
    \textbf{C} Approximation error of a student with $n=1$ neurons as a function 
    of the number of $k$ teacher neurons. 
    For large $k$, the approximation error for $n=1$ grows near-linearly for the differentiable activation functions studied in this paper (erf, sigmoid, tanh, and softplus with $
    \beta=1$); however the growth is quadratic for ReLU (see Appendix Corollary~\ref{corr:relu}).
    \label{fig:one-neuron-network}}
\end{figure*}

We show in Lemma~\ref{lem:activ-functs} that the interactions of the common activation functions such as erf, tanh, sigmoid, and softplus (respectively)
\begin{align}%\label{eq:activ-functs}
\sigma_{\text{erf}}(x) = \frac{2}{\sqrt{\pi}} \! \int_0^{\frac{x}{\sqrt{2}}} e^{-t^2} dt, \  
\sigma_{\text{tanh}}(x) = \frac{1-e^{-x}}{1+e^{-x}}, \ 
\sigma_{\text{sig}}(x) = \frac{1}{1+e^{-x}}, \ 
\sigma_{\text{soft}}^\beta(x) = \frac{1}{\beta} \log(e^{\beta x}+1), \notag
\end{align}
with $\beta \in (0, 2]$ satisfy Assumption~\ref{ass:interaction}-(ii). 
The interaction of the ReLU activation function, i.e. $\sigma_{\text{relu}}(x) = \max (0, x)$ also satisfies Assumptions~\ref{ass:interaction} (with a slight modification in the domain for (ii); see the proof of Corollary~\ref{corr:relu}). 

Thanks to Theorem~\ref{thm:uniqueness}, the loss in Eq.~\ref{eq:one-neuron-loss} can be reduced to a two-dimensional loss in $a$ and $r$, which can be solved explicitly for ReLU (Corollary~\ref{corr:relu}) and erf.

\begin{corollary} \label{corr:erf}
Assume that the activation function is $\sigma_{\text{erf}}$.
The optimal solution $(w^*, a^*)$ is given by
\begin{align}%\label{eq:min-erf}
    \|w^*\| = \sqrt{ \frac{1}{2k -1}}, \quad a^*=k, \quad \frac{w^*}{\|w^*\|}= \frac{1}{\sqrt{k}} \sum_{i=1}^k v_i, \notag
\end{align}
or, equivalently, by $(-w^*, -a^*)$. The optimal loss is then given by
\begin{align}\label{eq:loss-min-erf}
    L_{\text{erf}}^*(k) = \frac{2}{\pi} \Bigl(k \arcsin \bigl(\frac{1}{2} \bigr) - k^2 \arcsin \bigl( \frac{1}{2k} \bigr) \Bigr).
\end{align}
\end{corollary}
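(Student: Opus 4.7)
The plan is to reduce the problem to two scalar variables using Theorem~\ref{thm:uniqueness}, and then exploit the closed-form expression of the erf interaction to solve the resulting first-order conditions explicitly.

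First, I would invoke the classical Gaussian integral formula
\begin{align*}
    g_{\text{erf}}(r_1, r_2, u) = \frac{2}{\pi}\arcsin\!\left(\frac{r_1 r_2 u}{\sqrt{(1+r_1^2)(1+r_2^2)}}\right),
\end{align*}
which in particular yields $\E[f^*(x)^2] = k\, g_{\text{erf}}(1,1,1) = k/3$, since orthonormality makes the cross-terms vanish. Because erf satisfies Assumption~\ref{ass:interaction} (verified in Lemma~\ref{lem:activ-functs}), Theorem~\ref{thm:uniqueness} forces $w^*/\|w^*\| = \pm \tfrac{1}{\sqrt{k}}\sum_{j=1}^k v_j$, so every student-teacher correlation equals $\pm 1/\sqrt{k}$. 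Plugging this into Eq.~\ref{eq:one-neuron-loss} collapses the objective to a two-parameter function
\begin{align*}
    L(r,a) = \frac{2a^2}{\pi}\arcsin\!\left(\frac{r^2}{1+r^2}\right) - \frac{4ak}{\pi}\arcsin\!\left(\frac{r}{\sqrt{2k(1+r^2)}}\right) + \frac{k}{3}.
\end{align*}

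Next I would write the two stationarity conditions. With the shorthand $\alpha(r) := r^2/(1+r^2)$ and $\beta(r) := r/\sqrt{2k(1+r^2)}$, the $a$-equation reads $a\arcsin\alpha = k\arcsin\beta$, and after computing the arcsin derivatives in $r$ and cancelling common factors, the $r$-equation reduces to the purely algebraic relation $a^2 r^2\bigl(2k + (2k-1)r^2\bigr) = k^2(1+2r^2)$. The key observation is to try the ansatz $\alpha = \beta$: this immediately forces $a^* = k$ in the first equation, while $\alpha = \beta$ is equivalent (after squaring) to $(2k-1)r^2 = 1$, giving $r^{*2} = 1/(2k-1)$. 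Both sides of the algebraic relation then collapse to $k^2(2k+1)/(2k-1)$, confirming that $(r^*, a^*) = \bigl(\sqrt{1/(2k-1)},\, k\bigr)$ is indeed a critical point.

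Finally, I would evaluate $L(r^*, a^*)$: since both arcsin arguments equal $1/(2k)$ at the candidate point,
\begin{align*}
    L(r^*, a^*) = -\frac{2k^2}{\pi}\arcsin\!\left(\frac{1}{2k}\right) + \frac{k}{3} = \frac{2}{\pi}\!\left(k\arcsin\tfrac{1}{2} - k^2\arcsin\tfrac{1}{2k}\right),
\end{align*}
using $k/3 = (2k/\pi)\arcsin(1/2)$; this matches Eq.~\ref{eq:loss-min-erf}. For global optimality, Theorem~\ref{thm:uniqueness} already yields uniqueness of the direction up to sign, and a brief monotonicity argument on the reduced 2D system, together with the existence of a minimizer (the loss is continuous, bounded below, and tends to a positive constant as $(r,a)\to\infty$), shows that $(r^*, a^*)$ and its sign-flip $(-w^*, -a^*)$ are the only non-trivial critical points and thus both global minima. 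The main obstacle is recognizing the ansatz $\alpha = \beta$ that decouples the otherwise coupled transcendental system; once spotted, everything else is elementary substitution.
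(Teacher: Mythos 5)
Your proposal is correct and takes essentially the same route as the paper: reduce to a two-variable problem in $(r,a)$ via Theorem~\ref{thm:uniqueness}, use the closed-form $\arcsin$ expression for $g_{\text{erf}}$, and observe that forcing the two $\arcsin$ arguments to coincide (your ansatz $\alpha=\beta$) solves both stationarity conditions with $a^*=k$ and $(r^*)^2=1/(2k-1)$, after which the loss evaluation is identical. The one point to flag is that your unsupplied ``brief monotonicity argument'' for uniqueness of the non-trivial critical point of the reduced system is exactly the step the paper itself settles only by numerical integration (Fig.~\ref{fig:zeros-of-G-tG}), so it should be stated as numerically verified rather than asserted as proved.
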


The proof of Corollary~\ref{corr:erf} is presented in Section~\ref{app-sec:proof-erf}.
For general activation functions, the two-dimensional loss does not admit an analytical expression. 
For this case, from the partial derivatives, we obtain a fixed-point equation in $r$ which we solve numerically for the activation functions listed above (see Appendix Section~\ref{sec:zeros-of-G-tG}).
For softplus, specifically, we prove in addition the following

\begin{theorem} \label{thm:softplus}
Assume that the activation function is $\sigma_{\text{soft}}^\beta(x)$ with $\beta \leq 2$. 
The optimal solution $(w^*, a^*)$ satisfies $\|w^*\| \leq \nicefrac{1}{\sqrt{k}}$ and $a^*\geq k$. 
\end{theorem}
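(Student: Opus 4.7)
The plan is to reduce to a one-dimensional optimization using Theorem~\ref{thm:uniqueness} and then exploit the softplus identity $\sigma(x) - \sigma(-x) = x$. Since $\sigma_{\text{soft}}^\beta$ with $\beta \leq 2$ satisfies Assumption~\ref{ass:interaction} (Lemma~\ref{lem:activ-functs}), Theorem~\ref{thm:uniqueness} forces $w^* = \pm(r/\sqrt{k})\sum_{j=1}^k v_j$ with $r = \|w^*\| > 0$, and the loss reduces to
\[ L(r, a) = a^2 G_2(r) - 2 a k\, G_1(r) + C, \]
where $G_1(r) := g_\sigma(r, 1, 1/\sqrt{k})$ and $G_2(r) := g_\sigma(r, r, 1)$. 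Solving $\partial_a L = 0$ gives $a^* = k\, G_1(r^*)/G_2(r^*)$, and $r^*$ is the maximizer of $F(r) := G_1(r)^2/G_2(r)$.

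Writing $\sigma(x) = x/2 + \sigma_e(x)$ with $\sigma_e(x) := (\sigma(x)+\sigma(-x))/2$, which for softplus is strictly positive, strictly convex, and increasing in $|x|$, and applying Stein's lemma to the cross terms (which vanish because $\sigma_e'$ is odd), the two interactions admit the clean decomposition
\[ G_1(r) = \frac{r}{4\sqrt{k}} + \mu(r), \qquad G_2(r) = \frac{r^2}{4} + \nu(r), \]
with $\mu(r) := \E[\sigma_e(rX)\sigma_e(Y)]$ and $\nu(r) := \E[\sigma_e(rX)^2]$, where $(X, Y)$ are jointly Gaussian with unit-variance marginals and correlation $1/\sqrt{k}$.

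The bound $a^* \geq k$ follows from a short Jensen-type argument once $r^* \leq 1/\sqrt{k}$ is known. Indeed, $a^* \geq k \iff G_1(r^*) \geq G_2(r^*)$, which by the decomposition rearranges to
\[ \frac{r^*(1 - r^*\sqrt{k})}{4\sqrt{k}} \;\geq\; \nu(r^*) - \mu(r^*). \]
The left-hand side is non-negative when $r^* \leq 1/\sqrt{k}$. For the right-hand side, condition on $rX = z$: then $Y \mid rX = z \sim \N(z/(r\sqrt{k}), 1-1/k)$; convexity of $\sigma_e$ with Jensen gives $\E[\sigma_e(Y)\mid rX = z] \geq \sigma_e(z/(r\sqrt{k}))$; and since $\sigma_e$ is increasing in $|x|$ while $|z/(r\sqrt{k})| \geq |z|$ whenever $r \leq 1/\sqrt{k}$, this further dominates $\sigma_e(z)$. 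Multiplying by $\sigma_e(rX) > 0$ and taking expectations yields $\mu(r^*) \geq \nu(r^*)$, so the right-hand side is $\leq 0$ and the inequality holds.

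The harder step is to show $r^* \leq 1/\sqrt{k}$. By uniqueness of the non-trivial critical point (Theorem~\ref{thm:uniqueness}), $F$ admits a single interior maximizer on $(0, \infty)$, so it suffices to verify $F'(1/\sqrt{k}) \leq 0$, i.e.\ $2 G_1'(1/\sqrt{k}) G_2(1/\sqrt{k}) \leq G_1(1/\sqrt{k}) G_2'(1/\sqrt{k})$. Computing $\mu'$ and $\nu'$ via Stein's lemma and substituting the decomposition, this reduces to a Gaussian moment estimate in $\sigma_e, \sigma_e', \sigma_e''$. The inequality $\mu(1/\sqrt{k}) \geq \nu(1/\sqrt{k})$ just proved takes care of one contribution, while the uniform bound $\sigma_e''(x) = \sigma''(x) = \beta \sigma'(x)(1-\sigma'(x)) \leq \beta/4 \leq 1/2$, afforded by $\beta \leq 2$, controls the derivative terms. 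I expect closing this final moment comparison tightly at $r = 1/\sqrt{k}$ to be the main obstacle of the proof.
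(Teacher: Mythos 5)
Your reduction to the two-dimensional loss, the even-part decomposition $\sigma(x)=x/2+\sigma_e(x)$ (valid for softplus since $\sigma(x)-\sigma(-x)=x$), and the conditional-Jensen argument showing $\mu(r)\geq\nu(r)$ for $r\leq \nicefrac{1}{\sqrt{k}}$ are all correct, and the last of these gives a genuinely cleaner derivation of $a^*\geq k$ than the paper's (which instead linearizes $\sigma$ at $rx$ by convexity and applies Stein's lemma). However, the proof has a real gap exactly where you flag difficulty: the norm bound $r^*\leq \nicefrac{1}{\sqrt{k}}$, on which your $a^*\geq k$ argument also depends, is not established. Two problems: first, Theorem~\ref{thm:uniqueness} only pins down the \emph{direction} of $w^*$ (all correlations equal to $\pm\nicefrac{1}{\sqrt{k}}$); it says nothing about uniqueness of the critical norm $r$, so you cannot infer that $F(r)=G_1(r)^2/G_2(r)$ has a single interior maximizer — indeed the paper treats uniqueness in $r$ as the hard part and, for general activations, only verifies it numerically (Fig.~\ref{fig:zeros-of-G-tG}). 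Without unimodality, checking the sign of $F'(\nicefrac{1}{\sqrt{k}})$ alone cannot locate the global maximizer. Second, even that sign computation is left as a sketch ("the main obstacle"), so the decisive inequality is never closed.

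The paper's proof takes a different route that avoids unimodality of $F$ altogether: it works with the fixed-point equation $G(r)=\tilde G(u,r)$ coming from the two stationarity conditions, shows via FKG/change-of-measure arguments and the log-concavity properties of softplus (Lemmas~\ref{lem:increasing-G-tilde}--\ref{lem:softplus-properties}) that for each $r\in[0,1]$ there is a unique correlation $u=h(r)$ solving it (and none for $r>1$ or $u<0$, which also disposes of the $u=-\nicefrac{1}{\sqrt{k}}$ branch that your argument leaves untreated, since softplus is not odd), and then proves $h(r)\geq r$ directly; since any non-trivial critical point has $h(r)=\nicefrac{1}{\sqrt{k}}$, the bound $r\leq\nicefrac{1}{\sqrt{k}}$ follows. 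If you want to salvage your approach, you would need either to prove that $F'(r)<0$ for \emph{all} $r>\nicefrac{1}{\sqrt{k}}$ (not just at the endpoint), or to import something equivalent to the paper's monotonicity of $\tilde G$ in $u$ together with $h(r)\geq r$.
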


We use the FKG inequality to prove Theorem~\ref{thm:softplus} (see Appendix~\ref{app-sec:proof-softplus}). 
Although the proof requires specific properties of softplus, we show that the above bounds hold also for tanh and sigmoid by numerically solving the fixed point equation (Figure~\ref{fig:zeros-of-G-tG}, also Figure~\ref{fig:one-neuron-structure}). 
Note that the incoming vector norm is bounded above by $1/\sqrt{k}$, hence $w^*$ is a \textit{damped} average of the teacher incoming vectors. 

\begin{remark} We do not impose either of the two reductions that are common in literature: 
(i) incoming vector $w$ is constrained to be on the unit sphere \cite{dudeja2018learning, bietti2022learning, bruna2023single, damian2023smoothing},
(ii) the outgoing weight $a$ is constrained to be one \cite{saad1995exact, yehudai2020learning, veiga2022phase}.
An important step in our analysis is related to the norm $r$ of the incoming vector which we discuss in Appendix Section~\ref{sec:zeros-of-G-tG}. 
\end{remark}

\subsection{Multi-Neuron Network}
\label{sec:multi-neuron}

In this subsection, we assume that the activation function is erf such that CA-configurations are critical points (see Theorem~\ref{thm:ca-critical}).
For a student network with $n=2$ and a partition $(\ell_1, \ell_2)$ we can decompose the loss of a CA critical point as 
\begin{align}
    \Lerf(\ell_1) + \Lerf(\ell_2) + \Lerf(0, k-(\ell_1+\ell_2))
\end{align}
where $\Lerf(0, \ell_0) \! := \! \E_{x \sim \D} [f_{\ell_0}^*(x)^2]$ is the error made by a student with vanishing output when representing a reduced unit-orthonormal teacher network with $\ell_0$ neurons.
This decomposition is possible because the cross-terms between orthogonal vectors are zero for odd activation functions.
Furthermore, for the erf activation function, we show that 
\begin{align}\label{eq:use-all-teachers}
    \Lerf(\ell_1) + \Lerf(0, \ell_0) >  \Lerf(\ell_1\!+\!\ell_0) 
\end{align} 
(see the proof of Lemma~\ref{lem:cost-of-missing}).
Therefore, we should search for the minimum loss configuration among the partitions with $\ell_1 \!+\! \ell_2=k$.
Among such partitions, Lemma~\ref{lem:cost-of-missing} shows that the optimum CA-point has the partition $(1, k-1)$. In words, the optimum is a $1$-copy-$1$-average point. 

For general $n$, using Lemma~\ref{lem:cost-of-missing} and a small trick, we prove the following.  
\begin{theorem}\label{thm:optimal-CA} Consider a unit-orthonormal teacher network $f^*(x) = \sum_{j=1}^k \sigma (v_j \cdot x)$ and the erf activation function. 
For an under-parameterized student network with $n$ neurons, the minimum-loss copy-average configuration up to permutations (of the student and teacher neurons) is 
\begin{align}
    \theta = (\epsilon_1 v_1, \epsilon_1) \oplus ... \oplus (\epsilon_{n-1} v_{n-1}, \epsilon_{n-1}) \oplus (\epsilon_n w_n^*, \epsilon_n a_n^*) 
\end{align}
where $\epsilon_i \in \{ \pm 1 \}$ and $(w_n^*, a_n^*)$ is given by Corollary~\ref{corr:erf} after substituting $k$ with $k\!-\!n\!+\!1$.
\end{theorem}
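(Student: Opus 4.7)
Any CA configuration $\theta^*$ is specified, up to permutations and signs $\epsilon_i \in \{\pm 1\}$, by an ordered partition of the $k$ teacher indices into $n$ nonempty groups of sizes $\ell_1, \ldots, \ell_n$ assigned to the $n$ student neurons, together with $\ell_0 := k - \sum_{i=1}^n \ell_i \geq 0$ ``unused'' teacher neurons. By Corollary~\ref{corr:erf}, each student neuron optimally placed for its sub-teacher contributes $\Lerf(\ell_i)$. Since $\sigma_{\text{erf}}$ is odd and the $\{v_j\}$ are orthonormal, every cross-group interaction of the form $g_{\text{erf}}(r_1,r_2,0)$ vanishes, so the total loss decomposes as
\begin{align*}
    \Lnk(\theta^*) \;=\; \sum_{i=1}^n \Lerf(\ell_i) \;+\; \Lerf(0, \ell_0).
\end{align*}
Minimizing over CA configurations thus reduces to minimizing this functional over admissible partitions $(\ell_1, \ldots, \ell_n; \ell_0)$.

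\textbf{Two-step reduction via Lemma~\ref{lem:cost-of-missing}.} First, the inequality in Eq.~\ref{eq:use-all-teachers}, namely $\Lerf(\ell) + \Lerf(0,\ell_0) > \Lerf(\ell + \ell_0)$ for $\ell_0 \geq 1$, lets us absorb the unused teacher neurons into any one existing group with a strict decrease of the loss; hence any optimal CA satisfies $\ell_0 = 0$, i.e.\ $\sum_{i=1}^n \ell_i = k$. Second, the companion inequality also established in Lemma~\ref{lem:cost-of-missing},
\begin{align*}
    \Lerf(a) + \Lerf(b) \;>\; \Lerf(a+b-1) \;=\; \Lerf(1) + \Lerf(a+b-1), \qquad a,b \geq 2,
\end{align*}
is the ``small trick'': whenever two parts $\ell_i, \ell_j$ are both at least $2$, it allows us to replace them in place by $(1, \ell_i + \ell_j - 1)$, thereby preserving both the total $k$ and the number of parts $n$ while strictly lowering the loss. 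Iterating this replacement strictly decreases the count of parts exceeding~$1$ at every step, so it terminates at a partition with at most one such part. The only partition of $k$ into $n$ positive parts with this property is, up to permutation, $(1, 1, \ldots, 1, k-n+1)$.

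\textbf{Conclusion and main obstacle.} Combining the two reductions, the unique optimal CA partition is $(1, \ldots, 1, k-n+1)$: the first $n-1$ student neurons each copy a distinct teacher neuron (contributing $\Lerf(1)=0$) and the $n$-th student neuron realizes the non-trivial one-neuron optimum for a unit-orthonormal teacher of width $k-n+1$. Substituting $k \mapsto k-n+1$ in Corollary~\ref{corr:erf} yields the claimed $(w_n^*, a_n^*)$, while the factors $\epsilon_i \in \{\pm 1\}$ reflect the $(w,a)\mapsto(-w,-a)$ symmetry of $\Lnk$ under odd $\sigma$. The combinatorial reduction above is essentially routine once the decomposition is in hand; the substantive analytical content, and therefore the main obstacle, lies in Lemma~\ref{lem:cost-of-missing}, in particular in the shifted superadditivity $\Lerf(a)+\Lerf(b)>\Lerf(a+b-1)$. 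I expect the cleanest route is to leverage the closed form from Corollary~\ref{corr:erf} to prove that $k \mapsto \Lerf(k)$ is strictly concave on $[1,\infty)$ with $\Lerf(1)=0$; the chord bound $\Lerf(m) > \frac{m-1}{a+b-2}\Lerf(a+b-1)$ at $m=a$ and $m=b$ then sums to the desired inequality.
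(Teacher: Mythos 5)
Your proposal is correct and follows essentially the same route as the paper: decompose the CA loss into per-group one-neuron losses using orthogonality and the oddness of erf, absorb unused teacher neurons via Eq.~\ref{eq:use-all-teachers}, and then use Lemma~\ref{lem:cost-of-missing} to merge any two groups of size at least two into a copy plus a larger average, terminating at the partition $(1,\ldots,1,k\!-\!n\!+\!1)$. The only cosmetic difference is that you derive the key inequality $\Lerf(a)+\Lerf(b)>\Lerf(1)+\Lerf(a\!+\!b\!-\!1)$ from strict concavity of $\Lerf$ on $[1,\infty)$ via a chord bound, whereas the paper phrases the same concavity as decreasing discrete increments and iterates them; both rest on the identical closed-form computation.
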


\begin{figure}[t]
    \centering
         \includegraphics[width=0.99\textwidth]{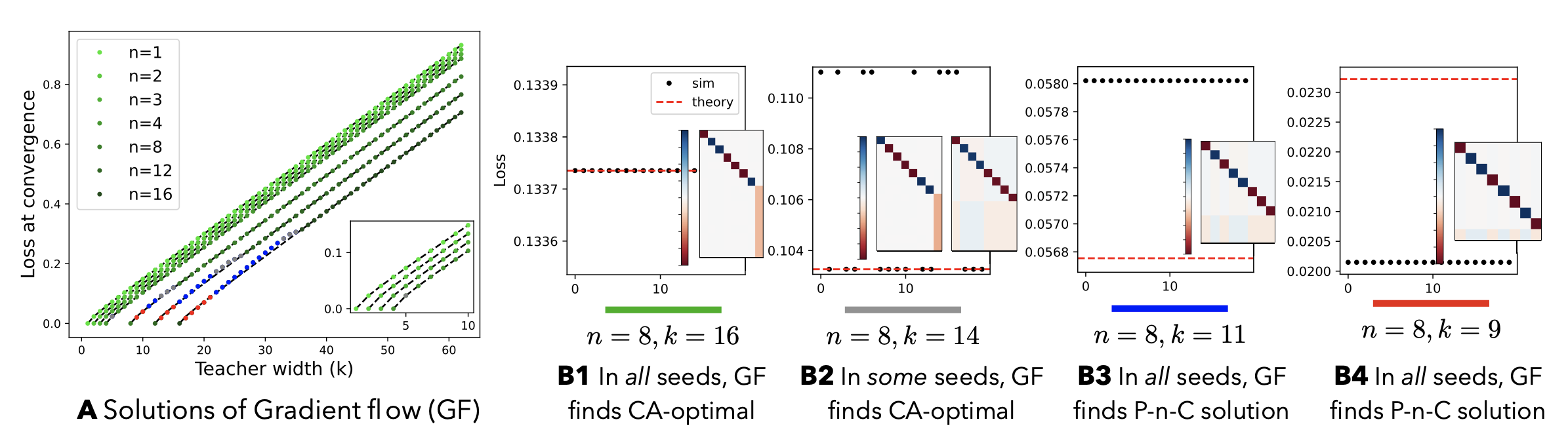}
    \caption{\textit{Under-parameterized student networks of width $n$ with erf activation function learning (via gradient flow) from a unit-orthonormal teacher network of width $k$.}
    \textbf{A} Each dot is the mean error at convergence for $20$ seeds of random initializations; black-dashed lines are the theory predictions $\Lerf(k\!-\!n\!+\!1)$, see Eq.~\ref{eq:conjecture}. 
    Standard deviations do not show on the figure as they are too small.
    We identify four regimes indicated by colors ({\color{green} green}-{\color{gray} gray}-{\color{blue} blue}-{\color{red} red}) depending on the type of solution found by gradient flow (GF).
    In the {\color{green} green regime}, GF converges to an optimal $(n-1)$-C-$1$-A solution for all $20$ initializations (Fig.~\ref{fig:erf-up-nets}-B1).
    In the {\color{gray} gray regime}, GF converges either to $(n-1)$-C-$1$-A solution or to a "Perturbation of the all-copy solution" that we call P-$n$-C (Fig.~\ref{fig:erf-up-nets}-B2).
    In the {\color{blue} blue and} {\color{red} red regimes}, for $n > \gamma_2 k$ where $n=8,12,16$ %and $\gamma_2$ is near $0.5$, 
    the gradient flow converges to a P-$n$-C solution from all seeds (Fig.~\ref{fig:erf-up-nets}-B3). 
    Moreover, in the {\color{red} red regime}, for $n > \gamma_3 k$ where $n=8,12,16$ and $\gamma_3$ is near $0.75$, the P-$n$-C solutions achieve lower loss than the $(n-1)$-C-$1$-A solutions (Fig.~\ref{fig:erf-up-nets}-B4). 
    \textbf{B1-B4} Examples of loss at convergence (vertical axis) for all $20$ different initialization seeds (horizontal axis); theory is shown by the red-dashed horizontal line. 
    Insets show examples of correlation matrices $u_{ij}$ ($k$ lines, $n$ columns) between student and teacher incoming vectors at convergence after reordering neurons. 
    In the gray regime (for ex. B2) the gradient flow converges to either one of the two types of minima
    with correlations shown in the inset; in the other regimes, it consistently converges to the same minimum up to permutations. 
    \label{fig:erf-up-nets}}
    \vspace{-5mm}
\end{figure}

See Appendix~\ref{app-thm:optimal-CA} for the proof. Because copy-average critical points are not necessarily the only critical points of the loss function for students with $n > 1$, we investigate in simulations, if they are found by gradient flow where the weights are initialized as Gaussian with a fixed standard deviation (see Fig.~\ref{fig:erf-up-nets}).

Interestingly, gradient flow converges to the CA-optimal solution for all random seeds in a broad regime of under-parameterization (green in Fig.~\ref{fig:erf-up-nets}).
Only when $n > \gamma_1 k$ for $n=8, 12, 16$ and $\gamma_1 \sim 0.46$, gradient flow converged in some seeds to points close to the $n$-copy critical point.
We call these newly found points ``perturbed $n$-copy'' (P-$n$-C) points. 
In gray-blue regimes, the P-$n$-C points have higher loss than the optimal CA critical point (Fig.~\ref{fig:erf-up-nets}). 

However, this is not always the case: when the student width is close to the teacher width (low compression regime), the P-$n$-C point has a slightly lower loss the lowest amongst the CA critical points (red in Fig.~\ref{fig:erf-up-nets}). 
When $k-n$ is fixed, we found that the $(n-1)$-C-$1$-A solution turns from a minimum for small $n$ to a saddle for large $n$ (see App. Fig.~\ref{fig:optimal-CA-min-eig}); which explains why the gradient flow escapes it in this regime and converges to another minimum at a lower loss. 

Finally, based on our theory and experiments, we conjecture that there exists a $\gamma_0 \in (0, \gamma_3)$ such that when $n < \gamma_0 k$ and when the activation function is erf, the global optimum of the non-convex loss in Eq.~\ref{eq:loss-function} is a $(n\!-\!1)$-C-$1$-A configuration. 
Therefore, if our conjecture holds, the \textit{exact} approximation error, i.e. the optimal loss, is identical to that of a one-neuron network approximating a teacher with $k\!-\!n\!+\!1$ neurons and is given by 
\begin{align}\label{eq:conjecture}
    \Lerf(n, k) = \Lerf(k\!-\!n\!+\!1). 
\end{align}

\section{Conclusion \& Future Directions}

We studied the learning of under-parameterized student networks from orthogonal teacher networks for standard Gaussian input data and vanishing thresholds. 
For erf activation function, we introduced a new family of critical points that arise from the decoupling of the problem into one-neuron networks that can be solved separately. 
Moreover, the exact parameters of copy-average (CA) critical points are given which can be used to study escape behavior near saddles and to determine convergence of first and second-order optimization algorithms \cite{frye2021critical, brea2023mlpgradientflow}. 

Furthermore, we showed that the optimal CA point is that $n-1$ neurons copy teacher neurons and the $n$-th neuron averages the remaining $k-n+1$ neurons. 
In simulations, gradient flow converges to a CA-optimal solution for $n < \gamma_1 k$ where $\gamma_1$ is near $0.46$.  
However, for $n > \gamma_2 k$ where $\gamma_2$ is near $0.6$, we observe another phase where the gradient flow finds a perturbed copy solution.
For the ReLU activation function and the onset of under-parameterization, gradient flow converges to qualitatively similar solutions; 
however, at the crossing point from under-parameterization to over-parameterization (i.e. $n=k$), gradient flow is known to get stuck in spurious local minima \cite{safran2018spurious}. 
On another note, determining the CA-optimal solution of the two-neuron network plays a critical role in our analysis. 
Still, there is only little literature on two-neuron networks \cite{wu2018no, tian2017analytical} compared to the well-studied one-neuron case \cite{tian2017analytical, yehudai2020learning, vardi2021learning, mei2018landscape, wu2022learning}. 
The two-neuron network is possibly the simplest model with interactions between neurons, hence it is important to understand the global minimum and gradient flow dynamics of this challenging problem. 

On the practical side, our analysis of under-parameterized networks gives a recipe for how to warm-start smaller neural networks for distilling unit-orthonormal teacher networks. 
If one desires low compression ($n > \gamma_3 k$), then we recommend initializing the student network in a configuration where each neuron copies a different teacher neuron, to be close to a P-$n$-C point. 
However, for higher compression, we recommend initializing the student network in a configuration where $n-1$ neurons are each copied and the $n$-th neuron is initialized as an average neuron to be close to a $(n-1)$-copy-$1$-average point. 
It remains an open question whether this recipe applies to non-idealized scenarios such as non-isotropic input distribution, teacher networks with non-orthogonal incoming vectors, or non-unit outgoing weights. 
More generally, it is natural to expect that the optimal distillation strategy changes from low compression levels to high compression levels. 
How exactly and where this change happens is a very intriguing question of theory and practice. 

\section*{Acknowledgements}

The authors thank Lenka Zdeborová for the discussions and encouragement at the beginning of this project and Clément Hongler for many discussions and valuable feedback. 
This work was supported by the Swiss National Science Foundation (no. $200020\_207426$). 

\bibliography{references}

\clearpage

\pagebreak

\appendix
\pagebreak
\startcontents[sections]
\printcontents[sections]{l}{1}{\setcounter{tocdepth}{2}}

\section{Summary of Results}

\begin{table}[h!]
  \caption{Summary of Results}
  \label{table:summary-results}
  \centering
  \begin{tabular}{llll}
    \toprule
    columns=conditions: & Orthogonal  & UO \& erf & UO \& $\sigma$ satisfying $g_\sigma$ assumptions \\
    lines=results: \\
    \midrule
    $n=1$ average is the optimal solution & n.i.g. & yes* & yes \\
    $n>1$ CA points are critical points & n.i.g. & yes & n.i.g. maybe for odd \\
    $n>1$ $(n-1)$-C-$1$-A is the optimal-CA solution & n.i.g. & yes & n.i.g. maybe for some odd \\
    $n=1$ $w^*$ is in the span of $\{v_1, ..., v_k\}$ & yes & yes* & yes* \\
    \bottomrule
  \end{tabular}
\end{table}

In the table above, UO means unit-orthonormal, n.i.g. stands for `not in general' and yes* follows as a special case from the results with yes on the same row.

\section{Further Comparison to Literature}
\label{app-sec:further-lit}

In this section, we compare the symmetric solutions found in erf \cite{saad1995line} and ReLU networks \cite{tian2017analytical} to our one-neuron solution ($n=1$). 
The main difference is that both earlier studies constrain the search space to the symmetric subspace whereas we first prove that the non-trivial critical points are contained in this subspace in Theorem~\ref{thm:uniqueness} for a broad class of activation functions, including erf and ReLU. 
Solving the low-dimensional loss, we recover the same solution for ReLU and erf as in \cite{tian2017analytical, saad1995line} for unit-orthonormal teachers.

\textit{Symmetric Solution of \citet{saad1995line} for erf activation.}
The authors focus on the `symmetric subspace' parameterized as 
\begin{align}
    Q_{ii} = r_i^2 = Q, \quad Q_{ij} = p_{ij} r_i r_j = C, \quad R_{in} = u_{in} r_i = R. 
\end{align}
In this case, the loss is parameterized by three values, that is $Q, C, R$, hence can be expressed analytically in terms of these values. 
Solving the fixed point equations, they find the following critical/fixed point (their Eq.22)
\begin{align}
    Q=C=\frac{1}{2k-1}, \quad R=\frac{1}{\sqrt{k(2k-1)}}
\end{align}
which implies $r_i = \nicefrac{1}{\sqrt{2k-1}}$ and $\rho_{ij} = 1$ in our parameterization. 
This selection of parameters forces all student vectors to be equal therefore reducing the system to a one-neuron network. 
There are two main improvements in our analysis
\begin{enumerate}
    \item We prove that student-teacher correlations $u_{ij}$ are equal to each other at
    a non-trivial critical point, and give necessary conditions on the activation function (Assumption~\ref{ass:interaction}) to satisfy this property. 
    We show in Lemma~\ref{lem:activ-functs} that not only erf but a large class of common activation functions satisfy Assumption~\ref{ass:interaction}.
    \item Our student network has a flexible outgoing weight (shallow neural network) as opposed to a fixed outgoing weight $+1$ (soft-committee machine) in \citet{saad1995line}. 
    It is instructive to compare the generalization errors of the one-neuron network 
    \begin{align*}
        &\text{(soft-committee machine)} \ \ \Lerfsoft(k) = \frac{k}{3} -k^2\frac{2}{\pi} \arcsin (\frac{1}{2k}) \\
        &\text{(shallow network)} \ \ \Lerf(k) = k \frac{2}{\pi} \arcsin(\frac{1}{2}) -k^2 \frac{2}{\pi} \arcsin (\frac{1}{2k}) \approx k (\frac{1}{3} - \frac{1}{\pi}), 
    \end{align*}
    which are identical since $\arcsin(0.5) = \pi/6$ (\citet{saad1995line} uses $\epsilon_g(k) = \frac{1}{2}\Lerfsoft(k) $ that's why there is a factor $0.5$ difference with respect to their Eq. (23)). 
    However, if we set teacher outgoing weights to say $a_t$, the shallow network adapts and reaches the generalization error $a_t^2 \Lerf(k)$ but the error of the soft-committee machine is
    \begin{align*}
        \Lerfsoft(k) &= k^2 g(\frac{1}{\sqrt{2k\!-\!1}}, \frac{1}{\sqrt{2k\!-\!1}}, 1) - 2 k^2 a_t g(\frac{1}{\sqrt{2k\!-\!1}}, 1, \frac{1}{\sqrt{k\!-\!1}}) + a_t^2 k g(1, 1, 1) \\
        &= O(a_t) + a_t^2 k \frac{1}{3}.
    \end{align*}
    which has a worse coefficient $\frac{1}{3}$ compared to $\frac{1}{3} - \frac{1}{\pi}$ as expected. 
\end{enumerate}

\textit{Symmetric Solution of \citet{tian2017analytical} for ReLU activation.} The authors focus on a particular two-dimensional subspace $(x, y)$ that allows the specialization of student neurons, namely 
\begin{align}
    w_i = x v_i + y \sum_{j \neq i} v_j.
\end{align}
In particular, they consider the `symmetric subspace' $x=y$ which is the case when all student neurons collapse to one neuron, and show that the dynamics converge to the following fixed point
\begin{align}
    x=y= \frac{1}{\pi k} (\sqrt{k-1} -\arccos(\frac{1}{\sqrt{k}})+\pi).
\end{align}
Summing over $k$ neurons then produces the following one-neuron due to the positive homogeneity
\begin{align*}
    w^* = \frac{1}{\pi} (\sqrt{k-1} -\arccos(\frac{1}{\sqrt{k}})+\pi) \sum_{j=1}^k v_j.
\end{align*}
Our formula (Corollary~\ref{corr:relu}) gives the identical result due to  
\begin{align*}
    w^* a^* = \frac{k}{h(1)} h(\frac{1}{\sqrt{k}}) \sum_{j=1}^k \frac{1}{\sqrt{k}} v_j = \frac{1}{\pi} (\sqrt{k-1} -\arccos(\frac{1}{\sqrt{k}})+\pi) \sum_{j=1}^k v_j.
\end{align*}
In this case, there is no difference between the optimal solution of the soft-committee machine and the shallow network since ReLU is positive-homogeneous as expected.

\section{Further Experiments}
\label{app-sec:experiments}

All experiments in this paper are implemented using the gradient flow package implemented by \citet{brea2023mlpgradientflow} which is particularly suited to studying gradient flow on the population loss. 
For activation functions for which there is an analytic formula, it is already implemented in the package; for the others,
we used the approximator option for a speed-up compared to the numerical integration option.
This method uses a neural network in the background fitted to approximate Gaussian integrals. 
We trained for $10^5$ ode iterations for erf and relu experiments; $10^3$ ode iterations for softplus, tanh, and sigmoid. 
For erf experiments, all seeds converged to configurations with gradient norm below $5 \cdot 10^{-8}$. For ReLU experiments, a fraction of seeds failed to converge (large gradient norm at the end of training).
In Appendix~\ref{app-sec:relu-exps}, we report among the seeds that succeeded in converging.  
Weights initialized as Gaussians with zero mean and standard deviation $0.1$ or with Glorot initialization \cite{glorot2010understanding}. 
This is in contrast with \citet{saad1995line, tian2017analytical} where (order) parameters are initialized with positive values (as opposed to the rotationally symmetric initializations done in practice). 
For each $(n,k)$ pair, we implemented $10$ or $20$ seeds of random initializations. 

\subsection{One-Neuron Network}

Empirically, gradient flow converges to the point where all student-teacher correlations are $\frac{1}{k}$\footnote{For odd activation functions, there are two solutions that are sign-symmetric: the first one where all correlations are $\frac{1}{\sqrt{k}}$ and its equivalent where all correlations are $-\frac{1}{\sqrt{k}}$.
The first solution is plotted in Fig.~\ref{fig:one-neuron-structure} for a fine comparison on the positive scale.}.

\begin{figure}[t]
     \centering
     \subfloat[]{
     \includegraphics[width=0.325\textwidth]{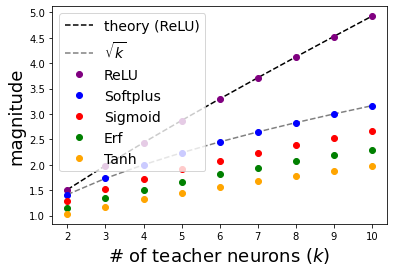}
     }
     \subfloat[]{
     \includegraphics[width=0.325\textwidth]{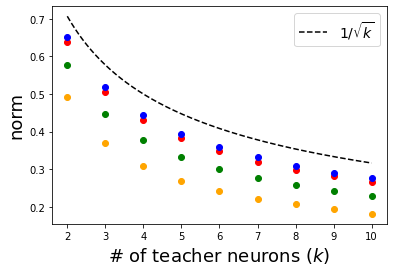}
     }
     \subfloat[]{
     \includegraphics[width=0.325\textwidth]{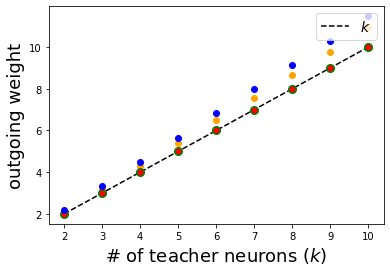}
     }
    \caption{\textit{Structure of the optimal solution of the one-neuron network for various activation functions.}
    We trained $20$ seeds of one-neuron students learning from the unit-orthonormal teacher networks with $k=2, ..., 10$ neurons. 
    All students converge to the same optimal solution up to symmetries (that is, positive-scaling symmetry for ReLU and sign symmetry for odd activation functions such as tanh and erf). 
    \textbf{A} For ReLU, the magnitude $\| w^*\| a^*$ exactly matches with the result of Corollary~\ref{corr:relu}.
    For softplus, the magnitude is very close to $\sqrt{k}$; for sigmoid, tanh, and erf, it is below $\sqrt{k}$.
    \textbf{B} The norm of the incoming vector is smaller than $1/\sqrt{k}$ for softplus, sigmoid, tanh, and erf.
    \textbf{C} The outgoing weight is larger than $k$ for softplus and tanh, and it is virtually $k$ for sigmoid and erf.
    \label{fig:one-neuron-structure}}
\end{figure}

We know from Theorem~\ref{thm:uniqueness} that at the non-trivial critical point all correlations are equal at correlation $\nicefrac{1}{\sqrt{k}}$ and there is possibly another critical point at correlation $-\nicefrac{1}{\sqrt{k}}$. 
Depending on the activation function, the point where correlations are $-\nicefrac{1}{\sqrt{k}}$ might be  
\begin{itemize}
\item either an equivalent of the optimum solution (for odd activation functions), 
\item or a saddle (for ReLU),
\item or does not exist (for softplus).
\end{itemize}

The details can be found in the proofs for individual cases. 

\subsection{Erf Experiments}
\label{app-sec:erf-exps}

\begin{figure}[t!]
     \centering
     \includegraphics[width=0.4\textwidth]{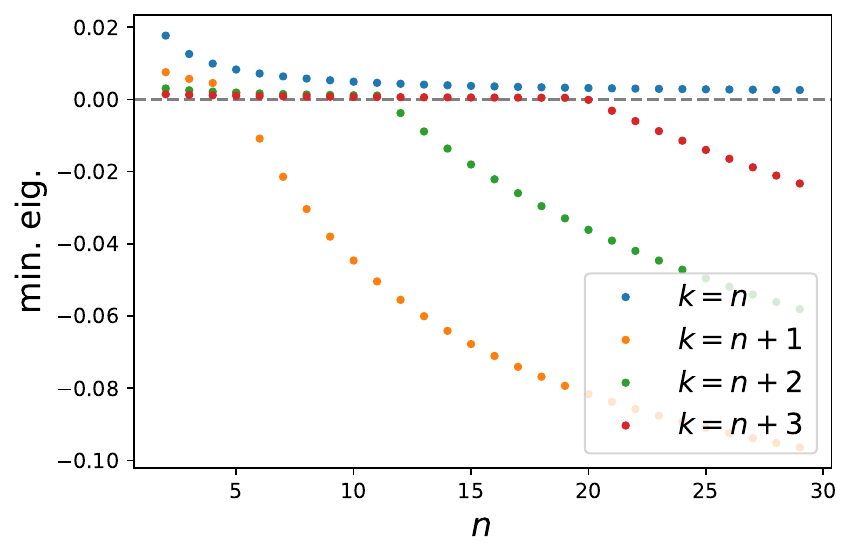}
     \hspace{1cm}
     \includegraphics[width=0.4\textwidth]{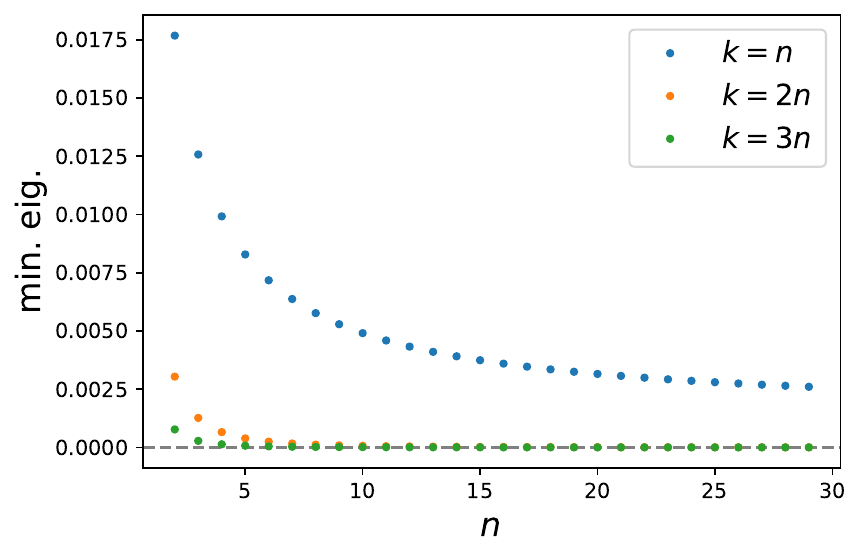}
    \caption{\textit{The minimum eigenvalue of the Hessian at an optimal-CA point.} 
    We numerically investigate whether a CA-optimal critical point is a strict saddle (min. eig. of the Hessian is negative) or a minimum (min. eig. of the Hessian is non-negative). 
    Interestingly, the minimum eigenvalue turns from positive to negative 
    as $n$ grows for $k=n+h$ for fixed $h=1,2,3$ (left panel). 
    Therefore in this regime, the CA-optimal cannot be the optimal 
    solution of the non-convex problem for large $n$. 
    For $k\gg n$, for example for $k=n, 2n, 3n$ (right panel), the min. eigenvalue is positive and it approaches zero as $n$ increases. 
    \label{fig:optimal-CA-min-eig}}
\end{figure}

\begin{figure}[t!]
     \centering
     \includegraphics[width=0.24\textwidth]{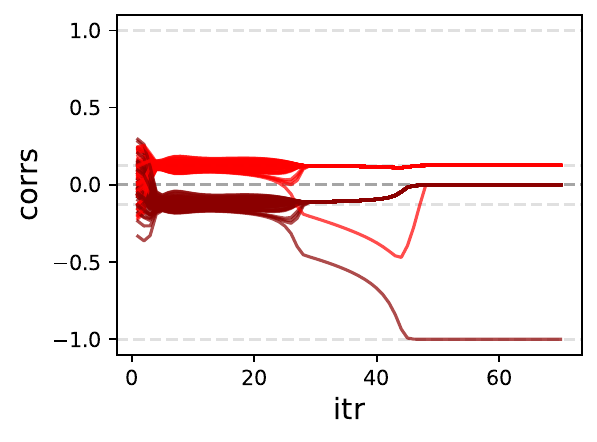}
     \hfill
     \includegraphics[width=0.24\textwidth]{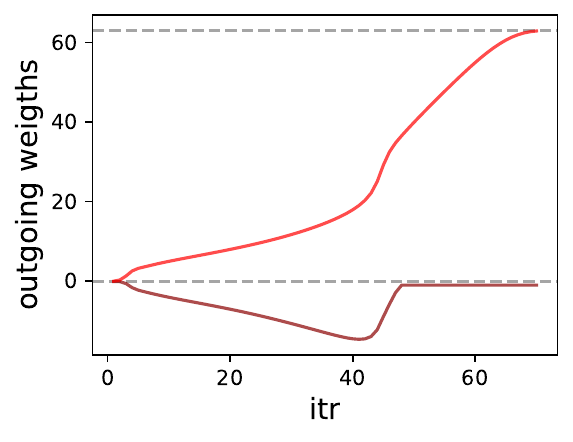}
     \hfill
     \includegraphics[width=0.24\textwidth]{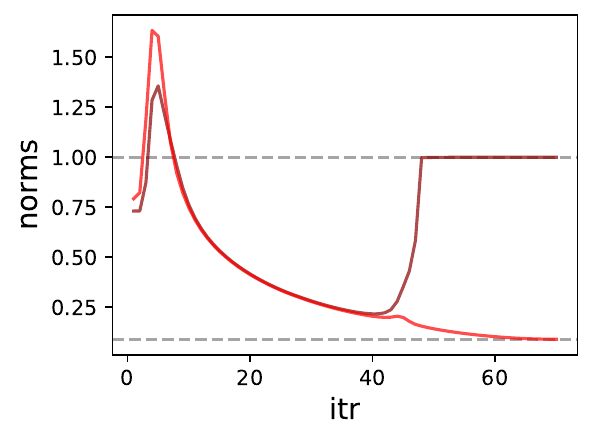} 
     \hfill
     \includegraphics[width=0.24\textwidth]{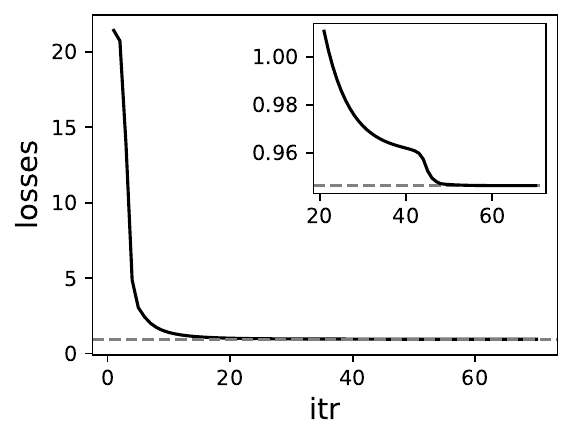} \\
     \includegraphics[width=0.24\textwidth]{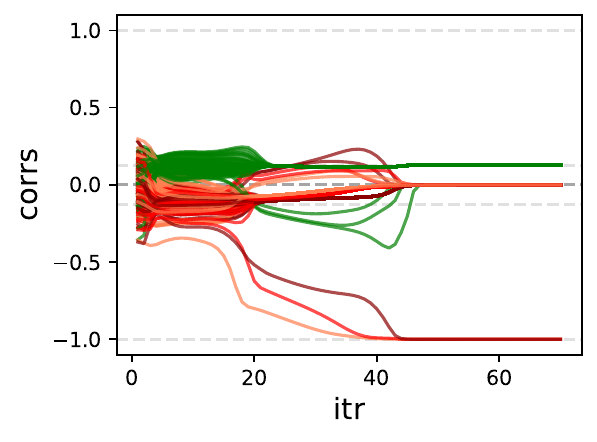}
     \hfill
     \includegraphics[width=0.24\textwidth]{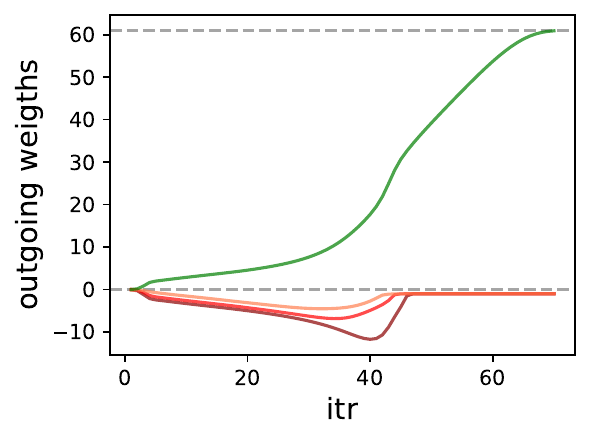}
     \hfill
     \includegraphics[width=0.24\textwidth]{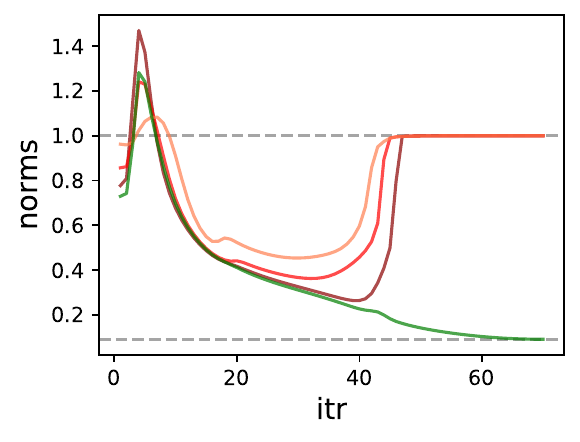} 
     \hfill
     \includegraphics[width=0.24\textwidth]{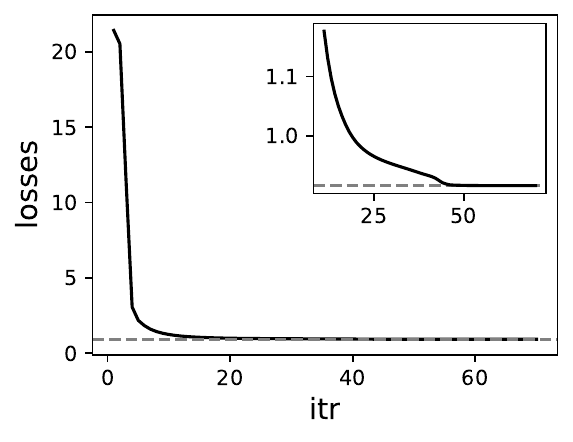} \\
     \includegraphics[width=0.24\textwidth]{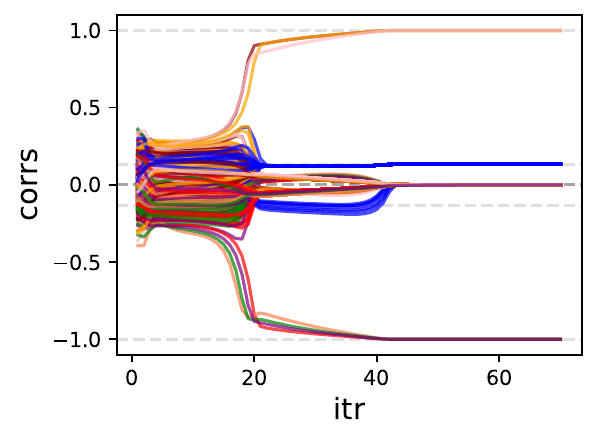}
     \hfill
     \includegraphics[width=0.24\textwidth]{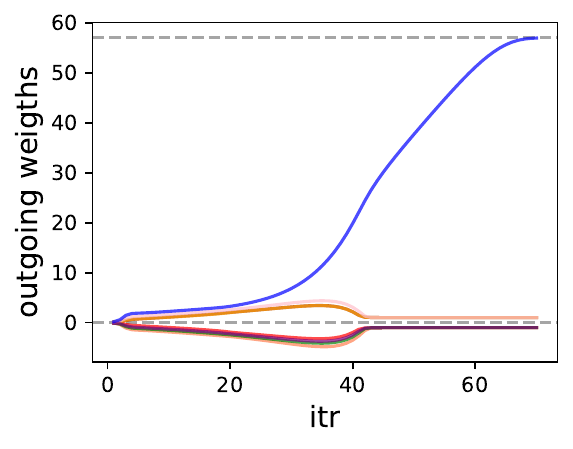}
     \hfill
     \includegraphics[width=0.24\textwidth]{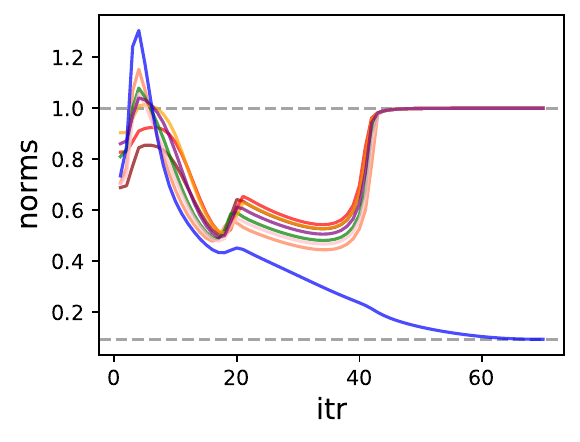} 
     \hfill
     \includegraphics[width=0.24\textwidth]{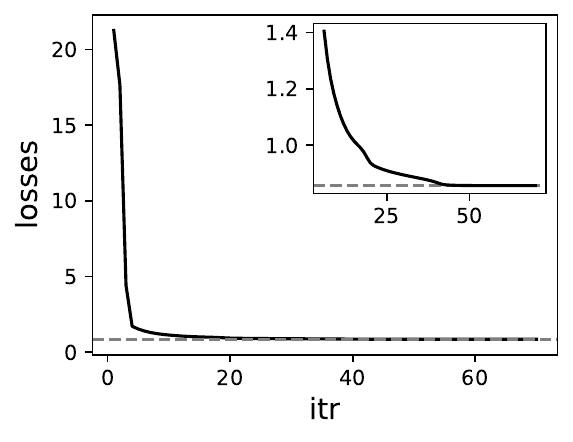} \\
    \caption{\textit{Evolution of order parameters during convergence to a $(n-1)$-C-$1$-A solution: top $n=2$, middle $n=4$, bottom $n=8$; and $k=64$; representative seeds.} 
    We can distinguish $3$ phases: before iteration $20$, after iteration $20$, and beyond iteration $40$. 
    We observe that in the first phase of training (less than $5$ iterations), the student neurons do not specialize into teacher neurons but approach the one-neuron solution. 
    For $n=2$ (top row), in the second phase, we observe that the first neuron implements an average of teacher neurons and the second neuron implements a copy of the remaining teacher neuron.
    In the second phase, in general, $n-1$ neurons specialize to match one teacher neuron each (or its negative equivalent) and the $n$-th neuron splits its correlations into two groups: those that correspond to the teacher neurons being matched become negative and the others collapse on each other. 
    Finally, in the third phase, the negative correlations converge to zero correlation, decoupling the student neurons from each other. 
    All student neuron correlation signs can be flipped as long as the corresponding outgoing weight signs are flipped since the erf activation is odd. 
    These examples illustrate the green regime (see Fig.~\ref{fig:erf-up-nets} in the main text). \label{fig:order-param-traj1}}
\end{figure}

In this section, we first numerically investigate whether the CA-optimal critical point is a 
saddle or minimum in Figure~\ref{fig:optimal-CA-min-eig}. 
Surprisingly, we find that the point turns from a saddle point to a minimum point in some regimes of $(n,k)$, despite that the configuration has the same structure of copying $n-1$ teacher neurons and taking an average of the remaining teacher neurons. 
When $k-n$ is fixed, and for large $n$, Figure~\ref{fig:optimal-CA-min-eig} explains why gradient flow does not converge to the CA-optimal point. 

We show some representative trajectories of gradient flow, in the regime when the CA-optimal critical point is a minimum in Figure~\ref{fig:order-param-traj1} and in the regime when it is a saddle point in Figure~\ref{fig:order-param-traj2}. 

\begin{figure}[t!]
     \centering
     \includegraphics[width=0.24\textwidth]{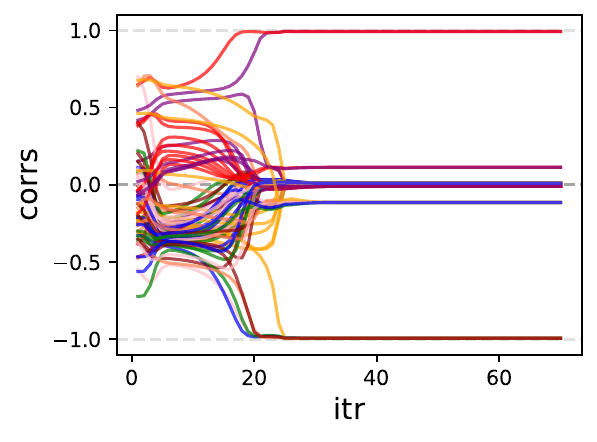}
     \hfill
     \includegraphics[width=0.24\textwidth]{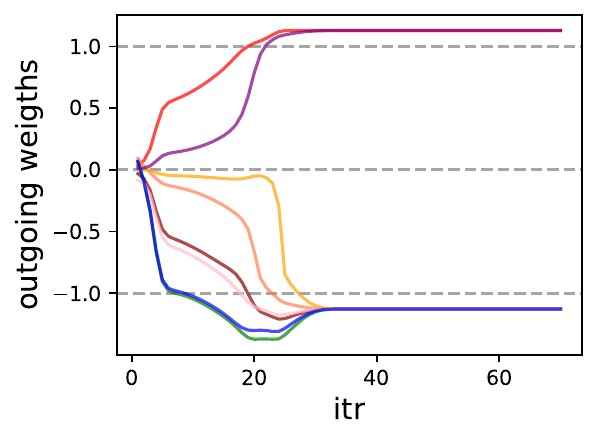}
     \hfill
     \includegraphics[width=0.24\textwidth]{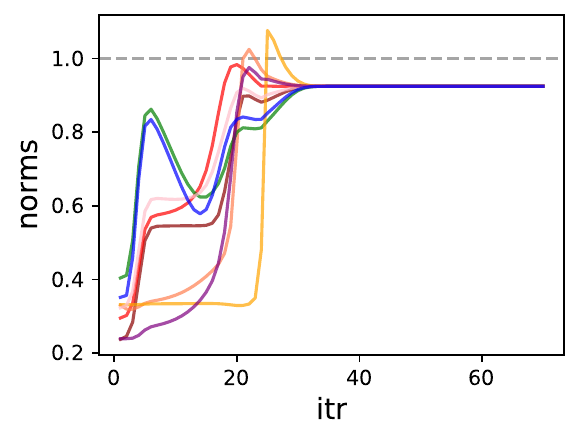} 
     \hfill
     \includegraphics[width=0.24\textwidth]{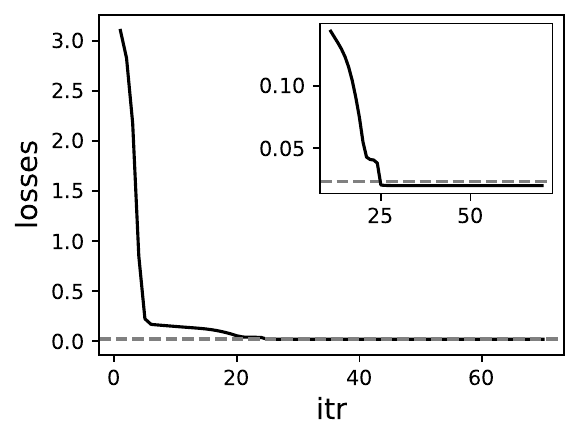} 
    \caption{\textit{Evolution of order parameters; P-$n$-C solution: for $n=8$ and $k=9$.} We observe that all student neurons match one teacher neuron in this case, however not perfectly at the end of training. 
    This is an example of the red regime (see Fig.~\ref{fig:erf-up-nets} in the main text), where the students converge to a perturbation of the $n$-copy configuration. 
    \label{fig:order-param-traj2}}
\end{figure}

\subsection{ReLU Experiments}
\label{app-sec:relu-exps}

\begin{figure}[t!]
     \includegraphics[width=0.9\textwidth]{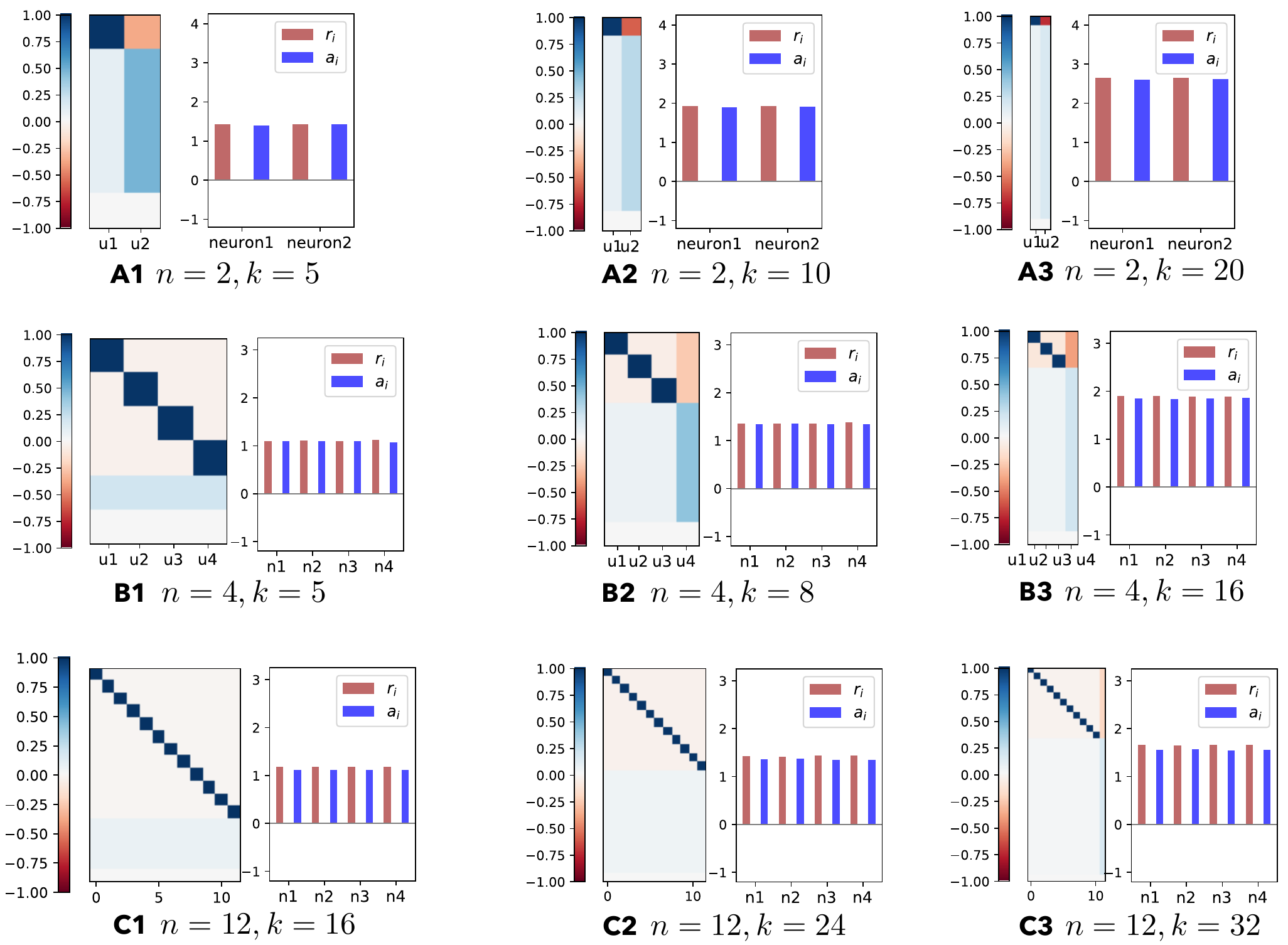}
    \caption{\textit{Optimal configuration found by gradient flow for ReLU activation function for $n=2$ (top); $n=4$ (middle); $n=12$ (bottom).} The last row of the correlation matrices represents $1-\| u_i \|$ which is zero for all student neurons in all cases. 
    \textit{Top row, $n=2$;} the optimal point is composed of a copy and an average neuron: the copy neuron is very close to one of the teacher neurons, and the average neuron is close to the average of the remaining teacher neurons while negatively correlating with the copied teacher neuron. 
    The negative correlation increases in magnitude as $k$ increases. 
    \textit{Middle row, $n=4$;} for $k=5$, the optimal point is a perturbation of the all-copy configuration; for $k=8, 16$, it is close to the $(n-1)$-copy-$1$-average configuration. 
    \textit{Bottom row, $n=12$;} for $k=16, 24$, the optimal point is a perturbation of the all-copy configuration; for $k=32$, it is close to the $(n-1)$-copy-$1$-average configuration. 
    In all regimes, the norms and outgoing weights of all student neurons are close to each other (for the bottom row only the first four neurons are shown). 
    \label{fig:order-param-traj-relu}}
\end{figure}

In this subsection, we will present the structure of the minimum loss configuration found by gradient flow. 
In the regime $n \ll k$, the minimum loss configuration is qualitatively similar to the optimal-CA solution but without perfect decoupling. 
For $k$ that is slightly bigger than $n$, the gradient flow finds an "all-copy" configuration for $n=4, 12$. 
The overall trend of the minimum loss configuration is similar to the case of erf activation; however, as we do not have the analytic formula of correlations, a theoretical prediction for the optimal solution is left for future work.

\section{Constrained Optimization Formulation}
\label{app-sec:constrained-optim}

Expressing the loss function in terms of the order parameters yields the `projected' loss function
\begin{align}\label{eq:loss-reparam}
    \Lprojnk = \sum_{i=1}^n \underbrace{a_i^2 g_\sigma(r_i, r_{i}, 1)}_{\text{student magnitude term}} + 
    2 \sum_{i \neq i'} \underbrace{a_i a_{i'} g_\sigma(r_i, r_{i'}, \rho_{i i'})}_{\text{student-student interaction}} 
    - 2 \sum_{i=1}^n \sum_{j=1}^k \underbrace{a_i b_j g_\sigma(r_i, \| v_j \|, u_{ij})}_{\text{student-teacher interaction}} + C
\end{align}
where the constant term $C=\E_{x \sim \D} [f^*(x)^2]$. 
$\Lproj$ has $n(k+2) + \binom{n}{2}$ parameters, that is $n$ output weights and $n(k+1) + \binom{n}{2}$ order parameters, 
instead of $n(d+1)$ parameters of the original loss function. 
For $d \gg k + 1 + \frac{n-1}{2}$, $\Lproj$ has significantly less number of parameters. 

In the special case $d=k$, the incoming vectors can be expressed as a linear combination of the teacher's incoming vectors, hence the correlations between them are not free (see Appendix~\ref{sec:equality-constraints}).

Each normalized incoming vector can be expressed as a sum of its projection on the span of the teacher's incoming vectors and an orthogonal component 
\begin{align}\label{eq:expansion-student}
    \frac{w_i}{r_i} = \sum_{j=1}^k u_{ij} v_j + v_i^{\perp}, \quad \sum_{j=1}^k u_{ij}^2 \leq 1,
\end{align}
and the inequality constraint pops up since $\| u_i \|^2 = 1 - \| v_i^{\perp} \|^2 $ where $u_i=(u_{i1}, \ldots, u_{ik})$.
For the correlations between the incoming vectors, we get 
\begin{align}
    \rho_{i i'} = u_{i} \cdot u_{i'} + v_i^{\perp} \cdot v_{i'}^{\perp}, 
\end{align}
which yields the second set of constraints on the optimization problem 
\begin{align}\label{eq:constraint2}
    \bigg|\rho_{i i'} - u_{i} \cdot u_{i'} \bigg| \leq \sqrt{1-\| u_i \|^2} \sqrt{1-\| u_{i'} \|^2} \quad \forall i' \!\neq\! i \in [n],
\end{align}
since we have $ |v_i^{\perp} \cdot v_{i'}^{\perp}| \leq \| v_i^{\perp} \| \| v_{i'}^{\perp} \| $ due to the Cauchy-Schwarz inequality. 

We note if all incoming vectors are in the span of the teacher's incoming vectors, we have that $\| u_i \| = 1 $. 
As a result, the second set of inequalities in Eq.~\ref{eq:constraint2} collapse onto equalities, hence the secondary constraints are in fact equality constraints (see Appendix Eq.~\ref{eq:optim-eq-const}). 
We show that this is indeed the case for the non-trivial critical points of the one-neuron network in Section~\ref{app-sec:one-neuron-net}. 

In general, the constrained optimization formulation is possible for non-orthogonal teacher networks. 

\begin{remark}\label{rem:non-orthogonal}(Non-orthogonal teacher network) We can relax the assumption of orthogonality between $v_1, \ldots, v_k$ to linear independence. Let us collect the incoming vectors into a matrix $V=[v_1, \ldots, v_k] \in \R^{d \times k}$.  
The expansion in Eq.~\ref{eq:expansion-student} can be rewritten as 
\begin{align}\label{eq:expansion-student-relaxed}
    \frac{w_i}{r_i} = \sum_{j=1}^k \gamma_{ij} v_j + v_i^{\perp} = V \Gamma_i + v_i^{\perp}
\end{align}
where $\Gamma_i=(\gamma_{i1}, \ldots, \gamma_{i k}) \in \R^k $ and $v_i^{\perp} \cdot v_j =0 $ for all $j \in [k]$. 
The normalized vector has a unit norm, hence we have 
\begin{align}\label{eq:middle-constraint}
    \| V \Gamma_i + v_i^{\perp} \| ^2 = \Gamma_i^T V^T V \Gamma_i + \| v_i^{\perp} \| ^2 = 1. 
\end{align}
The correlation vector can be written as $u_i= V^T V \Gamma_i \in \R^k $
which yields the following constraint
\begin{align}
    u_i^T (V^T V)^{-1} u_i \leq 1. 
\end{align}
\end{remark}

\subsection{Equality Constraints in the Case $d=k$}
\label{sec:equality-constraints}

In this case, the teacher incoming vectors $v_1, \ldots, v_k$ span the input domain $\R^d$. 
Each incoming vector (of the student) can be expressed as a linear combination of the teacher's incoming vectors
\begin{align}
    \frac{w_i}{r_i} = \sum_{j=1}^k u_{ij} v_j, \quad \sum_{j=1}^k u_{ij}^2 = 1,
\end{align}
and the equality constraint pops up since the normalized vector has a unit $\ell_2$ norm. 
The correlations between the incoming vectors are then expressed in terms of the student-teacher correlations
\begin{align}
    \rho_{i i'} = u_{i} \cdot u_{i'}.
\end{align}
Therefore the optimization problem is equivalent to 
\begin{align}\label{eq:optim-eq-const}
    \min \ \ & \sum_{i=1}^n a_i^2 g_\sigma(r_i, r_{i}, 1) + 
    2 \sum_{i \neq i'} a_i a_{i'} g_\sigma(r_i, r_{i'}, \sum_{j=1}^k u_{ij} u_{i' j}) 
    - 2 \sum_{i=1}^n \sum_{j=1}^k a_i b_j g_\sigma(r_i, \| v_j \|, u_{ij}) \notag \\
    \text{subject to} \ \ & \sum_{j=1}^k u_{ij}^2 = 1, \ \ r_i \geq 0, \quad \text{for all} \ \ i \in [n].
\end{align}
Since $\binom{n}{2}$ student-student correlations terms are not free, the problem has only  $n(k+2)$ free parameters and $k$ equality constraints, yielding $n(k+1)$ effective parameters, which is the same number as the number of parameters of the original problem in the weight-space. 

\subsection{Binary-Equality Constraints in the Case $d=k+1$}

In this case, there is only one direction orthogonal to the span of the teacher's incoming vectors  (i.e. $v_i^{\perp} \parallel v_{i'}^{\perp}$). 
Therefore the general inequality constraint on $\rho_{i i'}$ reduces to 
\begin{align}
    \rho_{i i'} = u_{i} \cdot u_{i'} \pm \sqrt{1-\| u_i \|^2} \sqrt{1-\| u_{i'} \|^2}. 
\end{align}

\subsection{Three-Neuron Network}
\label{app-sec:three-neuron-case}

We present a case for the three-neuron network where the optimal solution of the constrained optimization problem may not be projected back to the weight space. In particular, let us consider a positive and monotonic activation function, i.e. $\sigma(x)>0$. 
This implies that $g_\sigma>0$ and that $g_\sigma$ is increasing in correlation. 
It is natural to expect that all outgoing weights are positive since this would bring the network function closer to the target function. 
If the network is overparameterized, some outgoing weights may be zero or even negative (balanced by a positive outgoing weight corresponding to the same incoming vector). 
Let us pick three positive outgoing weights $a_1, a_2, a_3>0$ and three corresponding student-student interaction terms 
\begin{align}
    &\text{minimize} \ \ a_1 a_2 g_\sigma(r_1, r_2, \rho_{12}) + a_1 a_3 g_\sigma(r_1, r_3, \rho_{13}) + a_2 a_3 g_\sigma(r_2, r_3, \rho_{23})+..., \notag \\
    &\text{subject to} \ \ | \rho_{i i'} - u_i \cdot u_{i'} | \leq \sqrt{1-\|  u_i \|^2} \sqrt{1-\|  u_{i'} \|^2}.
\end{align}
Note that each $\rho_{i i'}$ is decoupled from each other. 
Since $g_\sigma$ is increasing in correlation, the minimum of each term above is achieved when 
\begin{align}
    \rho_{i i'} = u_i \cdot u_{i'} - \sqrt{1-\|  u_i \|^2} \sqrt{1-\|  u_{i'} \|^2}. 
\end{align}
This implies that the inequality is tight and therefore 
$ v_i^\perp \parallel v_{i'}^\perp $ moreover, 
\begin{align}
    v_i^\perp = \sqrt{1-\|  u_i \|^2} v^\perp \quad \text{and} \quad
    v_{i'}^\perp = -\sqrt{1-\|  u_{i'} \|^2} v^\perp
\end{align}
up to a sign flip. However, it is not possible that the three vectors all have pairwise flipped directions to each other as we would have $(-) \cdot (-) = (+)$. 
If the optimal solution of the constrained optimization problem verifies $\|  u_i \|^2 < 1$, then we conclude that it cannot be mapped back to the weight space; in this case, the optimal $\Lproj$ would only give a lower bound on the optimal loss of the weight-space. 

\section{Copy-Average Critical Points}
\label{app-sec:CA-points}

\begin{theorem} Assume that $\sigma$ is the erf activation function. We pick a copy-average parameter vector
\begin{align}
    \theta^* = (w^*_1, a^*_1) \oplus ... \oplus (w^*_n, a^*_n)
\end{align}
where $(w^*_i, a^*_i)$ is a non-trivial critical point when learning from a unit-orthonormal teacher $f^{*}_{i}$ with the incoming vectors $v_{s_{i-1}+1}, ..., v_{s_{i}}$ shown in Eq.~\ref{eq:sub-teacher}. 
Then $\theta^*$ is a critical point of the loss function $L^{n, k}$ where the target function is $f^*(x)=\sum_{j=1}^k \sigma(v_j \cdot x)$. 
\end{theorem}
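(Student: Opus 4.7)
The plan is to verify $\nabla L^{n,k}(\theta^*)=0$ directly, exploiting the orthogonal decomposition produced by Proposition~\ref{prop:in-span} together with the odd symmetry of erf. Applying Proposition~\ref{prop:in-span} to each sub-critical point forces $w_i^* \in V_i := \mathrm{span}\{v_j : j \in S_i\}$ with $S_i = \{s_{i-1}+1,\dots,s_i\}$; taking the partition to cover all teacher directions (i.e.\ $s_n=k$, which is the case of interest and is needed for the full-teacher gradient to vanish), the subspaces $V_1,\dots,V_n$ are mutually orthogonal, so the projections $P_i x$ of a standard Gaussian $x$ onto $V_i$ are mutually independent. I introduce the sub-residuals
\begin{align*}
e_{i'}(x) := a_{i'}^*\, \sigma(w_{i'}^* \cdot x) - \sum_{j \in S_{i'}} \sigma(v_j \cdot x),
\end{align*}
each of which depends only on $P_{i'}x$ and is odd in it, so $\E[e_{i'}(x)]=0$ and the global residual splits as $f-f^* = \sum_{i'} e_{i'}$.

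With this decomposition, the outgoing-weight gradient expands as $\tfrac{1}{2}\partial_{a_i} L = \sum_{i'} \E[\sigma(w_i^* \cdot x)\, e_{i'}(x)]$: the diagonal term $i'=i$ is $\tfrac{1}{2}\partial_a L^{1,\ell_i}_i$ at $(w_i^*,a_i^*)$, hence zero, and every off-diagonal term factors by independence into $\E[\sigma(w_i^*\cdot x)]\cdot\E[e_{i'}]=0\cdot 0$. For the incoming-vector gradient, $\tfrac{1}{2 a_i^*} \partial_{w_i} L = \sum_{i'} \E[\sigma'(w_i^* \cdot x)\, x\, e_{i'}(x)]$, the diagonal term again vanishes by the sub-critical $w$-equation. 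For $i'\neq i$ I test against an arbitrary direction $v$ split into components lying in $V_i$, $V_{i'}$, or the orthogonal complement of both: in the first case independence factors off $\E[e_{i'}]=0$, and in the third case it factors off $\E[v\cdot x]=0$, so only one case remains.

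The genuinely delicate case is $v\in V_{i'}$ with $i'\neq i$. Here independence yields only
\begin{align*}
\E[\sigma'(w_i^* \cdot x)\,(v\cdot x)\, e_{i'}(x)] = \E[\sigma'(w_i^*\cdot x)]\cdot \E[(v\cdot x)\, e_{i'}(x)],
\end{align*}
and since $\E[\sigma_{\mathrm{erf}}'(w_i^*\cdot x)]>0$ the whole burden sits on the second factor. Applying Stein's lemma rewrites it as $v\cdot \E[\nabla_x e_{i'}(x)]$, so the theorem reduces to establishing the vector identity
\begin{align*}
a_{i'}^*\, \E[\sigma'(w_{i'}^*\cdot x)]\, w_{i'}^* = \sum_{j\in S_{i'}} \E[\sigma'(v_j\cdot x)]\, v_j
\end{align*}
at the sub-critical point. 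This is the main obstacle and where the proof becomes erf-specific: I would combine the closed form $\E[\sigma_{\mathrm{erf}}'(w\cdot x)] = \sqrt{2/\pi}/\sqrt{1+\|w\|^2}$ with the explicit sub-critical parameters from Corollary~\ref{corr:erf} (namely $\|w_{i'}^*\|^2=1/(2\ell_{i'}-1)$, $a_{i'}^*=\ell_{i'}$, and $w_{i'}^*/\|w_{i'}^*\| = \epsilon_{i'}\ell_{i'}^{-1/2}\sum_{j\in S_{i'}} v_j$) and verify by direct substitution that both sides reduce to $\epsilon_{i'}\sqrt{1/\pi}\sum_{j\in S_{i'}} v_j$. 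A general odd activation does not produce this cancellation, which is exactly why the statement is restricted to erf. Once the identity is in hand, every component of $\partial_{w_i} L$ vanishes, and combined with $\partial_{a_i}L=0$ this shows $\theta^*$ is a critical point of $L^{n,k}$.
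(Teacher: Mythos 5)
Your proposal is correct and follows essentially the same route as the paper's proof: decompose the residual into independent, odd, mean-zero sub-residuals supported on the mutually orthogonal blocks, kill all cross terms by independence and oddness of erf, and handle the remaining within-block term via Stein's lemma together with the closed form $\E[\sigma_{\mathrm{erf}}'(w\cdot x)]=\sqrt{2/\pi}/\sqrt{1+\|w\|^2}$ and the explicit critical-point parameters of Corollary~\ref{corr:erf} --- your vector identity $a_{i'}^*\E[\sigma'(w_{i'}^*\cdot x)]\,w_{i'}^*=\sum_{j\in S_{i'}}\E[\sigma'(v_j\cdot x)]\,v_j$ is exactly the paper's scalar check $a_j^*r_j^*u^*\E[\sigma'(r_j^*Z)]=\E[\sigma'(Z)]$ read off componentwise. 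Your added observation that the partition must satisfy $s_n=k$ is well taken and sharper than the paper's statement (which allows $\sum_i\ell_i\le k$): a leftover teacher neuron $v_j$ contributes $-2a_i^*\E[\sigma'(w_i^*\cdot x)]\,\E[\sigma_{\mathrm{erf}}'(Z)]\neq 0$ to the $v_j$-component of the $w_i$-gradient, so criticality genuinely fails otherwise; the only nit is that the common value of your identity is $\sqrt{1/\pi}\sum_{j\in S_{i'}}v_j$ with no factor $\epsilon_{i'}$, since $a_{i'}^*w_{i'}^*$ and the right-hand side are both invariant under the sign flip.
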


\begin{proof} Let us write down the partial derivatives with respect to the outgoing weights and incoming vectors 
\begin{align}\label{eq:partial-der}
	\frac{d}{d a_i} \Lnk (\theta^*) &= 2 \E_{x \sim \D} [\sigma(w_i^* \cdot x) (\sum_{j=1}^n a_j^* \sigma(w_j^* \cdot x)  - f^*(x))], \notag \\
	\frac{d}{d w_i} \Lnk (\theta^*) &= 2 a_i^* \E_{x \sim \D}  [\sigma'(w_i^* \cdot x) x (\sum_{j=1}^n a_j^* \sigma(w_j^* \cdot x) - f^*(x))].
\end{align}

We will show that they are equivalent to the following 
\begin{align}\label{eq:one-neuron-ders}
	\frac{d}{d a_i} \Lnk (\theta^*) &= 2 \E_{x \sim \D} [\sigma(w_i^* \cdot x) (a_i^* \sigma(w_i^* \cdot x)  - f_i^*(x))], \notag \\
	\frac{d}{d w_i} \Lnk (\theta^*) &= 2 a_i^* \E_{x \sim \D}  [\sigma'(w_i^* \cdot x) x (a_i^* \sigma(w_i^* \cdot x) - f_i^*(x))], 
\end{align}
which implies that the partial derivatives are zero, since $(w_i^*, a_i^*)$ is a critical point of the loss 
\begin{align}
	L^{1, \ell_i} = \E_{x \sim \D}[(a \sigma(w \cdot x) - f_i^*(x))^2]. 
\end{align}

Since $(w_i^*, a_i^*)$ is the optimal solution of the teacher network generated by $v_{s_{i-1}+1}, ..., v_{s_{i}}$, from Theorem~\ref{thm:uniqueness}, we have that $w_i^*$ is in the span of $v_{s_{i-1}+1}, ..., v_{s_{i}}$. 
We have that
\begin{align}
	w_i^* \cdot w_{i'}^* = 0 \quad \text{and} \quad  w_i^* \cdot v_j = 0 \quad \text{for} \quad j \in [k] \setminus [s_{i-1}+1, s_{i}]
\end{align}
since the two incoming vectors are in the span of two orthogonal subspaces respectively and $w_i^*$ is orthogonal to all other teacher incoming vectors that are outside of the span of $v_{s_{i-1}+1}, ..., v_{s_{i}}$. For two orthogonal vectors say $w_i^*$ and $v$, we have that 
\begin{align}
	\E_{x \sim \D} [\sigma_1(w_i^* \cdot x) \sigma_2(v \cdot x)] = \E_{x \sim \D} [\sigma_1(w_i^* \cdot x)] \E_{x \sim \D} [\sigma_2(v \cdot x)] 
\end{align}
which is zero if at least one of $\sigma_1$ and $\sigma_2$ is odd. This implies the first equation in \ref{eq:one-neuron-ders} for the partial derivatives with respect to the outgoing weights. In order to show the second equation for the partial derivatives with respect to the incoming vectors, let us define $S_j = [s_{j-1}+1, s_{j}]$, 
\begin{align}
	W_{i,j} = \E_{x \sim \mathcal{D}}[\sigma'(w_i^* \cdot x) x (a_j^* \sigma(w_j^* \cdot x) - \sum_{k \in S_j} \sigma(v_k \cdot x) ) ], 
\end{align}
and note that it suffices to show that $W_{i,j}=0$ for all $i \neq j$. 
This is true if and only if $W_{i,j} \cdot \bar{v}_\ell =0 $ where 
$\{\bar{v}_1, ..., \bar{v}_d \}$ form an orthogonal basis of $\R^d$. Let us choose $\bar{v}_1=v_1,...,\bar{v}_k=v_k$ and that $\bar{v}_{k+1}, ..., \bar{v}_{d}$ completes the basis if $d>k$. 
One can observe that $W_{i,j} \cdot \bar{v}_\ell = 0$ for $k+1 \leq \ell \leq d$ since $ x \cdot \bar{v}_\ell$ is an independent Gaussian from the others. 
Hence, the expectation, i.e. $W_{i,j} \cdot \bar{v}_\ell$, factorizes with a factor of $\E [x \cdot \bar{v}_\ell]$ which is zero. 

It remains to check $W_{i,j} \cdot v_l = 0$ for all $v_l$'s. We split the analysis into two cases. If $v_l \notin S_j$, then $x \cdot v_l$ is independent from $x \cdot w_j^*$ and from $x \cdot v_k$ for $k \in S_j$. Hence $W_{i,j}$ splits into

\begin{align}
	W_{i,j} \cdot v_l = \E_{x \sim \mathcal{D}}[\sigma'(w_i^* \cdot x) x \cdot v_l ] \E_{x \sim \mathcal{D}}[(a_j^* \sigma(w_j^* \cdot x) - \sum_{k \in S_j} \sigma(v_k \cdot x) ) ] = 0, 
\end{align}
where the second term in the product is zero because $w_j^*\cdot x$ and $v_k \cdot x$ are centered Gaussian and $\sigma$ is odd. For the second case, where $v_l \in S_j$, using the fact that $x \cdot v_l$ is independent from $w_i^* \cdot x$ and from $x \cdot v_k$ for $l \neq k$, we have 
\begin{align}
	W_{i,j} \cdot v_l & = \E_{x \sim \mathcal{D}}[\sigma'(w_i^* \cdot x) ] \E_{x \sim \mathcal{D}}[x \cdot v_l (a_j^* \sigma(w_j^* \cdot x) - \sum_{k \in S_j} \sigma(v_k \cdot x) ) ]  \\ &=  \E_{x \sim \mathcal{D}}[\sigma'(w_i^* \cdot x) ] \E_{x \sim \mathcal{D}}[x \cdot v_l (a_j^* \sigma(w_j^* \cdot x) - \sigma(v_l \cdot x) )]  \\ &= \E_{x \sim \mathcal{D}}[\sigma'(w_i^* \cdot x) ] \Big(\E_{x \sim \mathcal{D}}[ a_j^* \sigma(w_j^* \cdot x) x \cdot v_l ] -  \E_{x \sim \mathcal{D}}[\sigma(v_l \cdot x) x \cdot v_l ]\Big). 
\end{align}
Applying Stein's Lemma to both terms on the right we have 
\begin{align}
	W_{i,j} \cdot v_l = \E_{x \sim \mathcal{D}}[\sigma'(w_i^* \cdot x) ] \Big(\E[ a_j^*r_j^*u^* \sigma'(r_j^* Z)] -  \E[\sigma'(Z)]\Big), 
\end{align}
	where $Z$ is standard Gaussian and $r_j^*=\sqrt{\frac{1}{2k-1}}$, $u^*=\sqrt{\frac{1}{k}}$, $a_j^*=k$ (parameters of erf). Hence we want to show that
\begin{align}\label{eq:eqn-to-check}
	a_j^* r_j^* u^* \E [\sigma'(r_j^* Z)] = \E [\sigma'(Z)].
\end{align}
To show this, we use the following relation \cite{saad1995line, goldt2019dynamics}
\begin{align}
	g_{\text{erf}}(r, r, u) = \E[\sigma(rx) \sigma(ry) ]
	= \frac{2}{\pi} \arcsin \left(\frac{r^2 u}{r^2+1} \right).
\end{align}
Differentiating with respect to the correlation $u$ we have 
\begin{align}
	&\frac{d}{du} g_{\text{erf}}(r, r, u) = r^2 \E[ \sigma'(rx) \sigma'(ry)]= \frac{2}{\pi} \frac{1}{\sqrt{1-u^2 \left(\frac{r^2}{r^2+1}\right)^2} } \frac{r^2}{(r^2+1)}.
\end{align}
In particular, at correlation zero, we get 
\begin{align}
	\E[ \sigma'(rx) ]= \sqrt{\frac{2}{\pi} \frac{1}{(r^2+1)}}
\end{align}
Therefore, we have
\begin{align}
	\E[ \sigma'(r_j^* x) ]= \sqrt{\frac{2}{\pi} \left(\frac{2k-1}{2k}\right)},  \quad  \E[\sigma'(x) ] = \sqrt{\frac{1}{\pi}}, 
\end{align}
which implies \ref{eq:eqn-to-check} and the proof is complete.
\end{proof}

For general activation functions, using the substitution in Eq.\ref{eq:one-neuron-ders}, the first partial derivatives in Eq.~\ref{eq:partial-der} reduce to 
\begin{align}\label{eq:general-partial-der}
    \frac{d}{d a_i} \Lnk (\theta^*) &= 2 \sum_{i \neq i' } a_{i'}^* g_\sigma(\| w_i^*\|, \| w_{i'}^*\|, 0) - 2 (k - \ell_i) g_\sigma(\| w_i^*\|, 1, 0) 
\end{align}
which is in general non-zero if $\sigma$ is not odd.

\begin{lemma}\label{lem:cost-of-missing} Assume $\ell_2 > \ell_1 \geq 1$. We have that 
\begin{align}
    L_{\text{erf}}^*(\ell_2+1) - L_{\text{erf}}^*(\ell_2) <  L_{\text{erf}}^*(\ell_1+1) - L_{\text{erf}}^*(\ell_1).
\end{align}
\end{lemma}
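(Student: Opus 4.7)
The plan is to substitute the closed-form expression from Corollary~\ref{corr:erf} and reduce the claim to the strict discrete concavity of $L_{\text{erf}}^*$, which in turn follows from continuous concavity on $[1,\infty)$. Writing $L_{\text{erf}}^*(\ell) = \frac{2}{\pi}\bigl(\ell\,\arcsin(1/2) - h(\ell)\bigr)$ with $h(\ell) := \ell^2 \arcsin(1/(2\ell))$, the linear term $\ell\,\arcsin(1/2)$ contributes zero to any second difference, so the inequality $L_{\text{erf}}^*(\ell_2+1) - L_{\text{erf}}^*(\ell_2) < L_{\text{erf}}^*(\ell_1+1) - L_{\text{erf}}^*(\ell_1)$ for $\ell_2 > \ell_1 \geq 1$ is equivalent to saying that $\ell \mapsto h(\ell+1) - h(\ell)$ is strictly increasing on integers $\ell \geq 1$. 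A convenient sufficient condition is that $h$ is strictly convex on $[1,\infty)$ as a smooth function of a real variable, since then
\[
h(\ell+1) - 2h(\ell) + h(\ell-1) = \int_0^1\!\!\int_0^1 h''(\ell - 1 + s + u)\, du\, ds > 0.
\]

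Thus the task reduces to showing $h''(k) > 0$ for all $k \geq 1$. I would make the change of variables $t = 1/(2k) \in (0, 1/2]$, which turns the derivatives of $\arcsin(1/(2k))$ into standard $\arcsin$ derivatives of $t$. Carrying out two differentiations gives a three-term expression, which (after multiplying by positive quantities) takes the form
\[
p(t) := 2\arcsin(t) - \frac{2t}{\sqrt{1-t^2}} + \frac{t^3}{(1-t^2)^{3/2}}.
\]
Since $p(0) = 0$, it suffices to show $p'(t) > 0$ on $(0, 1/2]$. The three derivative pieces share a common denominator $(1-t^2)^{5/2}$, and after collecting numerators the miraculous cancellation produces
\[
p'(t) = \frac{t^2(1 + 2t^2)}{(1-t^2)^{5/2}},
\]
which is manifestly positive. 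This completes the chain of implications back to the original inequality.

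The main obstacle is not conceptual but algebraic: the individual derivative terms in $p'(t)$ are not small, and only after careful cancellation does the clean positive form emerge. Once that simplification is in hand, every other step is a one-line argument. A final sanity check is that $t = 1/(2k) \leq 1/2$ for $k \geq 1$, so the entire argument stays well inside the domain of analyticity and the integral representation of the discrete second difference is valid throughout $[\ell_1, \ell_2+1] \subset [1,\infty)$.
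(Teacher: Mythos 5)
Your proof is correct and follows essentially the same route as the paper: both reduce the claim to strict concavity of $L_{\text{erf}}^*$ by noting the linear term $\ell \arcsin(1/2)$ drops out of second differences, so everything hinges on convexity of $h(\ell)=\ell^2\arcsin(1/(2\ell))$. The only difference is how that convexity is verified — the paper expands $\arcsin$ as a Taylor series and observes that every term of $h''$ has a positive coefficient, whereas you compute $h''$ in closed form and, via $t=1/(2k)$, reduce it to $p(0)=0$ and $p'(t)=t^2(1+2t^2)/(1-t^2)^{5/2}>0$; I checked this algebra and it is right.
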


\begin{proof} We first show that the function $x^2 \arcsin (\frac{1}{2x})$ is increasing for $x\geq 1$ and convex for $x> 0$.
Using the Taylor expansion of $\arcsin$, we have that 
\begin{align}
    f(x) = x^2 \arcsin \bigl( \frac{1}{2x} \bigr) = \frac{x}{2} + \frac{1}{2 \cdot 3} \frac{1}{2^3 x} + \frac{1 \cdot 3}{2 \cdot 4 \cdot 5} \frac{1}{2^5 x^3} + ...
\end{align}
where the higher-order terms all have positive coefficients. The first derivative is
\begin{align}
    f'(x) = \frac{1}{2} - \frac{1}{2 \cdot 3} \frac{1}{2^3 x^2} - \frac{1 \cdot 3 \cdot 3}{2 \cdot 4 \cdot 5} \frac{1}{2^5 x^4} + ...
\end{align}
which is positive for $x\geq 1$ since we have 
\begin{align}
    \frac{1}{2 \cdot 3} \frac{1}{2^3 x^2} + \frac{1 \cdot 3 \cdot 3}{2 \cdot 4 \cdot 5} \frac{1}{2^5 x^4} + ... < 
    \frac{1}{2^2 x^2} + \frac{1}{2^4 x^4} + ... \leq \frac{1}{4} + \frac{1}{4^2} + \frac{1}{4^3} + ... = \frac{1}{3}. 
\end{align}
The second derivative is 
\begin{align}
    f''(x) =  2 \frac{1}{2 \cdot 3} \frac{1}{2^3 x^3} + 4 \frac{1 \cdot 3 \cdot 3}{2 \cdot 4 \cdot 5} \frac{1}{2^5 x^3} + ... 
\end{align}
which is positive for positive $x$. 

First, let us show that Eq.~\ref{eq:use-all-teachers} holds. Plugging in the analytic expressions for $\Lerf(0, \cdot)$ and $\Lerf(\cdot)$, it is equivalent to 
\begin{align}
    \ell_0 \frac{2}{\pi} \arcsin \bigl(\frac{1}{2} \bigr) >  \ell_0 \frac{2}{\pi} \arcsin \bigl(\frac{1}{2} \bigr)
    - \frac{2}{\pi} \Bigl((\ell_1\!+\!\ell_0)^2 \arcsin \bigl(\frac{1}{2(\ell_1\!+\! \ell_0)} \bigr)- \ell_1^2 \arcsin \bigl(\frac{1}{2 \ell_1} \bigr) \Bigr). 
\end{align}
Since $f(x)$ is increasing for $x \geq 1$, the second term inside the parenthesis is positive,
hence the inequality holds.

We will now prove the statement of the Lemma. 
It suffices to show the following for all $ \ell = \ell_2 \geq 2$
\begin{align}\label{eq:decrease-by-one}
    \Lerf(\ell+1) - \Lerf(\ell) <  \Lerf(\ell) - \Lerf(\ell-1)
\end{align}
since then we can continue to decrease $\ell$ by one, i.e. $\ell-1, \ell-2, ...$, until we reach $\ell_1$. Eq.~\ref{eq:decrease-by-one} is equivalent to 
\begin{align}
    \Lerf(\ell+1)  - 2 \Lerf(\ell) +  \Lerf(\ell-1) <  0,
\end{align}
that is the second-order finite difference, similar to the second-derivative of a continuous function.
The proof is completed by observing that $\Lerf(\cdot)$ is a discrete-concave function since its continuous interpolation
\begin{align}
    \Lerf(x) = x \arcsin(\frac{1}{2}) - x^2 \arcsin(\frac{1}{2x})
\end{align}
is concave for $x>0$ since it can be written as $\Lerf(x) = \alpha x - f(x)$ where $f$ is convex for $x>0$. 
\end{proof}

Lemma~\ref{lem:cost-of-missing} tells us that if we add one neuron to the teacher, then it is better to approximate it by the student neuron that already approximates many teacher neurons.
Applying Lemma~\ref{lem:cost-of-missing} iteratively, we get
\begin{align}
    L_{\text{erf}}^*(k\!-\!1) + L_{\text{erf}}^*(1) < L_{\text{erf}}^*(k\!-\!2) + L_{\text{erf}}^*(2) < ... 
    < L_{\text{erf}}^*(\ell_2 \!+\! 1) + L_{\text{erf}}^*(\ell_1)
    < L_{\text{erf}}^*(\ell_2) + L_{\text{erf}}^*(\ell_1 \!+\! 1) \notag
\end{align}
where $ \ell_2 \!=\! \frac{k}{2}, \ell_1 \!=\! \frac{k}{2}-1$ if $k$ is even and $ \ell_2 \!=\! \frac{k+1}{2}, \ell_1 \!=\! \frac{k-3}{2}$ if $k$ is odd. 

\begin{theorem} \label{app-thm:optimal-CA} Consider a unit-orthonormal teacher network $f^*(x) = \sum_{j=1}^k \sigma (v_j \cdot x)$ and the erf activation function. 
For an under-parameterized student network with $n$ neurons, the minimum-loss copy-average configuration up to permutations (of the student and teacher neurons) is 
\begin{align}
    \theta = (\epsilon_1 v_1, \epsilon_1) \oplus ... \oplus (\epsilon_{n-1} v_{n-1}, \epsilon_{n-1}) \oplus (\epsilon_n w_n^*, \epsilon_n a_n^*) 
\end{align}
where $\epsilon_i \in \{ \pm 1 \}$ and $(w_n^*, a_n^*)$ is given by Corollary~\ref{corr:erf} after substituting $k$ with $k\!-\!n\!+\!1$.
\end{theorem}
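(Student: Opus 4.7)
The plan is to decompose the loss of any CA configuration into a sum of per-subgroup one-neuron losses plus a term for uncovered teacher neurons, and then to invoke Lemma~\ref{lem:cost-of-missing} (discrete concavity of $\Lerf$) to identify the optimizing partition by a swap argument.

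First I would establish the additive decomposition. Let $\theta^*$ be a CA point whose $n$ student neurons cover disjoint subgroups of sizes $\ell_1, \ldots, \ell_n \geq 1$ of teacher neurons, with $\ell_0 = k - \sum_i \ell_i \geq 0$ teacher neurons left uncovered. By Proposition~\ref{prop:in-span}, each $w_i^*$ lies in the span of its subgroup's teacher incoming vectors, and these spans are mutually orthogonal by construction. Since erf is odd, every cross term $\E[\sigma_{\text{erf}}(u \cdot x)\sigma_{\text{erf}}(u' \cdot x)]$ with $u \perp u'$ vanishes, and expanding the squared loss gives
\begin{align*}
    \Lnk(\theta^*) \;=\; \sum_{i=1}^n \Lerf(\ell_i) \;+\; \Lerf(0, \ell_0).
\end{align*}

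Next I would reduce to the case $\ell_0 = 0$. Applying Eq.~\ref{eq:use-all-teachers} shows that merging uncovered teacher neurons into any existing subgroup (i.e.\ replacing $\Lerf(\ell_i) + \Lerf(0,\ell_0)$ by $\Lerf(\ell_i+\ell_0)$) strictly decreases the total loss, so we may assume $\ell_1 + \cdots + \ell_n = k$. For any such partition containing two parts with $\ell_a \geq \ell_b \geq 2$, a one-neuron swap from the smaller to the larger changes the loss by
\begin{align*}
    \bigl[\Lerf(\ell_a+1) - \Lerf(\ell_a)\bigr] - \bigl[\Lerf(\ell_b) - \Lerf(\ell_b-1)\bigr],
\end{align*}
which is strictly negative by Lemma~\ref{lem:cost-of-missing} applied with $\ell_2 = \ell_a$ and $\ell_1 = \ell_b - 1$ (since $\ell_a \geq \ell_b > \ell_b - 1 \geq 1$). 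Iterating this swap drives every part down to size $1$ except one, producing the unique partition $(1,1,\ldots,1,k-n+1)$ up to permutation of the student and teacher neurons.

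Finally I would identify the configuration at this minimizing partition: the $n-1$ singleton groups each contribute $\Lerf(1) = 0$ and correspond to student neurons that copy the assigned teacher neuron exactly, while the remaining group of size $k-n+1$ is approximated by the one-neuron optimum $(w_n^*, a_n^*)$ given by Corollary~\ref{corr:erf}. The sign freedoms $\epsilon_i \in \{\pm 1\}$ reflect the symmetry of erf under $(w,a) \mapsto (-w,-a)$. The main obstacle is verifying that the swap argument is genuinely exhaustive and strictly loss-reducing at each step, which is where Lemma~\ref{lem:cost-of-missing} does the real work by providing strict discrete concavity of $\Lerf$; once that is in hand, the remainder is essentially bookkeeping on integer partitions.
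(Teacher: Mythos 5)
Your proposal is correct and follows essentially the same route as the paper: decompose the CA loss additively via orthogonality and oddness of erf, use Eq.~\ref{eq:use-all-teachers} to rule out uncovered teacher neurons, and apply Lemma~\ref{lem:cost-of-missing} in a swap argument to reduce the partition to $(1,\dots,1,k\!-\!n\!+\!1)$. If anything, your explicit computation of the loss change under a one-neuron swap is more detailed than the paper's brief redistribution argument.
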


\begin{proof}
We will conclude with a simple argument that the minimum-loss CA configuration for a multi-neuron network with $n$ neurons is $(n\!-\!1)$-C-$1$-A. 
In particular, if there are two averaging neurons inside the student network, we can redistribute the teacher neurons shared between these two to a lower-loss CA configuration by ensuring that one student neuron copies and the other student neuron averages (see Lemma~\ref{lem:cost-of-missing}).
The minimum-loss CA point is then achieved among CA configurations where
at least $n\!-\!1$ neurons each copy a single teacher neuron (of the $k$ possible ones).
The remaining student neuron can be treated as a single-neuron network learning from a teacher 
with $k\!-\!n\!+\!1$ neurons -- for which we know the optimal solution is to average (Theorem~\ref{thm:uniqueness}).
\end{proof}

\subsection{Number of CA Critical Points}
\label{sec:number-of-CA}

There is a combinatorial number of $(\ell_1, ..., \ell_n)$-CAC critical points, that is 
\begin{align}
    c(\ell_1, ..., \ell_n) \binom{k}{\ell_1} ... \binom{k-(\ell_1+...+\ell_n)}{\ell_n}
\end{align}
where $c_n \!:=\! c(\ell_1, ..., \ell_n)$ counts distinguishable permutations between the neurons of the student network, and the binomial coefficients stand for grouping teacher neurons into $n$ non-empty buckets. 

If $\ell_1=...=\ell_n$, permutation between the student neurons is already counted when distributing the teacher neurons, hence $c_n = 1$. 
If all $\ell_1,...,\ell_n$ are distinct from each other, we have that $c_n=n!$ since we swap all pairs of student neurons after assigning groups of teacher neurons. 
In general, let $c_i$ denote the number of $i$'s among $\ell_1, ..., \ell_n$ for all $i=1, .., k$; the formula for the permutation-factor is given by
\begin{align}
    c(\ell_1, ..., \ell_n) = \frac{n!}{c_1! ... c_{k}!}.
\end{align}

\section{General Properties of the Interaction Function}
\label{sec:interactions}

In this Section, we introduce some general properties of the interactions.
We use these only for the one-neuron network in this paper (see Section~\ref{app-sec:one-neuron-net}), however, these properties are likely to play a role in studying the networks with two or more neurons.

We first present the partial derivative of a general interaction function, i.e. two activation functions may be different, for example, if the student activation function does not match the teacher, with respect to the correlation in a simple expression in Lemma~\ref{lem:der-correlation}.
In the second part, we present a property of the activation function sufficient for Assumption~\ref{ass:interaction} (ii), and show that the differentiable activation functions studied in this paper satisfy this property in Lemma~\ref{lem:activ-functs}.

\begin{lemma}\label{lem:der-correlation} Assume that functions $\sigma_1$ and $\sigma_2$ are differentiable.
The partial derivative of the following Gaussian integral term $\E [\sigma_1(r_1x) \sigma_2(r_2y)]$ with respect to the correlation $\E [ xy ] = u$ is
\begin{align}\label{eq:der-g}
    \frac{d}{d u} \E [\sigma_1(r_1x) \sigma_2(r_2y) ] = r_1 r_2 \E [\sigma_1'(r_1 x) \sigma_2'(r_2 y) ].
\end{align}
\end{lemma}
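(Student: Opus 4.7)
The plan is to apply Gaussian integration by parts (Stein's lemma). First I would parameterize the bivariate Gaussian $(x,y)$ with correlation $u$ as $x = X_0$, $y = u X_0 + \sqrt{1-u^2}\, Y_0$, with $X_0, Y_0$ independent standard normals. This makes $u$ appear only in the argument of $\sigma_2$, so (using dominated convergence, which is justified by Gaussian tails plus any mild growth condition on $\sigma_i$ such as the polynomial-bounded derivatives enjoyed by erf, tanh, sigmoid, softplus, and ReLU) one can differentiate under the expectation to get
\begin{align*}
\frac{d}{du}\E[\sigma_1(r_1 X_0)\sigma_2(r_2 y)] = r_2 \,\E\!\left[\sigma_1(r_1 X_0)\,\sigma_2'(r_2 y)\Bigl(X_0 - \tfrac{u}{\sqrt{1-u^2}}Y_0\Bigr)\right].
\end{align*}

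Next I would split this into a term with $X_0$ and a term with $Y_0$ and apply Stein's lemma ($\E[Z g(Z,\cdot)] = \E[\partial_Z g(Z,\cdot)]$ for $Z\sim\mathcal{N}(0,1)$ independent of the remaining variable) to each. For the $X_0$-term, $\partial_{X_0}$ hits $\sigma_1$ (producing $r_1\sigma_1'$) and also $\sigma_2'$ through $\partial_{X_0} y = u$ (producing $r_2 u\,\sigma_2''$). For the $Y_0$-term, $\partial_{Y_0}$ hits only $\sigma_2'$ through $\partial_{Y_0} y = \sqrt{1-u^2}$ (producing $r_2\sqrt{1-u^2}\,\sigma_2''$). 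The two $\sigma_2''$ contributions carry total coefficient
\begin{align*}
r_2\cdot r_2 u \;-\; r_2\cdot\tfrac{u}{\sqrt{1-u^2}}\cdot r_2\sqrt{1-u^2} \;=\; 0,
\end{align*}
so they cancel exactly, leaving $r_1 r_2\,\E[\sigma_1'(r_1 x)\sigma_2'(r_2 y)]$, which is the claim.

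The main technical obstacle is purely regularity: justifying interchange of derivative and expectation and the applicability of Stein's lemma under the stated ``differentiable'' hypothesis requires modest growth control on $\sigma_i$ and $\sigma_i'$, which holds for every activation function used in the paper. A conceptually cleaner alternative, which I would mention as a remark, is to use the identity $\partial_u p_u(x,y) = \partial_x\partial_y\, p_u(x,y)$ for the bivariate Gaussian density with correlation $u$ (verified immediately from the characteristic function $e^{-(t_1^2+2u t_1 t_2 + t_2^2)/2}$, since $\partial_u = -t_1 t_2 = (it_1)(it_2)$ on the Fourier side) and then integrate by parts twice in $x$ and $y$; the chain rule on $\sigma_1(r_1 x)\sigma_2(r_2 y)$ produces the $r_1 r_2$ prefactor, yielding the same conclusion.
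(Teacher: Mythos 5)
Your argument is correct and is essentially the paper's own proof: the same decomposition $y = ux + \sqrt{1-u^2}\,z$, differentiation under the expectation, and two applications of Stein's lemma (one in $x$, one in $z$) whose $\sigma_2''$ contributions cancel with exactly the coefficients you compute. The Fourier-side remark via $\partial_u p_u = \partial_x\partial_y p_u$ is a valid alternative route, but the main argument matches the paper's.
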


We apply the Lemma for $\sigma_1=\sigma_2=\sigma$ in the main text in Eq.~\ref{eq:apply-stein-lemma}.

\begin{proof}
We compute the derivative of $\E [\sigma_1(r_1x) \sigma_2(r_2y)]$ by making the correlation $u$ explicit. Denote $u' = \sqrt{1-u^2}$ and $y=ux + u'z$.
After the computation, we use Stein's lemma to reach the desired formula.
\begin{align}
    \partial_u \E [\sigma_1(r_1x) \sigma_2(r_2y)] = r_2 \E [ \sigma_1(r_1 x)\sigma_2'(r_2 y) x ] - \frac{ r_2 u}{u'} \E [\sigma_1(r_1 x)\sigma_2'(r_2 y)  z ]
\end{align}
where $x$ and $z$ are independent standard Gaussians.
Here is a reminder for Stein's Lemma for a standard Gaussian $z$
\begin{align}
    \E [ v(z)z ] = \E [ v'(z) ].
\end{align}
To remove $x$ in the first term, we apply Stein's formula for $v(x) = \sigma_1(r_1 x)\sigma_2'(r_2 (ux + u'z))$ yielding
\begin{align}
    r_1 r_2 \E [ \sigma_1'(r_1 x)\sigma_2'(r_2 y) ] + r_2^2 u \E [ \sigma_1(r_1 x)\sigma_2''(r_2 y) ].
\end{align}
To remove $z$ in the second term, we apply Stein's formula for $v(z) = \sigma_2'(r_2(u x + u' z))$ by considering fixed $x$ which yields
\begin{align}
    -r_2^2 u  \E [ \sigma_1(r_1 x) \sigma_2''(r_2 y) ].
\end{align}
Summing up the two terms completes the proof.
\end{proof}

For softplus that is increasing and convex, using Lemma~\ref{lem:der-correlation} for $\sigma_1\!=\!\sigma_2\!=\!\sigma$ twice, we infer that the interaction $g$ is also increasing and convex in $u$.
Hence, for $u<0$, Assumption~\ref{ass:interaction} (ii) holds for softplus.
However, for the other activation functions, using second-order derivatives does not suffice to show the assumption.
We will propose a new property of the activation function that implies that the interaction satisfies Assumption~\ref{ass:interaction} (ii) and prove that softplus with $\beta\leq 2$, sigmoid, tanh, and erf satisfy this property.

\begin{lemma}\label{lem:activ-functs} If the activation function $\sigma$ is thrice-differentiable and it satisfies
\begin{align}\label{eq:activ-property}
    \sigma'(x) - x \sigma''(x) + \sigma'''(x) > 0,
\end{align}
then its interaction satisfies Assumption~\ref{ass:interaction} (ii) for all $ u \in (-1, 1)$.
Softplus with $\beta \in (0, 2]$, sigmoid, tanh, and erf activation functions satisfy the above inequality.
\end{lemma}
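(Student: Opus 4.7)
The plan is to reduce Assumption~\ref{ass:interaction} (ii) to the pointwise hypothesis on $\sigma$ via bivariate Stein's lemma, and then verify the hypothesis for each listed activation by direct computation. Applying Lemma~\ref{lem:der-correlation} twice yields $\tfrac{d}{du} g_\sigma(r_1, 1, u) = r_1 \E[\sigma'(r_1 x)\sigma'(y)]$ and $\tfrac{d^2}{du^2} g_\sigma(r_1, 1, u) = r_1^2 \E[\sigma''(r_1 x)\sigma''(y)]$, where $(x, y)$ is a standard bivariate Gaussian with correlation $u$. After rearrangement, Assumption~\ref{ass:interaction} (ii) is equivalent to
\begin{align*}
\E[\sigma'(r_1 x)\sigma'(y)] > u\, r_1\, \E[\sigma''(r_1 x)\sigma''(y)].
\end{align*}

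Next I would apply the hypothesis at the point $y$, namely $\sigma'(y) > y\sigma''(y) - \sigma'''(y)$. Each of the listed activations is monotonically increasing, so $\sigma'(r_1 x) > 0$ pointwise; multiplying by $\sigma'(r_1 x)$ preserves the strict inequality. Taking expectations yields $\E[\sigma'(r_1 x)\sigma'(y)] > \E[y\,\sigma'(r_1 x)\sigma''(y)] - \E[\sigma'(r_1 x)\sigma'''(y)]$. I then invoke the bivariate Stein identity $\E[y f(x,y)] = u\,\E[\partial_x f] + \E[\partial_y f]$ with $f(x,y) = \sigma'(r_1 x)\sigma''(y)$, which expands the first expectation on the right as $u r_1 \E[\sigma''(r_1 x)\sigma''(y)] + \E[\sigma'(r_1 x)\sigma'''(y)]$; the $\sigma'''$ contributions cancel and the desired inequality emerges. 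When the hypothesis holds only with equality on a measure-zero set (for example erf at $x=0$), strictness is preserved after integration by continuity of the integrand.

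The second claim reduces to direct calculation of $\sigma'(x) - x\sigma''(x) + \sigma'''(x)$ for each activation. For erf one obtains $\sqrt{2/\pi}\,e^{-x^2/2}\cdot 2x^2$, which is nonnegative and strict off $\{x=0\}$. For sigmoid, writing $p=\sigma_{\text{sig}}(x)$ and using $1-2p = -\tanh(x/2)$, the expression factors as $p(1-p)\bigl[2 + x\tanh(x/2) - 6p(1-p)\bigr]$, which is positive because $p(1-p) \in (0, 1/4]$ and $x\tanh(x/2)\geq 0$. The paper's $\sigma_{\text{tanh}}(x) = (1-e^{-x})/(1+e^{-x})$ equals $2\sigma_{\text{sig}}(x) - 1$, so all its derivatives are twice those of sigmoid and the property transfers immediately. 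For softplus $\sigma_{\text{soft}}^\beta$, using $(\sigma_{\text{soft}}^\beta)'(x) = \sigma_{\text{sig}}(\beta x)$ reduces the quantity to $p + \beta q\bigl[\beta(1-2p) - x\bigr]$ where $p = \sigma_{\text{sig}}(\beta x)$ and $q = p(1-p)$; the restriction $\beta \leq 2$ is used precisely to control the potentially negative middle term, through a factorization in $t = \tanh(\beta x/2)$ parallel to the sigmoid case.

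The conceptual step---rewriting Assumption~\ref{ass:interaction} (ii) as a pointwise expression and closing the gap with a single invocation of bivariate Stein---is clean once the second derivative of $g_\sigma$ is expanded. The main technical obstacle is the softplus verification at the boundary $\beta = 2$, where the algebraic margin is tightest; the calculations for erf, sigmoid and (consequently) tanh are comfortably positive.
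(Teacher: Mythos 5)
Your reduction of Assumption~\ref{ass:interaction}~(ii) to the pointwise condition $\sigma'(y)-y\sigma''(y)+\sigma'''(y)>0$ is correct and is, up to reorganization, the paper's own argument: the paper conditions on $x$, writes $y=ux+\sqrt{1-u^2}\,z$, and applies univariate Stein's lemma to show the target inequality equals $\E[\sigma'(r_1x)\,f(y)]>0$ with $f=\sigma'-(\cdot)\sigma''+\sigma'''$, while your single use of the bivariate identity $\E[y\,F(x,y)]=u\,\E[\partial_xF]+\E[\partial_yF]$ produces the same cancellation of the $\sigma'''$ terms. Your formulas for $\tfrac{d}{du}g_\sigma$ and $\tfrac{d^2}{du^2}g_\sigma$, the rearranged form of Assumption~(ii), and the verifications for erf, sigmoid, and tanh (via $\sigma_{\text{tanh}}=2\sigma_{\text{sig}}-1$) are all correct; the sigmoid factorization $p(1-p)\bigl[2+x\tanh(x/2)-6p(1-p)\bigr]$ with $6p(1-p)\le\tfrac32$ is in fact tidier than the paper's equivalent inequality $2e^{x}(e^{x}-1)+2-x(1-e^{2x})>0$. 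Both you and the paper rely on $\sigma'>0$ for the listed activations to get strict positivity of the final expectation; you at least say so explicitly.

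The gap is the softplus verification, which is the technically hard half of the second claim and which you assert rather than prove. Writing $p=\sigma_{\text{sig}}(\beta x)$, the quantity is $p\bigl(1+\beta(1-p)[\beta(1-2p)-x]\bigr)$, and for $x>0$ both $x$ and $\beta(2p-1)$ are positive, so the bracket is $1$ minus a genuinely positive term; nothing ``parallel to the sigmoid case'' closes this for free, and indeed the inequality fails for large $\beta$, so a quantitative use of $\beta\le2$ is unavoidable. In your variable $t=\tanh(\beta x/2)$ the bracket becomes $1-\tfrac{\beta^2}{2}t(1-t)-(1-t)\operatorname{artanh}(t)$, and at $\beta=2$ one must actually show $\sup_{t\in(0,1)}(1-t)\operatorname{artanh}(t)<1-\tfrac{\beta^2}{8}=\tfrac12$ (the supremum is about $0.28$; one clean route is $(1-t)\log(1+t)\le\tfrac14$ together with $s\log(1/s)\le 1/e$ for $s=1-t$), or argue as the paper does with roughly half a page of casework built on $e^{\beta x}\ge(\beta x)^2/2+\beta x+1$ and the bound $x\beta<1.2$. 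Your route is viable and the $x\le0$ case is immediate from sign considerations, but as written the $x>0$, $\beta$ near $2$ regime --- the only place the hypothesis $\beta\in(0,2]$ actually bites --- is left unproved.
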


\begin{proof}
Let us first write out Assumption~\ref{ass:interaction} (ii) explicitly using Lemma~\ref{lem:der-correlation}
\begin{align}
    r_1 u \E [\bar{\sigma}'(r_1 x) \bar{\sigma}'(y)]  < \E [\bar{\sigma}(r_1 x) \bar{\sigma}(y)].
\end{align}
where $\bar{\sigma}(x)=\sigma'(x)$. Using Stein's Lemma for $v(x) = \bar{\sigma}(r_1 x) \bar{\sigma}'(y)$, we get
\begin{align}
    \E[\bar{\sigma}(r_1 x) \bar{\sigma}'(y) x] = \E[\bar{\sigma}'(r_1 x) \bar{\sigma}'(y)] r_1 + \E[\bar{\sigma}(r_1 x) \bar{\sigma}''(y)] u.
\end{align}
The desired inequality is equivalent to
\begin{align}
    \E[\bar{\sigma}(r_1 x) (\bar{\sigma}(y) -\bar{\sigma}'(y)x u + \bar{\sigma}''(y) u^2) ] > 0.
\end{align}
Let us introduce $f(x)= \bar{\sigma}(x) - x \bar{\sigma}'(x) + \bar{\sigma}''(x)$. For $y=ux + u'z$ where $u'=\sqrt{1-u^2}$, we have the conditional average of $y$ fixing $x$ (we drop conditioning on the right-hand terms for convenience)
\begin{align}
    \E[f(y) | x] &= \E[ \bar{\sigma}(y)] - \E[y\bar{\sigma}'(y)] + \E[\bar{\sigma}''(y)] \notag \\
    &= \E[ \bar{\sigma}(y)] - ux \E[\bar{\sigma}'(y)] - \E[u' z \bar{\sigma}'(y)] + \E[\bar{\sigma}''(y)] \notag \\
    &= \E[ \bar{\sigma}(y)] - ux \E[\bar{\sigma}'(y)] - (u')^2 \E[ \bar{\sigma}''(y)] + \E[\bar{\sigma}''(y)] \notag \\
    &= \E[ \bar{\sigma}(y)] - ux \E[\bar{\sigma}'(y)] + u^2 \E[\bar{\sigma}''(y)],
\end{align}
where second last equality comes from Stein's Lemma for $v(z)=\bar{\sigma}'(ux+u'z)$. 
Hence the desired inequality is equivalent to
\begin{align}\label{eq:need-to-show0}
    \E[\bar{\sigma}(r_1 x) f(y) ] > 0.
\end{align}
By straightforward calculus, we will show that $f(x)>0$, or that $f(x) \geq 0$ and $f(x)=0$ if and only if $x=0$. 
In the latter case, the expectation in Eq.~\ref{eq:need-to-show0} is positive since $f(y)>0$ for some $y$ values of the integrand.
First, for the sigmoid and tanh activation functions, for which we have
\begin{align}
    \bar{\sigma}(x)=\frac{e^x}{(e^x+1)^2}, \ \bar{\sigma}'(x)=\frac{e^x(1-e^x)}{(e^x+1)^3}, \ \bar{\sigma}''(x)=\frac{e^x(e^{2x}-4e^x+1)}{(e^x+1)^4}.
\end{align}
Hence, we can explicitly write $f$ as
\begin{align}
    f(x) &= \frac{e^x}{(e^x+1)^2} - x \frac{e^x(1-e^x)}{(e^x+1)^3} + \frac{e^x(e^{2x}-4e^x+1)}{(e^x+1)^4} \\
    &= \frac{e^x}{(e^x+1)^4}((e^x+1)^2 - x(1-e^x)(e^x+1) + (e^{2x}-4e^x+1)).
\end{align}
Therefore showing $f(x)>0$ is equivalent to showing that the factor on the right, that is,
\begin{align}
    2e^{x}(e^{x}-1) + 2 - x (1-e^{2x})
\end{align}
is positive.
For $x<0$, we have $e^x<1$ which implies $- x (1-e^{2x})>0$ and $(1-e^x)e^x \leq 1/4$ due to the inequality of arithmetic and geometric means hence the first term is upper bounded by $-1/2$ and since we have $+2$, the whole term is positive.
For $x\geq0$, we have $e^x\geq1$, hence we can rewrite the inequality as a sum of non-negative terms
\begin{align}
    2e^{x}(e^{x}-1) + 2 + x (e^{2x}-1) > 0.
\end{align}

Let us now handle the case of erf. Its first three derivatives are given by
\begin{align}
    \bar{\sigma}(x)=\frac{2}{\sqrt{\pi}} e^{-x^2/2}, \bar{\sigma}'(x)= -\frac{2}{\sqrt{\pi}} x e^{-x^2/2}, \bar{\sigma}''(x)=\frac{2}{\sqrt{\pi}}  (x^2 e^{-x^2/2} - e^{-x^2/2})
\end{align}
Hence, we can explicitly write $f$ as
\begin{align}
    f(x) &= \frac{2}{\sqrt{\pi}} e^{-x^2/2} (1 + x x + x^2 - 1) = \frac{4}{\sqrt{\pi}} e^{-x^2/2} x^2
\end{align}
that is non-negative for all $x$ and zero iff $x=0$. 

Finally, for the softplus activation function with $\beta\in (0,2]$, we have the following derivatives
\begin{align}
    \bar{\sigma}(x)=\frac{e^{\beta x}}{(e^{\beta x}+1)}, \ \bar{\sigma}'(x)=\frac{\beta e^{\beta x}}{(e^{\beta x}+1)^2}, \ \bar{\sigma}''(x)=\frac{\beta^2 e^{\beta x}(1-e^{\beta x})}{(e^{\beta x}+1)^3}.
\end{align}
Plugging in the function $f$, we get
\begin{align}
    f(x) &= \frac{e^{\beta x}}{(e^{\beta x}+1)} - x \frac{\beta e^{\beta x}}{(e^{\beta x}+1)^2} + \frac{\beta^2 e^{\beta x}(1-e^{\beta x})}{(e^{\beta x}+1)^3} \\
    &= \frac{e^{\beta x}}{(e^{\beta x}+1)^3}((e^{\beta x}+1)^2- x \beta (e^{\beta x}+1) + \beta^2 (1-e^{\beta x}))
\end{align}
Therefore showing $f(x)>0$ is equivalent to showing that the factor on the right, that is,
\begin{align}
    e^{2 \beta x} + e^{\beta x}(2-x \beta-\beta^2)+1-x \beta + \beta^2
\end{align}
is positive.
For $x\leq 0$, we have that $-x \beta>0$ and $2-\beta^2\geq-2$ since $\beta\leq2$, hence it is sufficient to show that the following is positive
\begin{align}
    e^{2 \beta x} - 2 e^{\beta x}+1 + \beta^2 = (e^{\beta x} - 1)^2 + \beta^2
\end{align}
which is a sum of squares. For $x>0$, in the rest of the proof we will show that
\begin{align}\label{eq:want-to-show00}
    e^{\beta x}(e^{\beta x}+2-x \beta-\beta^2)+1-x \beta + \beta^2>0,
\end{align}
for $\beta \in (0, 2]$.
Using $e^{\beta x} \geq (\beta x)^2/2 + \beta x + 1$, it suffices to show that
\begin{align}
    e^{\beta x}((\beta x)^2/2+3-\beta^2)+1-x \beta + \beta^2>0.
\end{align}
If $(\beta x)^2/2+3-\beta^2\geq 1$, then the first term is bigger than $\beta x + 1$ hence the above term is positive.
The remaining possibility is that we have
\begin{align}\label{eq:x-beta-ineq}
    \frac{x^2}{2} < 1-\frac{2}{\beta^2}.
\end{align}
$\beta\leq 2$ implies $x<1$ and $x^2>0$ implies $\beta \!>\! \sqrt{2}$.
Hence we have $-x \beta + \beta^2>0$ since $\beta>x$. Therefore, if we have $(\beta x)^2/2+3-\beta^2\geq 0$, Eq.~\ref{eq:want-to-show00} is positive. Assuming the opposite, we get
\begin{align}
    \frac{x^2}{2} < 1 - \frac{3}{\beta^2},
\end{align}
$\beta\leq 2$ implies $x \!< \! 1/\sqrt{2}$ and $x^2>0$ implies $\beta \!>\! \sqrt{3}$.

Going back to Eq.~\ref{eq:want-to-show00}, what remains to show is that it is positive in the domain $x\!<\! 1/\sqrt{2}$, $\beta \in (\sqrt{3}, 2]$. It suffices to show that $e^{\beta x} + 2 - x \beta - \beta^2 > 0$. Assuming the contrary implies $e^{\beta x} < x \beta + 2 $ since $\beta \leq 2$.
We can then deduce that $ x \beta < c = 1.2$ since otherwise we would have
\begin{align}
    e^{\beta x} &= 1 + \beta x + \frac{(\beta x)^2}{2!}+ \frac{(\beta x)^3}{3!} + ... \\
    &\geq 1 + \beta x + \frac{c^2}{2!}+ \frac{c^3}{3!} + ... = 1 + \beta x + (e^c-c-1) >  1 +\beta x + 1
\end{align}
which implies a contradiction. $c$ can be chosen smaller but this will be enough for our purposes.

Assuming $e^{\beta x} + 2 - x \beta - \beta^2 \leq 0$, let us expand Eq.~\ref{eq:want-to-show00}
\begin{align}
    &e^{\beta x}(e^{\beta x}+2-x \beta-\beta^2)+1-x \beta + \beta^2 \geq \ (\text{using} \ e^{\beta x} < \beta x + 2) \\
    &(x \beta +2)e^{\beta x} + (x \beta +2)(2-x \beta-\beta^2)+1-x \beta + \beta^2 =\\
    &(x \beta +2)e^{\beta x} -(x \beta)^2 -(1+\beta^2)x \beta + 5 - \beta^2  > \ (\text{using} \ e^{\beta x} > \beta x + 1) \\
    & 7 - \beta^2 + (2 -\beta^2) x\beta \geq 3 - 2 x \beta > 0
\end{align}
where in the last inequality we used $x \beta < 1.2$.
We note that this inequality holds for slightly larger $\beta$ using the same technique, however, for significantly larger $\beta$, the property breaks down.
\end{proof}

\section{The One-Neuron Network}
\label{app-sec:one-neuron-net}

We study the critical points of the following loss function 
\begin{align}
    L^{1, k}(w,a) = \E_{x \sim \D} [a \sigma(w \cdot x) - \sum_{j=1}^k b_j \sigma(v_j \cdot x)],
\end{align}
in particular, the optimal solution. 
For $n=1$, all configurations of order parameters are realizable in the weight space, therefore, the optimal solution of the following loss (repeating Eq.~\ref{eq:one-neuron-loss})
\begin{align}
     L^{1, k}_{\text{proj}} = a^2 g_\sigma(r, r, 1) - 2 a \sum_{j=1}^k b_j g_\sigma(r, \| v_j \|, u_j) + \text{const}, \quad
    \text{subject to} \ \sum_{j=1}^k u_{j}^2 \leq 1, r \geq 0, 
\end{align}
is equivalent to the optimal solution in the weight space. 
Let us denote the unit ball by $B = \{ (u_1, ..., u_k) \ | \ u_1^2+ ... + u_k^2 \leq 1 \}$.
Its interior is denoted by $\interior B $ and its boundary is denoted by $\partial B $.

We will present the results for the one-neuron network in five parts
\begin{enumerate}
    \item In Subsection~\ref{app-sec:general-one-neuron}, we give a proof of Proposition~\ref{prop:in-span}: any non-trivial critical point $(w, a)$ of $L^{1, k}$ satisfies that $w$ is in the span of the teacher's incoming vectors if the activation function satisfies Assumption~\ref{ass:interaction} (i).
    Moreover, we show in Lemma~\ref{lem:lagrangian-cond} that the corresponding order parameters should satisfy a Lagrangian condition (Eq.~\ref{eq:lagrangian-cond}). 
    \item In Subsection~\ref{app-sec:general-act}, we characterize the topology of the 
    loss landscape in terms of its critical points for the activation functions 
    studied in this paper and for the unit-orthonormal teacher. 
    Our results for the one-neuron network are strong in the sense that it gives all possible critical points of the loss landscape. 
    \begin{enumerate}
        \item In Subsection~\ref{app-sec:proof-thm-uniqueness}, we give a proof of Theorem~\ref{thm:uniqueness}: for general activation functions satisfying Assumption~\ref{ass:interaction}, any non-trivial critical point of the one-neuron network attains equal correlations that are either $\nicefrac{1}{\sqrt{k}}$ or $-\nicefrac{1}{\sqrt{k}}$.
        \item  In Subsection~\ref{sec:zeros-of-G-tG}, we study the two-dimensional loss
        obtained after applying Theorem~\ref{thm:uniqueness}. 
        From its derivative constraints, we get a fixed point equation (Eq.~\ref{eq:fixed-point}) that needs to be satisfied by the incoming vector norm $r$ at any non-trivial critical point with equal correlations $u$. 
        Numerically, we show that there is a unique solution of the fixed point equation
        for $u>0$ for differentiable activation functions studied in this paper (Fig.~\ref{fig:zeros-of-G-tG}). 
        Finally, we give some sufficient conditions in Eq.~\ref{eq:sufficient-cond} to prove uniqueness based on log-concavity; numerically, these are shown to be satisfied by softplus and sigmoid activation functions.
    \end{enumerate}
    \item In Subsection~\ref{app-sec:proof-erf}, we give a proof of Corollary~\ref{corr:erf} by solving the two-dimensional loss for the erf activation function.
    Moreover, from the proof, we conclude that there are exactly two non-trivial critical points identical up to the mirror symmetry of erf (since it is odd); and these are the optimal solutions for the loss landscape. 
    \item In Subsection~\ref{app-sec:proof-relu}, we present and prove Corollary~\ref{corr:relu} by solving the two-dimensional loss for the ReLU activation function.
    We find that there are two non-trivial critical points of the loss function: a saddle `point' at correlation $-\nicefrac{1}{\sqrt{k}}$ and the optimal `solution' at correlation $\nicefrac{1}{\sqrt{k}}$. 
    Due to the positive homogeneity of ReLU, these are not two points but two equal-loss hyperbolas in the loss landscape. 
    \item In Subsection~\ref{app-sec:proof-softplus}, we study the two-dimensional loss for the softplus activation function and give a proof of Theorem~\ref{thm:softplus}. 
    Absence of analytical expression for the 
    Gaussian integral terms make the problem challenging; 
    we use several non-trivial steps in the proof. 
    The proof shows that there is no critical point at correlation $-\nicefrac{1}{\sqrt{k}}$; and a non-trivial critical point $(w^*, a^*)$ at correlation $\nicefrac{1}{\sqrt{k}}$ satisfies the following bounds: $a^* \geq k$ and $
    \| w^* \| \leq \nicefrac{1}{\sqrt{k}}$. Numerically, we find that these bounds hold for other activation functions studied in this paper (tanh and sigmoid; Fig.~\ref{fig:one-neuron-structure}). 
\end{enumerate}

\subsection{Any Non-Trivial Critical Point Satisfies the Lagrangian Condition}
\label{app-sec:general-one-neuron}

We add a reminder here for the definition of the non-trivial critical point $\theta=(w,a)$: 
it is a critical point of the loss function that satisfies $a \neq 0$ and $\| w \| \neq 0$. 

\begin{proposition}\label{app-prop:in-span} Assume that $f^*$ is an orthogonal teacher network of width $k$. 
If the activation function satisfies Assumption~\ref{ass:interaction} (i), 
any non-trivial critical point $\theta^* = (w^*, a^*)$, i.e. $\nabla L^{1, k}(\theta^*) = 0$, satisfies that $w^*$ is in the span of the teacher's incoming vectors. 
\end{proposition}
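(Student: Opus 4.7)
The plan is to transport the critical-point condition from weight space into the constrained order-parameter problem of Eq.~\ref{eq:optim-ineq-const} specialized to $n=1$, and then use Assumption~\ref{ass:interaction}~(i) to force the spherical constraint on $u$ to be active. For $n=1$ there is no student-student interaction and no relaxation, so the feasible order-parameter problem reads: minimize
\[
L_{\text{proj}}^{1,k}(r,u,a) = a^2 g_\sigma(r,r,1) - 2 a \sum_{j=1}^k b_j\, g_\sigma(r,\|v_j\|,u_j) + C
\]
over $r \geq 0$ and $\|u\|^2 \leq 1$. For a non-trivial weight-space critical point $(w^*,a^*)$ with $r^* := \|w^*\| > 0$ and $a^* \neq 0$, the assignment $u_j^* := w^* \cdot v_j / (r^*\|v_j\|)$ is well-defined, lies in the feasible set, and the map $w \mapsto (r,u)$ is a smooth local diffeomorphism on $\{w \neq 0\}$ onto its image. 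Hence $(r^*,u^*,a^*)$ is a KKT point of the order-parameter problem; this is exactly the content of Lemma~\ref{lem:lagrangian-cond} which I would invoke directly.

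Next I would argue by contradiction that $\|u^*\| = 1$. Suppose $\|u^*\| < 1$; then the inequality constraint is inactive, its multiplier vanishes, and the stationarity condition reduces componentwise to
\[
\partial_{u_j} L_{\text{proj}}^{1,k}(r^*,u^*,a^*) = -2 a^* b_j \, \partial_u g_\sigma(r^*, \|v_j\|, u_j^*) = 0 \qquad \text{for every } j \in [k].
\]
However $|u_j^*| \leq \|u^*\| < 1$, so Assumption~\ref{ass:interaction}~(i) gives $\partial_u g_\sigma(r^*,\|v_j\|,u_j^*) > 0$, while $a^* \neq 0$ by non-triviality and $b_j \neq 0$ by the standing orthogonal-teacher hypothesis. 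The three factors are all nonzero, contradicting the stationarity equation. Therefore $\|u^*\| = 1$.

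To finish, I note that $\|u^*\| = 1$ means that $\sum_j \bigl(w^* \cdot v_j/(r^*\|v_j\|)\bigr)^2 = 1$, i.e.\ the component of $w^*/r^*$ orthogonal to $\operatorname{span}(v_1,\ldots,v_k)$ vanishes (cf.\ the decomposition in Eq.~\ref{eq:expansion-student}). Equivalently, $w^* \in \operatorname{span}(v_1,\ldots,v_k)$, which is the claim.

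The main obstacle I expect is the first step of the plan: justifying cleanly that the weight-space criticality of $(w^*,a^*)$ forces KKT-stationarity of $(r^*,u^*,a^*)$ for the constrained order-parameter problem. This requires (i) that the change of variables $w \mapsto (r,u)$ is regular at $w^* \neq 0$, and (ii) a constraint qualification for $\|u\|^2 \leq 1$, which holds trivially since $\nabla_u(\|u\|^2) = 2u$ is nonzero whenever $u \neq 0$. Once these two observations are in place, the remainder of the argument is a one-line application of Assumption~\ref{ass:interaction}~(i); no analytic expression for $g_\sigma$ is needed.
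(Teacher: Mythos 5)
Your proposal is correct and follows essentially the same route as the paper's proof: map the weight-space critical point into the $n=1$ order-parameter problem, observe that if $\|u^*\|<1$ the stationarity condition forces $\partial_u g_\sigma(r^*,\|v_j\|,u_j^*)=0$, and derive a contradiction with Assumption~\ref{ass:interaction}~(i) since $a^*\neq 0$ and $b_j\neq 0$. The only cosmetic difference is that you phrase the interior case via KKT with a vanishing multiplier, whereas the paper simply notes that an interior point is an unconstrained critical point of $L^{1,k}_{\mathrm{proj}}$; the substance is identical.
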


\begin{proof} We will prove by contradiction. 
Let us assume that $w$ is outside of the span of the teacher's incoming vectors. 
We will show that $(w, a)$ is not a critical point for any $a \neq 0$. 
Mapping $(w, a)$ to the order parameter space, we get that $(r, u, a)$ where $u=(u_1, ..., u_k) \in \interior B$ which implies $u_j \in (-1,1)$.
Since $u \in \interior B$ and $r>0$, we have that 
$(r, u, a)$ is a critical point of $L_{\text{proj}}^{1, k}$ since the boundaries are not seen near the neighborhood of this point. 
Therefore the partial derivatives of $L_{\text{proj}}^{1, k}$ are all zero including 
\begin{align}
    b_j \frac{d}{d u_j} g_\sigma(r, \| v_j \|, u_j) = 0.
\end{align}
From Assumption~\ref{ass:interaction} (i), we have that $\partial_u g_\sigma(r_1, r_2, u)>0$ for $u \in (-1, 1)$ which yields a contradiction.
Thus, each critical point of the projected loss is on the boundary, i.e. $u \in \partial B$, which implies that the incoming vector is in the span of the teacher's incoming vectors. 
\end{proof}

We will next show that any non-trivial critical point satisfies a Lagrangian condition 
since it is on the boundary of a constrained optimization problem. 

\begin{lemma}\label{lem:lagrangian-cond} Let $\theta = (w, a)$ be a non-trivial critical point of $L^{1, k}$. 
Then the corresponding order parameters $p=(r, u, a)$ satisfy the following Lagrangian condition 
\begin{align}\label{eq:lagrangian-cond}
     b_j \partial_u g_\sigma(r, \| v_j \|, u_j) = \lambda u_j \quad \text{for all} \ \ j \in [k]. 
\end{align}
\end{lemma}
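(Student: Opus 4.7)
The plan is to leverage Proposition~\ref{prop:in-span} together with the method of Lagrange multipliers. First, I would invoke Proposition~\ref{prop:in-span} to conclude that at the non-trivial critical point $(w, a)$, the incoming vector $w$ lies in the span of $v_1, \ldots, v_k$. This immediately forces the associated correlation vector $u = (u_1, \ldots, u_k)$ to satisfy $\|u\| = 1$, so that the order parameters $(r, u, a)$ sit on the boundary $\partial B$ of the feasible region of the $n=1$ version of the constrained problem in Eq.~\ref{eq:optim-ineq-const}.

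Next, I would transfer the criticality from weight space to the constrained order-parameter space. Decompose $w = w_\parallel + w_\perp$ with $w_\parallel$ in the span of the $v_j$. Since $L^{1,k}(w,a)$ depends on $w$ only through the order parameters (as is manifest from the Gaussian integral expression in Section~\ref{sec:foundations}), it is invariant under rotations of $w_\perp$, so the gradient in the directions orthogonal to the span vanishes automatically. Proposition~\ref{prop:in-span} gives $w_\perp = 0$, and on the span of the $v_j$ the map $w_\parallel \mapsto (r, u)$ restricted to $\|u\| = 1$ is a smooth local diffeomorphism (both sides have $k$ degrees of freedom, up to a global sign). Hence the vanishing of $\nabla_w L^{1,k}(w,a)$ is equivalent to the vanishing of the gradient of $L_{\text{proj}}^{1,k}$ on the manifold $\R_{>0} \times \S^{k-1} \times \R$, i.e.\ to $(r,u,a)$ being a critical point of $L_{\text{proj}}^{1,k}$ subject to the active constraint $h(u) := \|u\|^2 - 1 = 0$.

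The Lagrange multiplier theorem then delivers a scalar $\mu$ with
\begin{align*}
    \partial_{u_j} L_{\text{proj}}^{1,k}(r, u, a) \;=\; \mu \, \partial_{u_j} h(u) \;=\; 2 \mu \, u_j \quad \text{for all } j \in [k].
\end{align*}
Differentiating the expression in Eq.~\ref{eq:loss-reparam} (or equivalently in Eq.~\ref{eq:one-neuron-loss}) directly gives $\partial_{u_j} L_{\text{proj}}^{1,k} = -2 a\, b_j\, \partial_u g_\sigma(r, \|v_j\|, u_j)$. Combining the two and using $a \neq 0$, I would set $\lambda := -\mu/a$ to recover exactly the claimed identity $b_j\, \partial_u g_\sigma(r, \|v_j\|, u_j) = \lambda u_j$.

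The main obstacle is the second step, namely justifying carefully that a critical point of the unconstrained loss in weight space corresponds to a constrained critical point in order-parameter space. One must verify that the parametrization $w_\parallel \mapsto (r, u)$ is a local diffeomorphism on the relevant piece of the sphere (away from $r=0$), and keep track of the $r$ and $a$ coordinates (whose Lagrange conditions are trivial as they are unconstrained, so no multiplier appears for them). Once this reparametrization is in place, the Lagrange multiplier computation is a one-line check.
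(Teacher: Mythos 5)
Your proposal is correct and follows essentially the same route as the paper: both transfer criticality from weight space to the constrained order-parameter problem on $\|u\|=1$ (the paper via lifting tangent paths of $\partial B$ back to weight space, you via the local diffeomorphism $w_\parallel \mapsto (r,u)$ plus the observation that derivatives in directions orthogonal to the span vanish), and then read off the Lagrange condition with $\lambda = -\mu/a$ using $a \neq 0$.
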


\begin{proof}
We will first show that for any differentiable 
path $(r, \gamma(t), a)$ on the boundary such that $\gamma(t) \in \partial B$ for $t \in (-\epsilon, \epsilon)$ for some $\epsilon >0$ and $\gamma(0)=u$, the following holds
\begin{align}\label{eq:path-lagrangian-cond}
    \frac{d}{dt} L^{1, k}_{\text{proj}} (\gamma(t)) \big|_{t=0}= \nabla_u L^{1, k}_{\text{proj}} (p) \cdot \gamma'(0) = 0.
\end{align}
Let us assume the contrary. 
We construct the corresponding following path in the weight space
\begin{align}
    \theta(t)= \left(r \left(\sum_{j=1}^k u_j(t) v_j + v_\perp \right), a\right).
\end{align}
Thanks to the equivalence of the losses along the path, we have that
\begin{align}
    \frac{d}{dt} L^{1, k}(\theta(t)) \big|_{t=0} = \frac{d}{dt} L^{1, k}_{\text{proj}}(\gamma(t)) \big|_{t=0} = 0,
\end{align}
since $\theta(0)=\theta$ is a critical point in the weight space. 
Therefore, Eq.~\ref{eq:path-lagrangian-cond} holds for any differentiable path on the boundary and implies that $\nabla_u L (p)$ is orthogonal to all $\gamma'(0)$.
The vector that is orthogonal to all $\gamma'(0)$ is the gradient of the surface, that is $2(u_1, ..., u_k)$. 
Hence we get $\nabla_u L (p) \parallel u$ which is written explicitly as the Lagrangian condition in Eq.~\ref{eq:lagrangian-cond}.
This is equivalent to setting the partial derivatives of the Lagrangian loss with respect to $u_j$ to zero where the Lagrangian loss is given by 
\begin{align}
    \mathcal{L}(p, \lambda) = -2 a \sum_{j=1}^k b_j g_\sigma(r, \| v_j \|, u_j) + \lambda' (\sum_{j=1}^k u_j^2 -1 ).
\end{align}
We set $\lambda = \nicefrac{\lambda'}{a}$ in Eq.~\ref{eq:lagrangian-cond}. 
\end{proof}

\subsection{General Activation Functions}
\label{app-sec:general-act}

Before we present our results, let us take a detour to check the applicability of the convex optimization framework.
For a convex and twice-differentiable activation function such as softplus, applying Lemma~\ref{lem:der-correlation} twice implies that the interaction $g_\sigma(r_1, r_2, \cdot)$ is a convex function of the correlation $u \in (-1, 1)$ for $r_1, r_2\!>\!0$.
Let us consider a fixed $a\!<\!0$ and $r\!>\!0$ and consider the loss parameterized by $u_j$'s. It is convex since its Hessian is a diagonal matrix with entries
\begin{align}
    \frac{d^2}{d u_j^2} L = -2a \frac{d^2}{d u_j^2} g_\sigma(r, \| v_j \|, u_j) > 0.
\end{align}
Since the constraint on the correlations (Eq.~\ref{eq:one-neuron-loss}) is also convex, we get a convex optimization problem that has a unique global minimum (see \citet{boyd2004convex}, Section 4.2).
Swapping a pair of $u_j$ does not change the loss, thus it is permutation symmetric.
If any two $u_j$ were distinct from each other at the minimum, then its permutation would also be a minimum which would violate the unicity.
We conclude that at the unique minimum point, the correlations are equal to each other.
However, for the case $a > 0$, and for other activation functions, the objective is not convex.

We instead use Lagrange multipliers for proving Theorem~\ref{thm:uniqueness}.

\subsubsection{Proof of Theorem~\ref{thm:uniqueness}}
\label{app-sec:proof-thm-uniqueness}

\begin{theorem}\label{app-thm:uniqueness} Assume that the activation function satisfies Assumption~\ref{ass:interaction}.
At any non-trivial critical point $(w^*, a^*)$ of the loss $L^{1, k}$ for the unit-orthonormal teacher network, the incoming vector satisfies 
\begin{align}
    \frac{w^*}{\| w^* \|} =  u \sum_{j=1}^k v_j
\end{align}
where $u$ is either $\nicefrac{1}{\sqrt{k}}$ or $-\nicefrac{1}{\sqrt{k}}$. 
\end{theorem}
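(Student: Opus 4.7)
The plan is to combine the constrained-optimization formulation in Eq.~\ref{eq:one-neuron-loss} with Proposition~\ref{prop:in-span} and a Lagrange-multiplier argument, reducing the vector question to a single scalar equation shared by every coordinate $u_j$. First I would apply Proposition~\ref{prop:in-span}, which is available since a unit-orthonormal teacher is in particular an orthogonal teacher and Assumption~\ref{ass:interaction}(i) is in force; this forces $\|u\|=1$, so any non-trivial critical point lies on the boundary $\partial B$ of the unit ball. Invoking Lemma~\ref{lem:lagrangian-cond} and specializing $b_j=1$, $\|v_j\|=1$, the order parameters then satisfy
\begin{align}
F(u_j) \;=\; \lambda\, u_j, \qquad j = 1, \dots, k, \notag
\end{align}
where $F(u) := \partial_u g_\sigma(r, 1, u)$ and $(r,\lambda)$ are common to all $j$.

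The key step is to show that the scalar equation $F(u)=\lambda u$ admits at most one solution in $[-1,1]$. I would study $\phi(u) := F(u)/u$ on $(-1,0) \cup (0,1)$. A direct differentiation yields
\begin{align}
\phi'(u) \;=\; \frac{u F'(u) - F(u)}{u^{2}}, \notag
\end{align}
and Assumption~\ref{ass:interaction}(ii) is precisely the statement that the numerator is strictly negative on $(-1,1)\setminus\{0\}$; hence $\phi$ is strictly decreasing on each of its two branches. Assumption~\ref{ass:interaction}(i) contributes the positivity $F>0$, so $\phi(u)$ has the same sign as $u$ and diverges to $\pm\infty$ as $u\to 0^{\pm}$ (by continuity of $F$ at $0$). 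The two branches therefore have disjoint ranges, so $\phi$ is globally injective; the value $u_j=0$ is ruled out because it would require $F(0)=0$, contradicting $F>0$.

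Since every $u_j$ solves $\phi(u)=\lambda$ with a common $\lambda$, injectivity forces $u_1=\cdots=u_k=:u^*$. The boundary constraint $\sum_j u_j^{2}=1$ then gives $u^*=\pm 1/\sqrt{k}$, which is exactly the claimed identity $w^*/\|w^*\| = u^*\sum_j v_j$.

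The main obstacle is the injectivity argument itself: Assumption~\ref{ass:interaction}(ii) provides monotonicity only on each individual branch of $\phi$, and without the sign information supplied by Assumption~\ref{ass:interaction}(i) the positive and negative branches could in principle share values, permitting mixed-sign critical points. Closing this gap via $F>0$ is the delicate step on which the uniqueness-up-to-sign conclusion turns; verifying Assumption~\ref{ass:interaction}(ii) for concrete activations (handled separately in Lemma~\ref{lem:activ-functs}) is the auxiliary technical cost of this approach.
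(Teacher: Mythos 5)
Your proposal is correct and follows essentially the same route as the paper's proof: Proposition~\ref{prop:in-span} plus the Lagrangian condition of Lemma~\ref{lem:lagrangian-cond}, then injectivity of $u \mapsto \partial_u g_\sigma(r,1,u)/u$ from Assumption~\ref{ass:interaction}(ii) to force all $u_j$ equal, and the norm constraint to get $\pm 1/\sqrt{k}$. Your sign argument (the two branches of $\phi$ have disjoint ranges because $F>0$ by Assumption~\ref{ass:interaction}(i)) is actually a welcome explicit justification of the ``one-to-one'' claim that the paper leaves implicit; the only detail you skip is the endpoint case $u_j \in \{-1,1\}$, which the paper dispatches by noting that for $k \geq 2$ it would force the remaining correlations to vanish, contradicting $F>0$.
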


\begin{proof} From Proposition~\ref{prop:in-span} and Lemma~\ref{lem:lagrangian-cond}, we get that any non-trivial critical point should satisfy the Lagrangian condition in Eq.~\ref{eq:lagrangian-cond}. 
In particular for unit-orthonormal teacher, setting $\| v_j \| =1$ and $b_j =1$, we get the following Lagrangian condition 
\begin{align}\label{eq:lagrangian-cond-unit-orthonormal}
    \partial_u g_\sigma(r, 1, u_j) = \lambda u_j \ \ \forall j \in [k], \quad
    \sum_{j=1}^k u_j^2 = 1.
\end{align}
If $u_j=0$, we get $\partial_u g_\sigma(r, 1, 0)=0$ which is not possible since $g_\sigma(r,1,u)$ is increasing due to Assumption~\ref{ass:interaction} (i).
Hence we have
\begin{align}\label{eq:all-corr-eq}
    \frac{\partial_u g_\sigma(r, 1, u_j)}{u_j} = \lambda.
\end{align}

Let us observe that $\partial_u g_\sigma(r, 1, u)/u$ is decreasing for $u\in(-1,1) \setminus \{0\}$ 
if and only if 
\begin{align}
    \frac{d}{du} \left(\frac{1}{u} \frac{d}{du} g_\sigma(r, 1, u)\right) = \frac{1}{u} \frac{d^2}{du^2} g_\sigma(r, 1, u) - \frac{1}{u^2} \frac{d}{du} g_\sigma(r, 1, u) < 0,
\end{align}
which is equivalent to Assumption~\ref{ass:interaction} (ii) for $u\in(-1,1) \setminus \{0\}$ (we included $u=0$ in Assumption~\ref{ass:interaction} (ii) for a simpler statement which is already implied from Assumption~\ref{ass:interaction} (i) at $u=0$).

Taken together, we conclude that $\partial_u g_\sigma(r, 1, u)/u$ is one-to-one in 
$u\in(-1,1) \setminus \{0\}$.
We need to consider the remaining case $u_i \! \in \! \{-1, 1\}$. 
For $k\geq 2$, necessarily, we have $u_j=0$ for $j \neq i$, which is not possible as we have shown. 
For $k=1$, $u_i \in \{-1, 1\}$ is the only option that satisfies the boundary condition.
For $k \geq 2$, Eq.~\ref{eq:all-corr-eq} implies that all correlations are equal.
Combining it with the boundary condition, we get $u_1=...=u_k=u$ with $k u^2 =1$, which completes the proof.
\end{proof}

\subsubsection{Two-Dimensional Loss, The Derivative Constraints, Uniqueness}
\label{sec:zeros-of-G-tG}

\begin{figure}[t]
    \centering
         \includegraphics[width=0.99\textwidth]{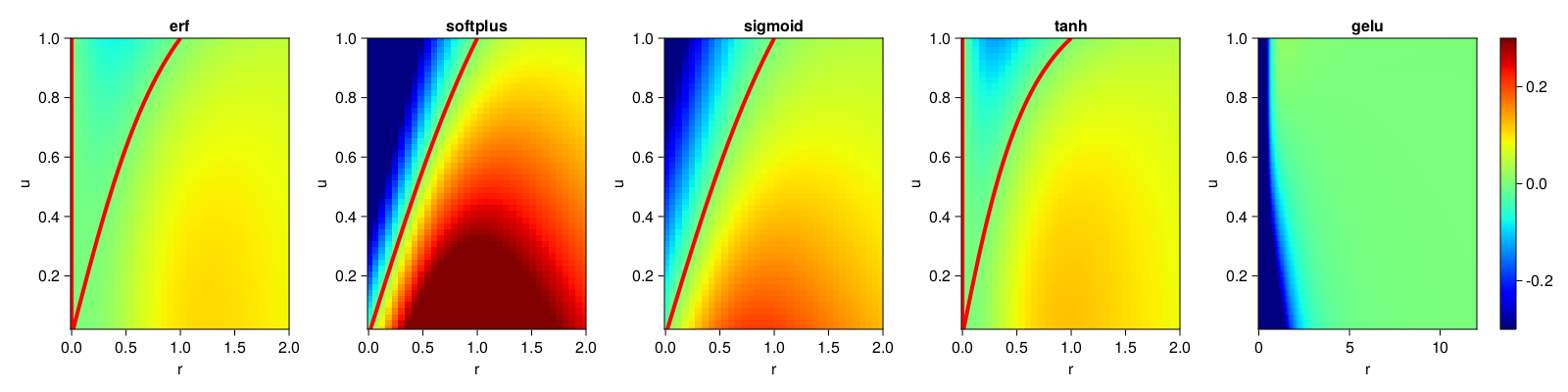}
    \caption{\textit{The graph of $f(r, u)=\frac{d}{dr} \left (\frac{1}{2} \log g_\sigma(r,r,1) - \log g_\sigma(r,1, u) \right)$ for activation functions erf, softplus with $\beta=1$, sigmoid, tanh, and gelu, respectively. Zero crossings of $f$ are shown in red.}
    For softplus and sigmoid, we observe that $f$ is negative for $r=0, u\in(0,1)$, positive for $r=1, u\in(0,1)$, and increasing in $r\in [0,1]$ for any fixed $u$, thus satisfying the sufficient conditions in Eq.~\ref{eq:sufficient-cond}.
    However, for tanh and erf, $f$ shows non-monotonic behavior in $r$ when $u$ is close to $1$.
    For the GeLU activation function $\sigma(x)=x \Phi(x)$, which is non-monotonic, we observe that $f$ does not cross zero for any $(r,u)$ pair in the plotted domain. It approaches zero from below when $r\to \infty$ thus showing a very different behavior from the other activation functions.
    \label{fig:zeros-of-G-tG}}
\end{figure}

At any non-trivial critical point, we proved in Theorem~\ref{thm:uniqueness} that all correlations are equal and denoted by $u$ that is either $1/\sqrt{k}$ or $-1/\sqrt{k}$.
The projected loss at a critical point reduces to
\begin{align}\label{eq:one-neuron-loss-red}
    L = a^2 g_\sigma(r, r, 1) - 2 k a g_\sigma(r, 1, u) + C.
\end{align}
Moreover, at a critical point, the partial derivatives with respect to the outgoing weight and norm should also be zero which gives the following two constraints
\begin{align}\label{eq:der-constraints}
    \partial_a L &= 2a g_\sigma(r,r,1) - 2 k g_\sigma(r, 1, u) = 0, \notag \\
    \partial_r L &= a^2 \partial_r g_\sigma(r,r,1) - 2 k a \partial_r g_\sigma(r, 1, u) = 0,
\end{align}
which can be rearranged into the following (assuming $g_\sigma(r,r,1)\neq 0$ and $\partial_r g_\sigma(r,r,1)\neq0$)
\begin{align}\label{eq:fixed-point-norm}
    \frac{a}{k} = \frac{g_\sigma(r, 1, u)}{g_\sigma(r,r,1)} = \frac{2 \partial_r g_\sigma(r, 1, u)}{\partial_r g_\sigma(r,r,1)}.
\end{align}
The second equality between the two ratios of Gaussian integral terms gives a fixed point equation on the norm $r$.
Writing the interactions in Eq.~\ref{eq:fixed-point-norm} explicitly and rearranging the ratios, we get
\begin{align}\label{eq:norm-contraint}
     f(r, u) = \frac{\E [ \sigma'(rx) \sigma(rx) x ] }{\E [\sigma(rx)^2]} - \frac{\E [ \sigma'(rx) \sigma(y) x ]}{\E [ \sigma(rx) \sigma(y) ]} = 0, 
\end{align}
where $x$ and $y$ are standard Gaussians with correlation $\E[xy]=u$.
Let us define the following helper functions
\begin{align}\label{eq:functions-G-tildeG}
    G(r) = \frac{\E [ \sigma'(rx) \sigma(rx) x ] }{\E [ \sigma(rx)^2 ]} &= \frac{1}{2} \frac{d}{dr} \log(\E [ \sigma(rx)^2 ]), \notag \\
    \tilde{G}(u, r) = \frac{\E [ \sigma'(rx) \sigma(y) x ]}{\E [\sigma(rx) \sigma(y)]} &= \frac{d}{dr} \log(\E [ \sigma(rx) \sigma(y) ]),
\end{align}
which yields
\begin{align}\label{eq:fixed-point}
    f(r, u) = G(r) - \tilde{G}(u, r) = \frac{d}{dr} \log \left( \frac{\E [\sigma(rx)^2 ]^{\frac{1}{2}}}{\E[\sigma(rx) \sigma(y)]} \right)=0.
\end{align}

Let us consider the case $u > 0$. 
We want to show that for any given $u \in (0,1]$ there is a unique $r \in (0, 1]$ such that $f(r, u)=0$. 
Under the assumption $\sigma(0) \neq 0$, if the following three conditions are satisfied for all $u \in (0,1]$,
\begin{align}\label{eq:sufficient-cond}
    &\text{(i)} \ \frac{\sigma'(0)}{\sigma(0)} \frac{\E[\sigma(y)x]}{\E[\sigma(y)]} > 0, \quad \notag \\ 
    & \text{(ii)} \ \frac{\E[\sigma'(x)\sigma(x)x]}{\E[\sigma(x)^2]} > \frac{\E[\sigma'(x)\sigma(y)x]}{\E[\sigma(x)\sigma(y)]}, \notag \\
    &\text{(iii)} \ \frac{d^2}{dr^2} \log \left( \frac{\E [ \sigma(rx)^2 ]^{\frac{1}{2} }}{\E [\sigma(rx) \sigma(y)]} \right) >0,
\end{align}
then we have a unique $r$ solving Eq.~\ref{eq:fixed-point} as we explain next. 
Note that the first two conditions are equivalent to $f(0, u)<0$ and $f(1, u)>0$, respectively.
%Showing that the first two conditions hold looks manageable.
The tricky part is the third condition which is equivalent to showing that
\begin{align}
    \frac{\E[\sigma(rx) \sigma(y)]}{\E[\sigma(rx)^2 ]^{\frac{1}{2}}}
\end{align}
is log-concave in $r$.
We note that marginalization properties of log-concave functions may be helpful here.
In this paper, we were not able to prove the sufficient conditions listed above for general activation functions that do not admit an analytic formula of the interaction, even for softplus which we studied in detail (see Subsection~\ref{app-sec:proof-softplus}).
Instead, we present the numerical integration results, which show that for any given $u\in(0,1]$, there is a unique $r\in(0,1]$ such that $f=0$ (see Fig.~\ref{fig:zeros-of-G-tG}).
Once $r$ is shown to be unique, then the matching outgoing weight $a$ follows from Eq.~\ref{eq:fixed-point-norm}.

\subsection{Closed-Form Solution for Erf Activation}
\label{app-sec:proof-erf}

\begin{corollary} \label{app-corr:erf}
Assume that the activation function is $\sigma_{\text{erf}}$.
The optimal solution $(w^*, a^*)$ is given by
\begin{align}%\label{eq:min-erf}
    \|w^*\| = \sqrt{ \frac{1}{2k -1}}, \quad a^*=k, \quad \frac{w^*}{\|w^*\|}= \frac{1}{\sqrt{k}} \sum_{i=1}^k v_i, \notag
\end{align}
or, equivalently, by $(-w^*, -a^*)$. The optimal loss is given by
\begin{align}\label{app-eq:loss-min-erf}
    L_{\text{erf}}^*(k) = \frac{2}{\pi} \Bigl(k \arcsin \bigl(\frac{1}{2} \bigr) - k^2 \arcsin \bigl( \frac{1}{2k} \bigr) \Bigr).
\end{align}
\end{corollary}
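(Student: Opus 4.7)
\textbf{Proof plan for Corollary~\ref{app-corr:erf}.} The plan is to use Theorem~\ref{thm:uniqueness} (whose assumptions erf satisfies by Lemma~\ref{lem:activ-functs}) to collapse the problem to two scalar variables, then exploit the closed-form expression for $g_{\text{erf}}$ to solve the resulting system explicitly.

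First I would invoke Theorem~\ref{thm:uniqueness}: every non-trivial critical point has $u_j=u$ with $u^2=1/k$ for all $j$. By the oddness of erf, the sign $u=-1/\sqrt{k}$ corresponds to the configuration $(-w^*,-a^*)$, so it suffices to analyse $u=1/\sqrt{k}$. Under this reduction, the one-neuron loss in Eq.~\ref{eq:one-neuron-loss-red} becomes a function of $(r,a)$ only:
\begin{align*}
    L(r,a) = a^2\, g_{\text{erf}}(r,r,1) - 2ka\, g_{\text{erf}}(r,1,1/\sqrt{k}) + C,
\end{align*}
with $C=\E[f^*(x)^2]=k\, g_{\text{erf}}(1,1,1)=k/3$ since $\frac{2}{\pi}\arcsin(1/2)=1/3$.

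Next I would plug in the standard formula $g_{\text{erf}}(r_1,r_2,u)=\frac{2}{\pi}\arcsin\!\bigl(r_1 r_2 u/\sqrt{(1+r_1^2)(1+r_2^2)}\bigr)$ to compute the two interaction terms. A direct substitution of $u=1/\sqrt{k}$ shows that the ratio $g_{\text{erf}}(r,1,u)/g_{\text{erf}}(r,r,1)$ simplifies dramatically at $r^2=1/(2k-1)$: both numerator and denominator reduce to $\frac{2}{\pi}\arcsin(1/(2k))$, because the arguments of $\arcsin$ both equal $1/(2k)$ at this value of $r$. Solving the stationarity condition $\partial_a L=0$ then yields $a^*=k\, g_{\text{erf}}(r,1,u)/g_{\text{erf}}(r,r,1)=k$, and the same equality of the two $\arcsin$ arguments at $r^2=1/(2k-1)$ makes the fixed-point equation $f(r,u)=0$ from Eq.~\ref{eq:fixed-point} reduce to an algebraic identity between the $r$-derivatives that I would verify by direct differentiation. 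Uniqueness of the non-trivial critical point (up to the sign symmetry from $u\mapsto -u$) follows from the single zero-crossing of $f(r,1/\sqrt{k})$, which is already established for erf (see Subsection~\ref{sec:zeros-of-G-tG} and Fig.~\ref{fig:zeros-of-G-tG}).

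Finally, I would compute the loss at the critical point by substituting $a^*=k$ and $g_{\text{erf}}(r^*,1,u)=g_{\text{erf}}(r^*,r^*,1)=\frac{2}{\pi}\arcsin(1/(2k))$:
\begin{align*}
    L^* = k^2 \tfrac{2}{\pi}\arcsin\!\bigl(\tfrac{1}{2k}\bigr) - 2k^2\tfrac{2}{\pi}\arcsin\!\bigl(\tfrac{1}{2k}\bigr) + \tfrac{k}{3} = \tfrac{2}{\pi}\bigl(k\arcsin(\tfrac{1}{2}) - k^2\arcsin(\tfrac{1}{2k})\bigr),
\end{align*}
using $k/3=\frac{2k}{\pi}\arcsin(1/2)$. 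Since this value is strictly smaller than the trivial critical loss $C=k/3$ (one checks $k\arcsin(1/(2k))>\arcsin(1/2)$ has the right sign for all $k\ge 1$), the computed non-trivial critical point is the global minimum. The main delicate step is the algebraic verification that the two arguments of $\arcsin$ coincide at the claimed $r^*$; beyond that the argument is a routine substitution enabled by Theorem~\ref{thm:uniqueness}.
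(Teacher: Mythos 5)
Your proposal follows essentially the same route as the paper's proof: reduce to the two-dimensional loss via Theorem~\ref{thm:uniqueness}, observe that at $r^2=1/(2k-1)$ the two $\arcsin$ arguments both equal $1/(2k)$ so that $a^*=k$, verify the $r$-derivative condition at that point, appeal to the numerically established uniqueness of the fixed point (Fig.~\ref{fig:zeros-of-G-tG}), and substitute to get the loss. The only blemish is your final aside comparing to the trivial critical point, where the stated inequality is reversed (and unnecessary, since $L^*<C$ follows immediately from $\arcsin(1/(2k))>0$); otherwise the argument matches the paper's.
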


\begin{proof} Since erf is an odd activation function, it suffices to find parameters of the non-trivial critical points satisfying $u \geq 0$. 
For any such critical point $(w^*, a^*)$, its mirror symmetry $(-w^*, -a^*)$ is an equivalent critical point due to Eq.~\ref{eq:one-neuron-ders}.

For $u=0$, we have that $g_{\text{erf}}(r, 1, u)=0$ which implies $g_{\text{erf}}(r,r,1)=\E[\sigma(rx)^2]=0$ 
due to the first derivative constraint in Eq.~\ref{eq:der-constraints} which holds if and only if $r=0$. This gives a possible trivial critical point yielding the zero predictor function. 

For a given $u=\frac{1}{\sqrt{k}}>0$, from Fig.~\ref{fig:zeros-of-G-tG}, we observe that
there is a unique $r\in (0,1]$ satisfying the fixed point equation in Eq.~\ref{eq:fixed-point}. For uniqueness, we rely on numerical integration. 
We will find one solution to the derivative constraints given below
\begin{align}\label{eq:der-const-erf}
    a g_{\text{erf}}(r,r,1) = k g_{\text{erf}}(r, 1, u), \quad 
    a \partial_r g_{\text{erf}}(r,r,1) - 2 k \partial_r g_{\text{erf}}(r, 1, u) = 0,
\end{align}
for a given $k$, and equivalently $u=\frac{1}{\sqrt{k}}>0$; 
and due to uniqueness, conclude that it is the only non-trivial critical point up to symmetries. 

In particular, we will use the analytic formula for the interaction function \cite{saad1995line, goldt2019dynamics}
\begin{align}
    g_{\text{erf}}(r_1, r_2, u) = \frac{2}{\pi} \arcsin \Bigl(\frac{r_1 r_2 u}{\sqrt{r_1^2+1} \sqrt{r_2^2+1}} \Bigr).
\end{align}
Let us find $r$ where we have 
\begin{align*}
    g_{\text{erf}}(r, 1, u) = g_{\text{erf}}(r,r,1)
\end{align*}
that is satisfied if we have that the arguments of $\arcsin$ match, which happens at 
\begin{align}
    \frac{r u}{\sqrt{r^2+1} \sqrt{2}} = \frac{r^2}{r^2+1} \quad \Rightarrow \quad r = \sqrt{\frac{u^2}{2-u^2}}=\frac{1}{\sqrt{2k-1}}.
\end{align}
Interestingly, at this value of $r$, we also have 
\begin{align*}
    2 \partial_r g_{\text{erf}}(r, 1, u) = \partial_r g_{\text{erf}}(r,r,1)
\end{align*}
which can be seen by inserting the guessed values in the following equation
\begin{align*}
    2 \partial_r \Bigl(\frac{r u}{\sqrt{r^2+1} \sqrt{2}} \Bigr) \arcsin' \bigl(\frac{r u}{\sqrt{r^2+1} \sqrt{2}} \bigr) = \partial_r \Bigl(\frac{r^2}{r^2+1} \Bigr) \arcsin' \bigl(\frac{r^2}{r^2+1} \bigr). 
\end{align*}
Setting $a=k$ in Eq.~\ref{eq:der-const-erf} completes the order parameters of the non-trivial critical point. 
Finally, let us compute the loss at $r=\nicefrac{1}{\sqrt{2k-1}}, u=\nicefrac{1}{\sqrt{k}}, a=k$;
\begin{align}
    \Lerf(k) &:= a^2 g_{\text{erf}}(r, r, 1) - 2 a k g_{\text{erf}}(r, 1, u) + k g_{\text{erf}}(1, 1, 1), \notag \\
    & = - k^2 g_{\text{erf}}(r, r, 1) + k g_{\text{erf}}(1,1,1), \notag \\
    & = \frac{2}{\pi} \Bigl(k \arcsin \bigl(\frac{1}{2} \bigr)  - k^2 \arcsin \bigl(\frac{1}{2k} \bigr) \Bigr). 
\end{align}
\end{proof}

\subsection{Closed-Form Solution for ReLU Activation}
\label{app-sec:proof-relu}

\begin{corollary} \label{corr:relu}
Assume that the activation function is $\sigma_{\text{relu}}$.
Any optimal solution $(w^*, a^*)$ satisfies
\begin{align}\label{eq:min-ReLU}
    \|w^*\| a^*  = \frac{k}{h(1)} h \bigl(\frac{1}{\sqrt{k}} \bigr), \quad \frac{w^*}{\|w^*\|}= \frac{1}{\sqrt{k}} \sum_{i=1}^k v_i,
\end{align}
forming an equal-loss hyperbola. The optimal loss is given by
\begin{align}\label{eq:loss-min-ReLU}
    L_{\text{relu}}^*(k) = k^2 \Bigl(h(0) - \frac{1}{h(1)} h \bigl(\frac{1}{\sqrt{k}} \bigr)^2 \Bigr) + k (h(1)-h(0)).
\end{align}
\end{corollary}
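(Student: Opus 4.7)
The plan is to follow the template of the proof of Corollary~\ref{corr:erf}. First I would invoke Theorem~\ref{thm:uniqueness} to reduce the critical-point search to configurations where all student-teacher correlations equal a common value $u \in \{+1/\sqrt{k}, -1/\sqrt{k}\}$. For ReLU this application requires care: monotonicity in $u$ follows immediately from $g_{\text{relu}}(r_1, r_2, u) = r_1 r_2 h(u)$ together with $h'(u) = (\pi - \arccos u)/(2\pi) > 0$, while Assumption~\ref{ass:interaction} (ii) must be verified only on the restricted domain of correlations compatible with the sphere constraint $\sum_j u_j^2 = 1$, namely $|u| \leq 1/\sqrt{2}$, where $h'(u)/u$ is indeed one-to-one.

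Next I would exploit the bilinearity of $g_{\text{relu}}$ in the norms. Substituting the common $u$ and using the teacher self-energy $C := k h(1) + k(k-1) h(0)$ (which comes from $g_{\text{relu}}(1,1,\delta_{jj'}) = h(\delta_{jj'})$ for orthonormal teacher vectors), the reduced two-dimensional loss becomes
\begin{align}
L(a, r; u) = a^2 r^2 h(1) - 2 a r k h(u) + C.
\end{align}
Setting $M := a r$ collapses this to the one-dimensional quadratic $L = M^2 h(1) - 2 M k h(u) + C$, uniquely minimized at $M^* = k h(u)/h(1)$. The positive homogeneity of ReLU is responsible for only the product $a r$ entering: rescaling $(w, a) \mapsto (c w, a/c)$ with $c > 0$ preserves the network function, so any pair with $a \|w\| = M^*$ is equally optimal, yielding the equal-loss hyperbola asserted in Eq.~\ref{eq:min-ReLU}.

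To finish, I would optimize over the two admissible correlation signs. The minimized value is $L^*(u) = -k^2 h(u)^2 / h(1) + C$; since $h$ is strictly increasing and nonnegative on $[-1, 1]$ with $h(-1) = 0$, we have $h(1/\sqrt{k}) > h(-1/\sqrt{k}) \geq 0$, so the minimizer is $u^* = +1/\sqrt{k}$ and $u = -1/\sqrt{k}$ corresponds to a saddle. Rewriting $C = k^2 h(0) + k(h(1) - h(0))$ and substituting $M^* = k h(1/\sqrt{k})/h(1)$ produces the closed-form loss in Eq.~\ref{eq:loss-min-ReLU}. The direction of $w^*$ then follows from Theorem~\ref{thm:uniqueness} since all $u_{j}$ are equal, forcing $w^*/\|w^*\| = \frac{1}{\sqrt{k}} \sum_j v_j$.

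The main obstacle I anticipate is the careful application of Theorem~\ref{thm:uniqueness} to ReLU: writing out Assumption~\ref{ass:interaction} (ii) for $g_{\text{relu}}$ reduces to $u/\sqrt{1-u^2} < \pi - \arccos(u)$, which fails as $u \to 1^-$. Hence the theorem cannot be invoked on the whole open interval $(-1, 1)$. One must instead observe that any vector of correlations satisfying $\sum_j u_j^2 = 1$ with $k \geq 2$ forces each $|u_j| \leq 1$ and, at a symmetric critical point, the relevant values live in $|u| \leq 1/\sqrt{2}$, on which $h''(u) u - h'(u) < 0$ does hold; a short direct computation then shows that $h'(u)/u$ is monotone on this sub-interval, and the Lagrangian argument of Theorem~\ref{thm:uniqueness} goes through. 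Everything else in the proof is routine algebraic manipulation of the explicit formula for $h$.
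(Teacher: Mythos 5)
Your overall strategy is the paper's: reduce to equal correlations via the Lagrangian argument, exploit $g_{\text{relu}}(r_1,r_2,u)=r_1r_2h(u)$ to collapse the loss to a quadratic in $M=ar$, minimize at $M^*=kh(u)/h(1)$, and compare $u=\pm 1/\sqrt{k}$. The quadratic step, the hyperbola from positive homogeneity, and the sign comparison via $h\geq 0$ increasing are all correct and match the paper.

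However, there is a genuine gap in your fix for the failure of Assumption~\ref{ass:interaction}~(ii) near $u=1$. You restrict attention to $|u|\leq 1/\sqrt{2}$ by asserting that ``at a symmetric critical point, the relevant values live in $|u|\leq 1/\sqrt{2}$'' --- but symmetry of the critical point is precisely what must be proved, so this is circular. The sphere constraint $\sum_j u_j^2=1$ does \emph{not} confine each individual $u_j$ to $[-1/\sqrt{2},1/\sqrt{2}]$: exactly one coordinate may exceed $u_0=1/\sqrt{2}$ (e.g.\ $u_1=0.9$, $u_2=\sqrt{0.19}$ for $k=2$). Monotonicity of $f(u)=h'(u)/u$ on $(0,u_0]$ therefore does not exclude an asymmetric critical point with one $u_j\in(u_0,1)$ and the rest in $(0,u_0)$ sharing the same value of $f$. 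The paper closes this hole with an additional property,
\begin{align}
  \frac{h'(u_0)}{u_0} > \frac{h'(u)}{u} \quad \text{for } u\in(u_0,1),
\end{align}
proved by showing $h'(u)/u$ is convex on $(0,1)$ and comparing endpoint values; combined with $f(u)>f(u_0)$ for $u\in(0,u_0)$, this forces all $u_j$ to lie on the same side of $u_0$, and the side $u_j>u_0$ is impossible for $k\geq 2$ by the ball constraint. Without this (or an equivalent) argument your reduction to the symmetric configuration does not go through. A secondary, more minor omission: you appeal to Theorem~\ref{thm:uniqueness} as stated, but that theorem assumes a twice-differentiable activation; the paper instead re-runs its proof from scratch for ReLU using the explicit $h$, which is what you would also need to do.
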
 

We will first show that the interaction of ReLU satisfies
\begin{align}\label{eq:property-relu-interaction}
    &\text{(i)} \ \ h'(u) \!>\! 0 \ \ \text{for} \ \ u \! \in \! (-1, 1), \notag \\
    &\text{(ii)} \ \ h''(u) u \!<\! h'(u) \ \ \text{for} \ \ u \! \in \! (-1, u_0], \notag \\
    &\text{(iii)} \ \ \frac{h'(u_0)}{u_0} \!>\! \frac{h'(u)}{u} \ \ \text{for} \ \ u \! \in \! (u_0, 1);
\end{align}
where $u_0=1/\sqrt{2}$.
%($u_0$ can be chosen bigger but this will be sufficient for our purposes).
Note that property (i) is equivalent to Assumption~\ref{ass:interaction} (i), and property (ii) is almost equivalent to Assumption~\ref{ass:interaction} (ii) except that it holds in the interval $(-1, u_0]$; property (iii) covers up for the missing piece of the interval in the property (ii).

\textit{ReLU interaction satisfies Properties~\ref{eq:property-relu-interaction}; Proof.}
Let us write the first two derivatives of $h$:
\begin{align}
     h'(u) = \frac{\pi - \arccos(u)}{2 \pi}, \quad h''(u)=\frac{1}{2 \pi \sqrt{1-u^2}}.
\end{align}
Property (i) easily comes from noting that the derivative of $h$ is positive for $u \in (-1, 1)$.
Property (ii) holds for $u\in(-1,0]$ since both the first and second derivatives are positive.
Let us show that Property (ii) holds for $u\in(0,u_0]$, that is equivalent to
\begin{align}
    \frac{u}{\sqrt{1-u^2}} < \pi - \arccos(u) = \frac{\pi}{2}+\arcsin(u).
\end{align}
Let us note that the left-hand side is smaller than $1$ since
\begin{align*}
    \frac{u^2}{1-u^2} \leq 1.
\end{align*}
Note that $\arcsin(u) > 0$ for $u>0$; and $\pi/2>1$. This completes the proof of Property (ii).

For Property (iii), we first show that $h'(u)/u$ is convex in $u \in (0, 1)$. The first two derivatives are
\begin{align*}
\frac{d}{du} \left(\frac{h'(u)}{u}\right) = \frac{h''(u)}{u}-\frac{h'(u)}{u^2}, \quad
\frac{d^2}{du^2} \left(\frac{h'(u)}{u}\right) = \frac{h'''(u)}{u}-\frac{2 h''(u)}{u^2}+\frac{2 h'(u)}{u^3}.
\end{align*}
Thus, it is equivalent to showing
\begin{align*}
    h'''(u) u- 2 h''(u) +\frac{2 h'(u)}{u} = \frac{u^2}{(1-u^2)^{3/2}}-\frac{2}{(1-u^2)^{1/2}}+\frac{\pi+2\arcsin(u)}{u} > 0.
\end{align*}
Using the Taylor series of $\arcsin$ and $u>0$, we have that $\arcsin(u)> u$.
Hence, it suffices to show
\begin{align}
    \frac{1}{(1-u^2)^{1/2}}\bigl(-3+ \frac{1}{1-u^2}+2(1-u^2)^{1/2}\bigr) \geq 0;
\end{align}
where we dropped the positive term $\frac{\pi}{u}$ which holds due to the inequality of arithmetic and geometric means
\begin{align*}
    \frac{1}{1-u^2}+(1-u^2)^{1/2}+(1-u^2)^{1/2} \geq 3.
\end{align*}

Let us assume the contrary of Property (iii), that there exists $u\in (u_0, 1)$ such that
\begin{align}
    \frac{h'(u_0)}{u_0} \leq \frac{h'(u)}{u}.
\end{align}
Note that $h'(u_0) / u_0 > h'(1)$ because $ \pi(1-u_0)-\arccos(u_0) > 0 $ holds at $u_0=1/\sqrt{2}$.
Since $h'(u)/u$ is left-continuous at $u=1$, there exists $\epsilon>0$ such that
\begin{align}
    \frac{h'(u_0)}{u_0}>\frac{h'(1-\epsilon)}{1-\epsilon}.
\end{align}
Finally, there exists $\alpha\in (0,1)$ such that $u= \alpha (1-\epsilon) + (1-\alpha) u_0$ which gives due to the convexity of $h'(u)/u$ the following
\begin{align}
    \alpha \frac{h'(1-\epsilon)}{1-\epsilon} + (1-\alpha) \frac{h'(u_0)}{u_0} \geq \frac{h'(u)}{u}.
\end{align}
This yields a contradiction since the left-hand side is strictly smaller than $h'(u_0)/u_0$ hence the proof of Property (iii) is complete.
\textit{ReLU interaction satisfies Properties~\ref{eq:property-relu-interaction}; End of Proof.}

\begin{proof}
First, we replicate the proof steps of Theorem~\ref{thm:uniqueness} to show that any non-trivial critical point must be on the boundary and attain equal correlations.
From Property~\ref{eq:property-relu-interaction} (i), we get that there is no non-trivial critical point in $\interior B$.
For $k=1$, this implies that $u_1=-1$ or $u_1=1$.

For general $k$, let us recall that we get the Lagrangian condition for non-trivial critical points 
\begin{align}
     r h'(u_j) =  \lambda u_j  \ \ \forall j \in [k], \quad
    \sum_{j=1}^k u_j^2 = 1.
\end{align} 
which is equivalent to Eq.~\ref{eq:lagrangian-cond-unit-orthonormal} 
for ReLU activation function.
$u_j=0$ is not possible since we have $h'(0) \neq 0$. Hence, we get
\begin{align}
    \frac{h'(u_j)}{u_j} = \frac{\lambda}{r}, \quad \forall j \in [k].
\end{align}
Property~\ref{eq:property-relu-interaction} (ii) implies that $f(u)=h'(u)/u$ is decreasing for $u \in (-1,u_0) \! \setminus \! \{0\}$.
Moreover, $f$ is negative for $u<0$ and positive for $u>0$.
\begin{enumerate}
    \item If $\lambda / r<0$, we get that all $u_j$ are equal and negative, hence they are equal to $-1/\sqrt{k}$ due to the boundary condition.
    \item If $\lambda / r=0$, we get $u_j=-1$ for all $j$ which implies that $k=1$ which is already covered above.
    \item If $\lambda / r>0$, Property~\ref{eq:property-relu-interaction} (iii) gives that $f(u_0)>f(u)$ for $u \! \in \! (u_0, 1)$.
    Since $f$ is decreasing we have also $f(u)>f(u_0)$ for $u \in (0, u_0)$; hence
    $f(u_j)$ are equal only when all $u_j<u_0$ or $u_j>u_0$;
    however, the latter case is not possible for $k\geq 2$ since it breaks the ball constraint, i.e. $u_1^2 + u_2^2 > 1$.
\end{enumerate}

Hence, we get that $u_j \in (0,u_0]$ and are equal since $f$ is decreasing in this interval. 
This completes the proof of replica of Theorem~\ref{thm:uniqueness} for the ReLU activation function.

For the ReLU activation function, there is at least one non-differentiable critical point at $a=0$ or $r=0$. 
The careful analysis of this critical point is beyond the scope of this work. 
For any such 'trivial' point, the error of zero-function is equivalent to
\begin{align}
     \E[ (\sum_{j=1}^k \sigma(v_j \cdot x))^2 ] = k h(1) + k(k-1) h(0).
\end{align}

We will next show that $(-1/\sqrt{k})_{j=1}^{k}$ and $(1/\sqrt{k})_{j=1}^{k}$ are the global minimum and the global maximum of the following loss function
\begin{align}\label{eq:const-h}
    \sum_{j=1}^k h(u_j), \quad \text{subject to} \ \sum_{j=1}^k u_j^2 \leq 1.
\end{align}
Due to the Lagrange condition, there is no other critical point, hence these are the only two critical points of the constrained objective in Eq.~\ref{eq:const-h}.
The objective then reduces to $k h(u)$ which is minimized at $u=-\nicefrac{1}{\sqrt{k}}$
and maximized at $u=\nicefrac{1}{\sqrt{k}}$.

Next, we will give the closed-form solution of the remaining order parameters.
Plugging in the correlation in the loss and using the factorization of the interaction in Eq.~\ref{eq:one-neuron-loss}, we get
\begin{align*}
    L = a^2 r^2 \cdot h(1) - 2 k a r \cdot h(u) + C. 
\end{align*}
Let us set $\tilde{a}=a r$. The loss is a second-order polynomial in $\tilde{a}$
\begin{align*}
   L =  h(1)\left(\tilde{a}^2 -  2 \tilde{a} k \frac{h(u)}{h(1)} + k + k (k-1)  \frac{h(0)}{h(1)} \right)
\end{align*}
where we made the constant explicit.
Since the coefficient of the leading term is positive, there is a minimizer and it is the only critical point.
Taking the derivative, the minimum is attained at
\begin{align}
    \tilde{a}_* = k \frac{h(u)}{h(1)} 
\end{align}
Finally, plugging in $\tilde{a}_*$, we get
\begin{align}
    L(u) = - k^2 \frac{h(u)^2}{h(1)}+ k h(1) + k(k-1) h(0).
\end{align}
For $u=1/\sqrt{k}$ and $u=-1/\sqrt{k}$, $h(u)$ is non-zero; hence $l(u)$ is smaller than the loss of the zero function (trivial critical points). 
The smallest loss is attained at $u=\nicefrac{1}{\sqrt{k}}$ which is, therefore, the optimal solution. 
We conclude that the critical point at $u=-\nicefrac{1}{\sqrt{k}}$ is a saddle point since it is a maximum in $u$ and a minimum in $\tilde{a}$.
\end{proof}

\subsection{Bounds on Incoming Vector Norm and Outgoing Weight for Softplus}
\label{app-sec:proof-softplus}

Unlike ReLU and erf, the interaction function does not have a known analytic expression for softplus, hence the proof involves some techniques to compare ratios of Gaussian integral terms. 

\textit{FKG Inequality.}
We will use a special case of the FKG inequality repeatedly, that is,
\begin{align}\label{eq:FKG-ineq}
    \E[f(x) g(x)] > \E[f(x)] \E[g(x)] 
\end{align}
if both $f, g$ are increasing (or decreasing) implying that $f$ and $g$ are positively correlated. 
The inequality changes direction if $f$ is increasing and $g$ is decreasing (or vice versa) implying that $f$ and $g$ are negatively correlated.

We will rely on some specific properties of the softplus family that are developed in Section~\ref{sec:app-helper-lemmas}.
Unfortunately, some of these properties do not apply to other activation functions. 
As a first example of managing interactions that do not have an analytic formula, the proof may inspire generalizations to other activation functions. 
Below we present the proof sketch for Theorem~\ref{thm:softplus}.
In the following Subsections \ref{sec:app-zeros-f}, \ref{sec:app-bound-norm}, \ref{sec:app-bound-outgoing-weight}, and \ref{sec:app-helper-lemmas}, the components of the proof are presented in detail. 

\textit{Proof Sketch.}
We want to characterize the zero(s) of $f$ introduced in Section~\ref{sec:zeros-of-G-tG} that is 
\begin{align}
    f(r,u)= G(r)-\tilde{G}(u,r) = \frac{\E [ \sigma'(rx) \sigma(rx) x ] }{\E [\sigma(rx)^2]} - \frac{\E [ \sigma'(rx) \sigma(y) x ]}{\E [ \sigma(rx) \sigma(y) ]}.
\end{align}
For $r\in[0,1]$, there is a unique correlation $u\in[0,1]$ such that $f(r, u)=0$. 
Denoting this correlation by $h(r)$, we have a map $h:[0,1]\to[0,1]$ with boundary conditions $h(0)=0$ and $h(1)=1$. 
For $r>1$, there is no solution of $f$. 
As a consequence, no $r\geq0$ solves $f(r, u)=0$ for negative $u$, 
hence there is no non-trivial critical point at $u=-1/\sqrt{k}$. 

In Section~\ref{sec:app-bound-norm}, we prove the inequality $h(r)\geq r$, which gives us the upper bound on the norm since we have that the correlation at a non-trivial critical point is $h(r)=1/\sqrt{k}$.
Using this inequality and Stein's Lemma, we give a lower bound on the outgoing weight, that is $a\geq k$, in Section~\ref{sec:app-bound-outgoing-weight}. 
In summary, any non-trivial critical point of the loss 
has equal correlations that are $u=1/\sqrt{k}$, 
the norm satisfies $r \leq u$, 
and the lower bound on the outgoing weight follows. 
\textit{End of Proof Sketch.}

\subsubsection{Constraining the Zeros of $f$}
\label{sec:app-zeros-f}

In this subsection, we will describe all zero-crossings of $f:[0, \infty) \times [-1, 1]  \to \R$. 
We need to check four cases \textit{(i)} $r=0$,  \textit{(ii)} $r=1$, \textit{(iii)} $r>1$, and \textit{(iv)} $r \in (0,1)$. 

\textit{(i) $r=0$:} Note that $G(0)= \tilde{G}(0,0)=0$. 
Since $\tilde{G}$ is increasing in correlation for $u \in [0,1]$ and $\tilde{G}(u, 0) < \tilde{G}(0,0)$ for $u<0$ (Lemma~\ref{lem:increasing-G-tilde}), the only solution is $u=0$. 

\textit{(ii) $r=1$:} Note that $G(1)= \tilde{G}(1,1)$ since $y=x$ due to correlation one in Eq.~\ref{eq:functions-G-tildeG}. 
Since $\tilde{G}$ is increasing in correlation for $u \in [0,1]$ and $\tilde{G}(u, 1) < \tilde{G}(0,1)$ for $u<0$ (Lemma~\ref{lem:increasing-G-tilde}), the only solution is $u=1$. 

\textit{(iii) $r>1$:} We will show that there is no zero in this case. 
Let us first show that $G(r) > \tilde{G}(1,r)$ for $r>1$, which is equivalent to 
\begin{align}\label{eq:ineq-r>1}
    \E [ \sigma'(rx) \sigma(rx) x ] \E [ \sigma(rx)\sigma(x) ] > \E [ \sigma'(rx) \sigma(x) x] \E [ \sigma(rx)^2 ].
\end{align}

Changing the measure of $x$ from the standard Gaussian $p(x)$ to $\tilde{p}(x)= p(x) \sigma(rx)^2/\E [ \sigma(rx)^2]$, we get the following equivalent inequality 
\begin{align}
    \E_{x \sim \tilde{p}} \left [ \frac{\sigma'(rx) x}{\sigma(rx)} \right ]  \E_{x \sim \tilde{p}} \left [ \frac{\sigma(x)}{\sigma(rx)} \right ] >  \E_{x \sim \tilde{p}} \left [ \frac{\sigma'(rx) x}{\sigma(rx)} \frac{\sigma(x)}{\sigma(rx)} \right ].
\end{align}
From the property (iv) of Lemma~\ref{lem:softplus-properties}, we have that $\sigma'(rx)x/\sigma(rx)$ is increasing after a substitution $x \leftarrow rx$. 
We need to show $\sigma(x)/\sigma(rx)$ is decreasing in $x$ for $r>1$. 
We take the derivative
\begin{align}\label{eq:der-s(x)-s(rx)}
    \frac{d}{dx} \frac{\sigma(x)}{\sigma(rx)} =  \frac{\sigma'(x)\sigma(rx) - \sigma(x)\sigma'(rx)r}{\sigma(rx)^2}.
\end{align}
Since $\sigma'(x)x/\sigma(x)$ is increasing $\forall x \in \R $, we have 
\begin{align*}
    \frac{\sigma'(x)x}{\sigma(x)} < \frac{\sigma'(rx) rx}{\sigma(rx)} \ \text{for} \ x>0, \ \text{and} \ \frac{\sigma'(x)x}{\sigma(x)} > \frac{\sigma'(rx) rx}{\sigma(rx)} \ \text{for} \ x<0 
\end{align*}
which yields $\sigma'(x)\sigma(rx) < \sigma(x)\sigma'(rx)r$, hence we conclude that $\sigma(x)/\sigma(rx)$ is decreasing. 
Thanks to the FKG inequality, $\sigma'(rx)x/\sigma(rx)$ and  $\sigma(x)/\sigma(rx)$ are negatively correlated which completes the argument. 
Since from Lemma~\ref{lem:increasing-G-tilde}, $\tilde{G}$ is increasing in correlation and $\tilde{G}(0, r) > \tilde{G}(u, r)$ for $u<0$, we have $\tilde{G}(1, r) > \tilde{G}(u, r)$ for all $u\in [-1, 1)$, therefore there is no solution of $f$. 

\textit{(iv) $r\in(0,1)$:}
We want to show that $\forall r \in (0,1)$, there is a unique $u \in (0,1)$ such that $f(r, u)=0$. 
It suffices to show $$\tilde{G}(0, r)< G(r) < \tilde{G}(1, r),$$ since $\tilde{G}$ is continuous and increasing in correlation for $u \in [0,1]$ (Lemma~\ref{lem:increasing-G-tilde}), it then crosses $G(r)$ at a unique $u \in (0,1)$. 

\textit{First inequality;} $\tilde{G}(0, r)<G(r)$. In this case, $x$ and $y$ are Gaussians with zero correlation, hence independent. 
We can expand $\tilde{G}(0, r)$ by factorizing the integrals  
\begin{align*}
    \tilde{G}(0, r) = \frac{\E \left[ \sigma'(rx) x \right] \E \left[ \sigma(y) \right] }{\E \left[ \sigma(rx) \right] \E \left[ \sigma(y) \right]} = \frac{\E \left[ \sigma'(rx) x \right] }{\E \left[ \sigma(rx) \right]}.
\end{align*}
We want to show
\begin{align}
    \E \left[ \sigma'(rx) x \right] \E \left[ \sigma(rx)^2 \right] < \E \left[ \sigma'(rx) \sigma(rx) x  \right] \E \left[ \sigma(rx) \right]
\end{align}
which is equivalent to the following inequality after changing the measure from standard Gaussian $p(x)$ to $\tilde{p}(x) = p(x) \sigma(rx) / \E  [ \sigma(rx) ]$
\begin{align}
    \E_{x \sim \tilde{p}} \left[ \frac{\sigma'(rx) x}{\sigma(rx)} \right] \E_{x \sim \tilde{p}} [\sigma(rx) ] < \E_{x \sim \tilde{p}} \left[ \frac{\sigma'(rx) x}{\sigma(rx)} \sigma(rx) \right].
\end{align}
This follows from the FKG inequality since we have that both $\sigma'(x)x / \sigma(x)$ and $\sigma(x)$ are increasing from the properties (iv) and (i) of softplus (Lemma~\ref{lem:softplus-properties}). 

\textit{Second inequality;} $G(r)<\tilde{G}(1, r)$. 
This is equivalent to the Ineq.~\ref{eq:ineq-r>1}, but the direction is reversed since in this case $r<1$. 
We showed that $\sigma(x)/\sigma(r'x)$ is decreasing in $x$ for all $r'>1$, therefore its reciprocal $\sigma(r'x)/\sigma(x)$ is increasing in $x$. 
Substituting $x \leftarrow r x$ where $r=1/r'<1$, we get that 
$\sigma(x)/\sigma(rx)$ is increasing in $x$ for $r<1$. 
This yields a positive correlation between $\sigma'(rx)x/\sigma(rx)$ and $\sigma(x)/\sigma(rx)$ from the FKG inequality and completes the argument. 

Overall, we showed that there are no zeros of $f$ for $r>1$. 
For $r\in[0,1]$, there is a unique correlation $u$, that we will denote by $h(r)$, such that $f(r, h(r))=0$. 
Furthermore, $h:[0,1] \to [0,1]$ satisfies the following
\begin{enumerate}[i.]
    \item $h(0)=0$ and $h(1)=1$, 
    \item for $r \in (0,1)$, we have $h(r) \in (0,1)$. 
\end{enumerate}

\subsubsection{Bound on the Norm}
\label{sec:app-bound-norm}

In this subsection, we will show that $h(r)\geq r$ for all $r \in (0,1)$. 
Let us assume the contrary, which implies $$ \tilde{G}(h(r), r) < \tilde{G}(r, r) $$ due to Lemma~\ref{lem:increasing-G-tilde}.
It suffices to show that for all $r \in (0,1)$, we have 
\begin{align}
    \tilde{G}(r, r) \leq G(r),
\end{align}
which yields a contradiction since $G(r)=\tilde{G}(h(r), r)$. 
Showing this is equivalent to 
\begin{align}
   \E \left[ \sigma'(rx) \sigma(rx + r'z) x \right] \E \left[ \sigma(rx)^2 \right] &\leq \E \left[ \sigma'(rx) \sigma(rx) x \right] \E \left[ \sigma(rx) \sigma(rx + r'z) \right] 
\end{align} 
where $r'=\sqrt{1-r^2}$. After a change of measure from standard Gaussian $p(x)$ to 
\begin{align*}
    \tilde{p}(x) = p(x) \frac{\E [ \sigma(rx + r'z) | x] \sigma(rx)}{\E \left[ \sigma(rx + r'z)\sigma(rx)\right]},
\end{align*}
this is equivalent to the following inequality
\begin{align}
   \E_{x \sim \tilde{p}} \left[ \frac{\sigma'(rx) x}{\sigma(rx)} \right] \E_{x \sim \tilde{p}} \left[ \frac{\sigma(rx)}{\E [ \sigma(rx + r'z) | x ]} \right] &\leq \E_{x \sim \tilde{p}} \left[ \frac{\sigma'(rx) x}{\sigma(rx)} \frac{\sigma(rx)}{\E [ \sigma(rx + r'z) | x ]} \right].
\end{align}
What remains to show is that $$ \frac{\E [ \sigma(rx + r'z) | x ]}{\sigma(rx)} $$ is non-increasing in $x$ since then we can conclude by the FKG inequality. 
Since $r>0$ we can drop it up to a change in the standard deviation of $x$. 
We want to show that its derivative is non-positive:
\begin{align}\label{eq:lower-bound-r}
    \sigma(x) \E [ \sigma'(x + r'z) | x] \leq \sigma'(x) \E [ \sigma(x + r'z) | x] \ \Leftrightarrow \
    \frac{\sigma(x)}{\sigma'(x)} \leq \frac{\E [ \sigma(x + r'z) | x]}{\E [ \sigma'(x + r'z) | x]}.
\end{align}
From the property (iii) of softplus (Lemma~\ref{lem:softplus-properties}), we have that $R(x)=\sigma(x)/\sigma'(x)$ is convex. 
Applying Jensen, we get 
\begin{align*}
    \frac{\sigma(x)}{\sigma'(x)}  \leq  \E \left[ \frac{\sigma(x+r' z)}{\sigma'(x+r' z)} \Big| x \right].
\end{align*}
What remains to show is that
\begin{align}\label{eq:FKG-holds}
   \E \left[ \frac{\sigma(x+r' z)}{\sigma'(x+r' z)} \Big| x \right] \E \left[ \sigma'(x + r'z) | x \right] \leq \E \left[ \sigma(x + r'z) | x \right].
\end{align}
Note that $\E [ \sigma'(x+r'z) | x ]$ is increasing in $x$ since $\sigma'$ is increasing.
Moreover, the function $$ \E \left [ \frac{\sigma(x+r' z)}{\sigma'(x+r' z)} \Big| x \right] $$ 
is increasing in $x$ since its integrand $R$ is increasing from the property (ii) of softplus (Lemma~\ref{lem:softplus-properties}). 
Then we conclude by the FKG inequality that Eq.~\ref{eq:FKG-holds} holds. 
Therefore, for a solution $(r,u)$ of the fixed point Eq.~\ref{eq:fixed-point-norm}, we have $r\!\leq\!u=\frac{1}{k}$.  

\subsubsection{Bounding the Outgoing Weight}
\label{sec:app-bound-outgoing-weight}

To get a bound on $a$, let us analyze the ratio of interactions in Eq.~\ref{eq:fixed-point-norm}
\begin{align}
    \frac{g_\sigma(r,1,u)}{g_\sigma(r,r,1)}=\frac{a}{k}.
\end{align}
Using the convexity of softplus (property (i) of Lemma~\ref{lem:softplus-properties}), we get 
\begin{align}
    \frac{\E [ \sigma(rx) \sigma(ux + u'z) ]}{\E [ \sigma(rx)^2 ]} &\geq \frac{\E [ \sigma(rx) \sigma(rx) ] + \E [ \sigma(rx) ((u-r)x + u' z)\sigma'(rx) ]}{\E [ \sigma(rx)^2 ]} \notag \\
    &= 1 + (u-r) \frac{\E [ \sigma'(rx) \sigma(rx) x ]}{\E [ \sigma(rx)^2 ]}.
\end{align}
We can transform the numerator using Stein's lemma with $v(x)=\sigma(rx) \sigma'(rx)$
\begin{align}
     \E [ \sigma(rx) \sigma'(rx) x ] = r \E [ \sigma'(rx)^2 + \sigma(rx) \sigma''(rx) ]
\end{align}
which is positive since softplus is positive, increasing, and convex. 
Combining it with $u \geq r$, we get that the ratio is bounded below by $1$ which yields $a \geq k$. 

\subsubsection{Helper Lemmas}
\label{sec:app-helper-lemmas}

In this subsection, we provide helper lemmas used in the proof of Theorem~\ref{thm:softplus}.  
We present Lemma~\ref{lem:increasing-G-tilde} which shows that $\tilde{G}$ is increasing in correlation and Lemma~\ref{lem:softplus-new-property} used in the proof of the former.
Finally, we present several properties of the softplus family in Lemma~\ref{lem:softplus-properties} that are used throughout the proof. 

\begin{lemma}\label{lem:increasing-G-tilde} 
The following function is increasing in $u \in [0, 1]$
\begin{align}
    \tilde{G}(u, r) = \frac{\E [ \sigma'(rx) \sigma(y) x ]}{\E [ \sigma(rx) \sigma(y) ]} 
\end{align}
for any $r\geq0$, where $x$ and $y$ are standard Gaussians with correlation $\E [ x y ] = u$. 
Moreover, $\tilde{G}(u, r)<\tilde{G}(0, r)$ for $u<0$. 
\end{lemma}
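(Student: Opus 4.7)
The plan is to reinterpret $\tilde{G}(u,r)$ as the expectation of a fixed increasing function under a one-parameter family of tilted probability measures, and to deduce monotonicity in $u$ from a monotone likelihood ratio (MLR) property. Writing $y = ux + \sqrt{1-u^2}\,z$ with $z \sim \N(0,1)$ independent of $x$, and setting $\psi_u(x) := \E_{z}[\sigma(ux + \sqrt{1-u^2}\,z)]$, the inner $z$-integral factors out of both numerator and denominator, giving
\[ \tilde{G}(u,r) \;=\; \frac{\E_x[\sigma'(rx)\,x\,\psi_u(x)]}{\E_x[\sigma(rx)\,\psi_u(x)]} \;=\; \E_{\tilde{p}_u}[\phi(x)], \]
where $\tilde{p}_u(x) \propto p(x)\,\sigma(rx)\,\psi_u(x)$ (with $p$ the standard Gaussian density) and $\phi(x) := x\,\sigma'(rx)/\sigma(rx)$. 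By property~(iv) of Lemma~\ref{lem:softplus-properties}, $\phi$ is increasing in $x$.

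Under this reformulation, the first claim reduces to showing that the family $\{\tilde{p}_u\}_{u\in[0,1]}$ is MLR-monotone in $u$, i.e.\ $\partial_u\partial_x\log\psi_u(x)\geq 0$ for every $x$ and every $u\in[0,1]$, since $\psi_u$ is the only $u$-dependent factor of $\tilde{p}_u$. MLR implies first-order stochastic dominance, and combined with the monotonicity of $\phi$ this yields $\partial_u \tilde{G}(u,r)\geq 0$. Using Stein's Lemma on $z$ one computes $\partial_u\psi_u(x) = x\,\E_z[\sigma'(y)] - u\,\E_z[\sigma''(y)]$, and differentiating once more in $x$ converts the required inequality into a statement purely about the Gaussian smoothing $\psi_u$ and its first two $x$-derivatives. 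I expect this is precisely the content of Lemma~\ref{lem:softplus-new-property}, and it is the main obstacle of the proof: $\psi_u$ has no closed form, so the verification must exploit features specific to softplus, such as the sigmoidal range of $\sigma'$ and the convexity from $\sigma''>0$.

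The second claim, $\tilde{G}(u,r)<\tilde{G}(0,r)$ for $u<0$, does not require Lemma~\ref{lem:softplus-new-property} and follows by a direct likelihood-ratio comparison with $\tilde{p}_0$. Since $\psi_0(x)=\E[\sigma(z)]$ is a positive constant in $x$,
\[ \frac{\partial}{\partial x}\log\frac{\psi_u(x)}{\psi_0(x)} \;=\; \frac{u\,\E_z[\sigma'(y)]}{\psi_u(x)}, \]
whose sign is that of $u$ because $\sigma'>0$ for softplus. For $u<0$ the log-ratio is therefore strictly decreasing in $x$, so $\tilde{p}_u$ is strictly likelihood-ratio dominated by $\tilde{p}_0$; applying this dominance to the strictly increasing function $\phi$ yields $\tilde{G}(u,r)<\tilde{G}(0,r)$, with strict inequality because $\phi$ is not constant.
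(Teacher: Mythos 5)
Your reformulation is in substance the paper's own argument: the change of measure to $\tilde{p}(x)\propto p(x)\,\sigma(rx)\,\E_z[\sigma(y)\mid x]$, the observation that $x\sigma'(rx)/\sigma(rx)$ is increasing (property (iv) of Lemma~\ref{lem:softplus-properties}), and the reduction of monotonicity in $u$ to the cross-derivative condition $\partial_u\partial_x\log\E_z[\sigma(y)\mid x]\geq 0$ all appear there, with your MLR/stochastic-dominance phrasing being the infinitesimal version of the paper's FKG comparison between two fixed correlations $u_1<u_2$. Your treatment of the second claim ($u<0$) is complete and correct, and matches the paper's.

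The gap is in the first claim: you reduce everything to $\partial_u\partial_x\log\psi_u(x)\geq 0$ and then stop, writing that you ``expect this is precisely the content of Lemma~\ref{lem:softplus-new-property}.'' It is not, and this is where essentially all of the work lies. Writing out the condition as the paper does,
\begin{align}
    \frac{d}{dx}\left(\frac{\E[\sigma'(y)]\,x}{\E[\sigma(y)]} \;-\; \frac{u}{1-u^2}\,\frac{\E[u'z\,\sigma'(y)]}{\E[\sigma(y)]}\right) > 0,
\end{align}
one sees it splits into two separate monotonicity statements: that $s_1(x)=\E[\sigma'(y)]x/\E[\sigma(y)]$ is increasing, which is what Lemma~\ref{lem:softplus-new-property} actually proves (itself a delicate case analysis using $\sigma''=\beta\sigma'(1-\sigma')$, Cauchy--Schwarz, and $\E[\sigma'(x+z)]\geq\sigma'(x)$), and that $s_2(x)=\E[u'z\,\sigma'(y)]/\E[\sigma(y)]$ is decreasing, which the paper handles by a different mechanism entirely --- Stein's lemma to rewrite the numerator via $\sigma''$, then the log-concavity of $\sigma$ and of $\sigma'$ (properties (ii) and (v) of Lemma~\ref{lem:softplus-properties}) together with the fact that Gaussian marginalization preserves log-concavity, plus a continuity argument to extend from $u\in[0,1)$ to $u=1$. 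None of this is supplied or even sketched in your proposal, so as written the first claim rests on an unverified (and mis-attributed) assertion. The skeleton is right; the proof is not yet there.
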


\begin{proof}
Let us assume $0 \leq u_1<u_2\leq1$. For the first part of the statement, we want to show 
\begin{align}
    \frac{\E [ \sigma'(rx) \sigma(y_1) x ]}{\E [ \sigma(rx) \sigma(y_1) ]} <  \frac{\E [ \sigma'(rx) \sigma(y_2) x ]}{\E [ \sigma(rx) \sigma(y_2) ]}
\end{align}
where $\E[x y_1]=u_1$ and $\E[x y_2]=u_2$. Changing the measure from the standard Gaussian $p(x)$ to 
\begin{align*}
    \tilde{p}(x) = p(x) \frac{\sigma(rx) \E [ \sigma(y_2)  | x  ]}{\E [  \sigma(rx)  \sigma(y_2) ]},
\end{align*}
we get the following equivalent inequality
\begin{align}
   \E \left[ \frac{\sigma'(rx) x}{\sigma(rx)}  \frac{\E[\sigma(y_1) | x]}{\E[\sigma(y_2) | x]} \right] < \E \left[ \frac{\sigma'(rx) x}{\sigma(rx)} \right] \E \left[ \frac{\E[\sigma(y_1) | x]}{\E[\sigma(y_2) | x]} \right]. 
\end{align}

Thanks to the property (iv) of softplus (Lemma~\ref{lem:softplus-properties}), we have that 
$\sigma'(rx)x/\sigma(rx)$ is increasing in $x$ after a substitution $x \leftarrow rx$ for $r>0$. 
For $r=0$, the function reduces to $\gamma x $ with some $\gamma>0$, hence increasing. 
We will next show that (all integrations are w.r.t $z$ hereafter, 
hence we drop the conditioning on $x$) 
\begin{align}
    \frac{\E[\sigma(y_1)]}{\E[\sigma(y_2)]}
\end{align}
is decreasing in $x$. 
Computing the derivative w.r.t $x$, we want to show that it is negative 
\begin{align}\label{ineq:softplus-prop-almost}
    \frac{\E[\sigma'(y_1)] u_1}{\E[\sigma(y_2)]} - \frac{\E[\sigma(y_1)] \E[\sigma'(y_2)] u_2}{\E[\sigma(y_2)]^2} < 0 \ \ \Leftrightarrow \ \
    \frac{\E[\sigma'(y_1)] u_1}{\E[\sigma(y_1)]} < \frac{\E[\sigma'(y_2)] u_2}{\E[\sigma(y_2)]}. 
\end{align}
Note that this is equivalent to showing 
\begin{align*}
    \frac{d}{du} \frac{\E[\sigma'(y)] u}{\E[\sigma(y)]} = \frac{d^2}{du dx} \log(\E[\sigma(y)]) > 0 
\end{align*}
for all $u \in [0,1)$ and $x \in \R$. 
Changing the order of derivatives, it is sufficient to show 
\begin{align}
   \frac{d}{dx} \left(\frac{\E[\sigma'(y)] x}{\E[\sigma(y) ]} - \bigl(\frac{u}{1-u^2}\bigr) \frac{\E[u' z \sigma'(y) ]}{\E[\sigma(y) ]} \right)> 0. 
\end{align}
The first function
\begin{align}
    s_1(x) = \frac{\E[\sigma'(y)] x}{\E[\sigma(y) ]}
\end{align}
is shown to be increasing in $x$ in Lemma~\ref{lem:softplus-new-property} where we need to substitute $x \to x u_1$ for $u_1>0$, and for $u_1=0$, we have $s_1(x)=\gamma x$ for some $\gamma>0$ hence it is increasing. 
The remaining part is to show that the second function
\begin{align}
   s_2(x) = \frac{\E[u' z \sigma'(y)] }{\E[\sigma(y) ]}
\end{align}
is decreasing. 
We will consider $ z \leftarrow u' z $ and $ x \leftarrow u x $ in what follows. 
We have  
\begin{align*}
   \frac{d}{dx} \frac{\E[z \sigma'(x+z) ] }{\E[\sigma(x+z) ]} < 0 \quad \Leftrightarrow \quad \frac{d}{dx} \frac{\E[\sigma(x+z) ] }{\E[\sigma''(x+z) ]} > 0
\end{align*}
due to first applying Stein's Lemma to the numerator and then inverting the ratio. 
Using the chain rule, it is sufficient to show that
\begin{align}
    f_1(x)=\frac{\E[\sigma(x+z)]}{\E[\sigma'(x+z)]}, \quad \text{and} \quad f_2(x)=\frac{\E[\sigma'(x+z)]}{\E[\sigma''(x+z)]}
\end{align}
are increasing, since both functions are positive. 

Interestingly, $f_1$ is increasing in $x$ if $\sigma$ is a log-concave function.  Because its derivative is positive 
\begin{align*}
    \frac{d}{dx} f_1(x) = 1 - \frac{\E[\sigma(x+z)] \E[\sigma''(x+z)]}{\E[\sigma'(x+z)]^2} > 0
\end{align*}
if $\E[\sigma(x+z)]$ is log-concave. 
This is the case since
a centered Gaussian distribution $p(z)$ is log-concave, 
therefore $\sigma(x+z) p(z)$ is jointly log-concave, and marginalization 
preserves log-concavity. 

Similarly, $f_2$ is increasing since $\sigma'$ is also log-concave due to property 
(v) of softplus (Lemma~\ref{lem:softplus-properties}). 
Hence we showed that 
\begin{align}
    r(u) = \frac{\E[\sigma'(y)] x}{\E[\sigma(y)]} 
\end{align}
is increasing for $u \in [0, 1)$. The derivative of $r$ explodes at $1$, however, we can conclude by contradiction that $r(1) > r(u)$ for $u<1$: if $r(u) \geq r(1)$ for some $0\leq u <1$, then there exists $u_0 \in (u, 1)$ where the function is decreasing. Therefore, $r$ is increasing for $u \in [0,1]$. 
We can conclude the first part of the proof by the FKG inequality $\nicefrac{\sigma'(rx) x}{\sigma(rx)}$ and $\nicefrac{\E[\sigma(y_1)]}{\E[\sigma(y_2)]}$ are negatively correlated. 

For the second part of the statement, we need to show 
\begin{align}
    \E [\sigma'(rx) \sigma(u x + u'z) x] \E [\sigma(rx)] < \E [\sigma'(rx) x] \E [\sigma(rx) \sigma(u x + u'z)]
\end{align}
for $u<0$. Changing the measure from standard Gaussian $p(x)$ to 
\begin{align}
    \tilde{p}(x) = p(x) \frac{\sigma(rx)}{\E[\sigma(rx)]},
\end{align}
the above inequality is equivalent to 
\begin{align}
    \E_{x \sim \tilde{p}} \left[\frac{\sigma'(rx) x}{\sigma(rx)} \sigma(u x + u'z) \right] < \E_{x \sim \tilde{p}} \left[\frac{\sigma'(rx) x}{\sigma(rx)}\right] \E_{x \sim \tilde{p}} [\sigma(u x + u'z)].
\end{align}
This holds since $\sigma'(rx)x/\sigma(rx)$ is increasing in $x$, however, $\sigma(u x + u'z)$ is decreasing in $x$ since $u$ is negative which implies a negative correlation due to the FKG inequality. 
\end{proof} 

\begin{lemma}\label{lem:softplus-new-property} The following function 
$$ \frac{\E [ \sigma'(x+z) | x] x}{\E [ \sigma(x +z) | x]} $$
is increasing in $x$ where the integrations are w.r.t a centered Gaussian $z$.
\end{lemma}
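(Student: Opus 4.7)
The plan is to reduce the claim to showing the positivity of a single explicit expression. Let $A(x) = \E[\sigma(x+z) \mid x]$ and $B(x) = \E[\sigma'(x+z) \mid x]$. Since $z$ has a density independent of $x$, $B(x) = A'(x)$ and moreover $B'(x) = \E[\sigma''(x+z) \mid x] > 0$ by property (i) of softplus. The target function is $f(x) = xB(x)/A(x)$, and a direct computation gives
\begin{align*}
f'(x) = \frac{A(x)\bigl(B(x)+xB'(x)\bigr) - xB(x)^2}{A(x)^2},
\end{align*}
so it suffices to show $N(x) := A(B+xB') - xB^2 > 0$. I will handle the two signs of $x$ separately, using a log-concavity argument for $x\le0$ and the pointwise inequality $\sigma(y)\ge y\sigma'(y)$ combined with Stein's Lemma for $x>0$.

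For $x\le 0$: rewrite $N(x) = AB + x(AB' - B^2)$. Softplus is log-concave (property (v) of Lemma~\ref{lem:softplus-properties}), so by Prékopa's theorem the convolution $A$ is log-concave in $x$, yielding $AB' - B^2 \le 0$. Since $A,B>0$, the first term is strictly positive, and since $x\le0$ multiplies a non-positive quantity, the second term is nonnegative. Hence $N(x) > 0$ on $(-\infty,0]$.

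For $x > 0$: I will use the elementary pointwise inequality $\sigma(y) \ge y\sigma'(y)$ for all $y\in\R$. For $\beta=1$ this follows by rewriting $\sigma(y) = y + \sigma(-y)$ and $\sigma'(-y) = 1-\sigma'(y)$, which gives the identity $\sigma(y) - y\sigma'(y) = \sigma(-y) + y\sigma'(-y)$; this expression is even in $y$ (so WLOG $y\ge 0$) and manifestly a sum of two non-negative terms. The general $\beta\in(0,2]$ case reduces by the scaling $\sigma(y) - y\sigma'(y) = \beta^{-1}[\tilde\sigma(\beta y) - \beta y\,\tilde\sigma'(\beta y)]$ with $\tilde\sigma$ the $\beta=1$ softplus. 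Applying this with $y = x+z$ and taking expectations, together with Stein's Lemma $\E[z\sigma'(x+z)\mid x] = \tau^2 B'(x)$ where $\tau^2 = \Var(z)$, yields
\begin{align*}
A(x) \ge xB(x) + \tau^2 B'(x).
\end{align*}
Since $B(x)+xB'(x) > 0$ for $x>0$, multiplying and subtracting $xB^2$ gives
\begin{align*}
N(x) \ge \bigl(xB + \tau^2 B'\bigr)\bigl(B+xB'\bigr) - xB^2 = BB'(x^2+\tau^2) + x\tau^2 (B')^2 > 0,
\end{align*}
as all terms on the right are strictly positive.

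The main obstacle I anticipate is the pointwise inequality $\sigma(y) \ge y\sigma'(y)$: it is not immediate from the convexity of $R(x)=\sigma(x)/\sigma'(x)$ already recorded in Lemma~\ref{lem:softplus-properties}, and the clean argument above exploits the specific symmetry $\sigma(y)=y+\sigma(-y)$ that is particular to the softplus family. Once that inequality is in hand, the rest is bookkeeping and the two cases glue together to establish $N(x)>0$ everywhere, which completes the proof.
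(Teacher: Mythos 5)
Your proof is correct, and it takes a genuinely different route from the paper's. Both arguments reduce to the same positivity statement $N(x)=A(B+xB')-xB^2>0$ and both split on the sign of $x$, but the key lemmas differ. The paper substitutes the softplus identity $\sigma''=\beta\sigma'(1-\sigma')$ and then, for $x\ge 0$, uses $\E[\sigma'(x+z)]\le 1$ together with $\E[\sigma(x+z)]\ge\sigma(x)>x$; for $x<0$ it needs Cauchy--Schwarz on $\E[\sigma'^2]/\E[\sigma']$, a symmetrization argument showing $\E[\sigma'(x+z)]\ge\sigma'(x)$, and the bound $\beta\sigma>\sigma'$ --- this negative branch is the laborious part. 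You replace that entire branch with the one-line observation that $A=\sigma * p$ is log-concave (Prékopa), so $AB'-B^2\le 0$ and $N=AB+x(AB'-B^2)>0$ for $x\le 0$; notably the paper itself invokes exactly this marginalization-preserves-log-concavity fact elsewhere (in the proof of Lemma~\ref{lem:increasing-G-tilde}), so the tool is already in its kit. For $x>0$ you use the pointwise inequality $\sigma(y)\ge y\sigma'(y)$ --- whose reflection proof via $\sigma(y)=y+\sigma(-y)$ and $\sigma'(-y)=1-\sigma'(y)$ is correct and does exploit softplus-specific structure, much as the paper's $\beta\sigma>\sigma'$ does --- together with Stein's lemma to absorb the $\E[z\sigma'(x+z)]$ term into $\tau^2B'$, and the resulting expansion $N\ge BB'(x^2+\tau^2)+x\tau^2(B')^2$ is manifestly positive. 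Your version is shorter and arguably more transparent about \emph{why} the statement holds (log-concavity governs the negative half-line, a Legendre-type inequality the positive one), at the cost of importing Prékopa; the paper's is more self-contained but heavier on case analysis. One trivial slip: log-concavity of $\sigma$ itself is property (ii) of Lemma~\ref{lem:softplus-properties}, not property (v) (which is log-concavity of $\sigma'$).
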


\begin{proof}
Since all integrals are w.r.t $z$, we drop the conditioning with respect to $x$ in the proof. 
Taking the derivative w.r.t $x$, and arranging the terms, it suffices to show 
\begin{align}
    \left(\frac{\E [ \sigma''(x+z) ] x}{\E [ \sigma'(x+z) ]} + 1 \right) \E [ \sigma(x+z) ] > \E [ \sigma'(x+z) ] x 
\end{align}
which is equivalent to the following due to the property $\sigma''(z) = \beta \sigma'(z) (1-\sigma'(z))$
\begin{align}
    \left(\beta x \left(1 - \frac{\E [ \sigma'(x+z)^2 ]}{\E [ \sigma'(x+z) ]}\right) + 1 \right)  \E [ \sigma(x+z) ] > \E [ \sigma'(x+z) ] x.
\end{align}
In the case $x \geq 0$, the LHS is bigger than $\E [ \sigma(x+z) ]$ since $\sigma'(\cdot)$ is upper bounded by $1$. 
Moreover, since $\sigma(x)> x$ and from the convexity of softplus, we get $\E [ \sigma(x+z) ] > x$. 
This yields the above inequality by again noting that $\E [ \sigma'(x+z) ]$ is upper bounded by $1$.  

In the case $x<0$, we need another strategy. We have thanks to Cauchy-Schwartz  
\begin{align}
    \frac{\E [ \sigma'(x+z)^2 ]}{\E [ \sigma'(x+z) ]} \geq \E [ \sigma'(x+ z) ],
\end{align}
thus it suffices to show
\begin{align}
    \left(\beta x - \beta x \E [ \sigma'(x+z) ] + 1 \right) \E [ \sigma(x+z) ] > \E [ \sigma'(x+z) ] x. 
\end{align}
We will now show the following 
\begin{align}
    \E [ \sigma'(x+z) ] \geq \sigma'(x)
\end{align}
for which it suffices to show that $v(z):=\sigma'(x+z) + \sigma'(x-z)\geq 2 \sigma'(x)$ for all $z$ since the centered Gaussian measure $p(z)$ is even and the integration can be done over the integrand $v(z)$. 
We have
\begin{align}
    v'(z)=\sigma''(x+z) - \sigma''(x-z)
\end{align}
that is zero iff either $x+z=x-z$ or  $x+z=-x+z$ where the latter is not possible since $x<0$. 
Hence we get that a critical point of $v(z)$ at $z=0$ which is a minimizer since $v''(0)=2\sigma'''(x)>0$ for $x<0$. 
Hence $v(z)\geq v(0)=2\sigma'(x)$ for all $z$ which completes the argument. 

Finally, it remains to show 
\begin{align}\label{eq:need-to-show1}
    \left(\frac{\beta x}{e^{\beta x}+1}+1\right)\E [ \sigma(x+z) ] > \E [ \sigma'(x+z) ] x. 
\end{align}
From the proof of Lemma~\ref{lem:softplus-properties}, we have that $\beta \sigma(x)>\sigma'(x)$, 
which in combination with the following trivial observation for all $x<0$ (note that $+1$ is not needed for the following to hold) 
\begin{align}
   \frac{\beta x}{e^{\beta x}+1} + 1 > \beta x 
\end{align}
shows that Eq.~\ref{eq:need-to-show1} holds, hence the proof is complete. 
\end{proof}

\begin{lemma}\label{lem:softplus-properties} The softplus family has the following properties 
\begin{enumerate}[i.]
    \item $\sigma(x)$ is increasing and convex,
    \item $\sigma(x)$ is log-concave (equivalently, $\sigma(x) / \sigma'(x)$ is increasing),
    \item $\sigma(x) / \sigma'(x)$ is convex,
    \item $\sigma'(x) x / \sigma(x)$ is increasing,
    \item $\sigma'(x)$ is log-concave.
\end{enumerate}
\end{lemma}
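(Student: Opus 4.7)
The unifying device is the reparametrization $s(x) := \sigma'(x) = (1+e^{-\beta x})^{-1} \in (0,1)$, which satisfies $\frac{ds}{dx} = \sigma''(x) = \beta s(1-s)$ together with the identities $\sigma(x) = -\beta^{-1}\log(1-s)$ and $s/(1-s) = e^{\beta x}$. Under this substitution every one of the five properties reduces to an elementary inequality in $s$ or in $u := e^{\beta x}$.

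The easy items are (i), (ii), (iii), and (v). For (i) and (v), both $\sigma' = s$ and $\sigma'' = \beta s(1-s)$ are positive, and a direct computation gives $(\log\sigma')''(x) = -\beta \sigma''(x) < 0$. For (ii), compute $(\sigma/\sigma')'(x) = 1 - \sigma\sigma''/(\sigma')^2 = 1 - \beta\sigma(1-s)/s$; after substitution this is positive iff $u > \log(1+u)$ for $u > 0$, a standard inequality. For (iii), write $\sigma/\sigma' = F(s)/\beta$ with $F(s) = -\log(1-s)/s$; by the chain rule $f''(x) = \beta s(1-s)\cdot G'(s)$ where $G(s) := F'(s)\,s(1-s)$. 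A short simplification gives $G(s) = 1 + (1-s)\log(1-s)/s$ and $G'(s) = -[s+\log(1-s)]/s^2$, strictly positive on $(0,1)$ because $\log(1-s) = -\sum_{k\ge 1} s^k/k < -s$.

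The main obstacle is (iv). Let $\phi(x) = x\sigma'(x)/\sigma(x)$. Differentiating the identity $\phi\sigma = x\sigma'$ gives
\begin{align*}
\phi'(x)\,\sigma(x) \;=\; \sigma'(x)\bigl(1-\phi(x)\bigr) + x\sigma''(x),
\end{align*}
and substituting $\phi = x\sigma'/\sigma$ and regrouping yields the equivalent form $\phi'(x)\,\sigma(x) = \sigma'(x) + x\sigma(x)(\log\sigma)''(x)$. For $x \le 0$ the second form disposes of the case immediately: by (ii), $(\log\sigma)''(x) \le 0$, so $x(\log\sigma)''(x) \ge 0$ and $\phi'(x) > 0$. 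For $x > 0$ the log-concavity bound points the wrong way, so I would use the first form and show separately that $\phi(x) < 1$, i.e.\ $\psi(x) := \sigma(x) - x\sigma'(x) > 0$. Using $\sigma(x) = x + \beta^{-1}\log(1+e^{-\beta x})$, a direct computation gives
\begin{align*}
\psi(x) \;=\; \beta^{-1}\log(1+e^{-\beta x}) + \frac{x\,e^{-\beta x}}{1+e^{-\beta x}},
\end{align*}
which is a sum of two strictly positive terms for $x > 0$. Then $\sigma'(1-\phi) > 0$ and $x\sigma'' > 0$ combine to give $\phi'(x) > 0$. This $x>0$ half of (iv) is the only non-routine step; everything else is bookkeeping.
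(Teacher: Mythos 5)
Your proof is correct, and it takes a genuinely different and in places more streamlined route than the paper's. The paper works throughout with the explicit ratio $R(x)=\sigma(x)/\sigma'(x)=\beta^{-1}\bigl(\log(e^{\beta x}+1)+e^{-\beta x}\log(e^{\beta x}+1)\bigr)$, computes $R'$ and $R''$ directly, and reduces (ii) and (iii) to the scalar inequalities $\log(y+1)<y$ and $\log(y+1)/y>1/(y+1)$; your substitution $s=\sigma'(x)\in(0,1)$ with $\sigma=-\beta^{-1}\log(1-s)$ and $s'=\beta s(1-s)$ reaches the same conclusions via the single series inequality $s+\log(1-s)<0$, which is tidier for (iii). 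For (v) the paper manipulates $\sigma'''=\beta\sigma''(1-2\sigma')$ into an inequality in $e^{\beta x}$, whereas your one-line computation $(\log\sigma')''=-\beta\sigma''<0$ is cleaner. The largest divergence is in (iv): the paper differentiates $x\sigma'/\sigma$ directly and, for $x<0$, runs a multi-step argument involving the monotonicity and supremum of $\log(e^{\beta x}+1)/e^{\beta x}$; your two identities $\phi'\sigma=\sigma'(1-\phi)+x\sigma''=\sigma'+x\sigma(\log\sigma)''$ dispatch $x\le 0$ immediately from the already-proved log-concavity (ii), and reduce $x>0$ to the positivity of $\sigma(x)-x\sigma'(x)=\beta^{-1}\log(1+e^{-\beta x})+xe^{-\beta x}/(1+e^{-\beta x})$, a manifestly positive sum. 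Both arguments are valid; yours exposes more clearly how (iv) leans on (ii), while the paper's stays closer to the explicit softplus formulas it reuses elsewhere (e.g.\ the bound $\beta\sigma(x)>\sigma'(x)$ recurs in Lemma~\ref{lem:softplus-new-property}).
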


\begin{proof} For the property (i), see the formulas of $\sigma'$ and $\sigma''$ in the proof of Lemma~\ref{lem:activ-functs}. 
We next prove each one of the properties one after the other. 
Let us start with property (ii). 
First note that $\sigma(x)$ is log-concave if and only if $\sigma(x) / \sigma'(x)$ is increasing since
\begin{align}
    \frac{d}{dx} \frac{\sigma(x)}{\sigma'(x)} = 1 - \frac{\sigma(x)\sigma''(x)}{\sigma'(x)^2} > 0 \ \Leftrightarrow \ \sigma'(x)^2 > \sigma(x)\sigma''(x)
\end{align}
where the second inequality is a characterization of log-concavity.  
We will prove that $R(x) := \sigma(x) / \sigma'(x) $ is increasing. 

Let us write out the ratio explicitly 
\begin{align}
    R(x) = \frac{1}{\beta}\left(\log(e^{\beta x}+1) + \frac{\log(e^{\beta x}+1)}{e^{\beta x}}\right).
\end{align}
The first derivative of $R$ is given by
\begin{align}
    R'(x) &= \sigma'(x) + \frac{\sigma'(x) - \beta \sigma(x)}{e^{\beta x}}=\frac{e^{\beta x}-\beta \sigma(x)}{e^{\beta x}}.
\end{align}
Since $\log$ is concave, expanding it around $1$ we get $\log(y+1) < y$ for all $y>0$. 
Substituting $y=e^{\beta x}$, we get that the numerator of $R'$ is positive, thus $R$ is increasing. 
This completes the proof of property (ii). 
Computing the second derivative of $R$, we get 
\begin{align}
    R''(x) &= \frac{\sigma''(x)(e^{\beta x}+1)- 2 \beta \sigma'(x)+\beta^2 \sigma(x)}{e^{\beta x}}  
    = \beta \left(\frac{-\sigma'(x)+ \beta \sigma(x)}{e^{\beta x}}\right).
\end{align}
What remains to show is that  $\beta \sigma(x)>\sigma'(x)$.
Using the fundamental theorem of calculus, we get
\begin{align}\label{eq:log-integral-expression}
    \frac{\log(y+1)}{y} = \frac{1}{y} \int_{0}^y \frac{1}{t+1} dt > \frac{1}{y+1} 
\end{align}
since $1/(y+1)$ is a lower bound of the integrand which completes the proof of the property (iii). 
Let us prove the property (iv) by taking the derivative of the function of interest
\begin{align}
    \frac{d}{dx} \frac{\sigma'(x)x}{\sigma(x)} = \frac{(\sigma''(x)x  + \sigma'(x))\sigma(x)- \sigma'(x)^2 x}{\sigma(x)^2}
\end{align}
Using $\sigma''(x) = \beta \sigma'(x)(1-\sigma'(x))$ and dropping the positive term $\sigma'(x)$, the numerator of the derivative is  
\begin{align}
    ( (1-\sigma'(x)) \beta x + 1) \sigma(x) - \sigma'(x) x 
    &= \left(\frac{\beta x}{e^{\beta x}+1} + 1\right)\sigma(x) - \frac{e^{\beta x}}{e^{\beta x}+1} x  \\
    &= \frac{e^{\beta x}}{e^{\beta x}+1} \left(\frac{1}{e^{\beta x}} (\beta x+e^{\beta x}+1) \sigma(x) - x \right)
\end{align}
For the case $x\geq 0 $, we have $\sigma(x) > x$ and $(\beta x+1)/e^{\beta x}>0$, hence the derivative is positive. 
For the case $x<0$, we want to show 
\begin{align}\label{eq:desired-ineq}
    \left(e^{\beta x}+ \beta x+1\right) \frac{\log(e^{\beta x}+1)}{e^{\beta x}} > \beta x. 
\end{align}
If $e^{\beta x}+\beta x+1>0$, it is done since the LHS is positive. 
If $e^{\beta x}+\beta x+1 \leq 0$, we have 
\begin{align}\label{eq:intermediate-bound}
    \left(e^{\beta x}+ \beta x+1\right) \frac{\log(e^{\beta x}+1)}{e^{\beta x}} \geq  \left(e^{\beta x}+ \beta x+1\right) \sup \frac{\log(e^{\beta x}+1)}{e^{\beta x}} %= e^{\beta x}+ \beta x+1.
\end{align}
since $\log(e^{\beta x}+1) / e^{\beta x}$ is positive. 
We will next show that $\log(e^{\beta x}+1) / e^{\beta x}$ is a decreasing function therefore its supremum is achieved at $x\!\to\!-\infty$. 
From the integral expression in Eq.~\ref{eq:log-integral-expression}, we deduce that $\log(y+1) / y$ is a decreasing function since adding smaller terms in the average decreases it. 
Thus the following limit gives us the supremum using L'Hôpital's rule
\begin{align}
    \lim_{y\to 0} \frac{\log(y+1)}{y}=\lim_{y\to 0} \frac{1}{y+1} = 1. 
\end{align}
Combining it with the Eq.~\ref{eq:intermediate-bound} after the substitution $y=e^{\beta x}$, we get the desired Ineq.~\ref{eq:desired-ineq} which implies that the derivative is positive in this case too. 
This completes the proof of property (iv). 

For the property (v), we first give a formula for the third derivative of softplus
\begin{align}
    \sigma'''(x) = \beta \sigma''(x) (1-2 \sigma'(x)). 
\end{align}
$\sigma'$ is log-concave if and only if we have  
\begin{align}
    &\sigma'''(x) \sigma(x) < \sigma''(x) \sigma'(x) \Leftrightarrow \notag  \\
    &\beta \sigma''(x) (1-2 \sigma'(x))  \sigma(x) < \sigma''(x)  \sigma'(x)
\end{align}
which is equivalent to 
\begin{align}
     (1- e^{\beta x})  \log(e^{\beta x} + 1) < e^{\beta x}. 
\end{align}
This is equivalent to $(1- y)  \log(y + 1) < \log(y + 1) < y $ where $y=e^{\beta x}>0$; the second inequality holds due to $y+1<e^y$. 

\end{proof}

\end{document}